\setlist[enumerate]{leftmargin=2em}
\setlist[itemize]{leftmargin=2em}
\def\tv{\tilde{v}}
\def\hcRi{\hcR^{(i)}}
\def\ffi{f^{(i)}}
\def\ggi{g^{(i)}}
\newcommand\ff[1]{f^{(#1)}}
\def\hy{\hat y}
\def\barU{\overline U}
\def\baru{\overline u}
\def\barUi{\overline U_{\!\infty}}
\def\barcR{\overline{\cR}}
\def\barcRz{\overline{\cR}_{\textup{z}}}
\def\bcRz{\overline{\cR}_{\textup{z}}}
\def\klb{\cK_{\textup{bin}}}
\def\epsopt{\eps_{\textup{gd}}}
\def\radopt{R_{\textup{gd}}}
\title{\textbf{Early-stopped neural networks are consistent}}
\author{Ziwei Ji\qquad{}\quad{}Justin D. Li\qquad{}\quad{}Matus Telgarsky\\
\texttt{<\{\href{mailto:ziweiji2@illinois.edu}{ziweiji2},\href{mailto:jdli3@illinois.edu}{jdli3},\href{mailto:mjt@illinois.edu}{mjt}\}@illinois.edu>}\\
University of Illinois, Urbana-Champaign}
\date{}
\begin{document}

\maketitle

\begin{abstract}
  This work studies the behavior of shallow ReLU networks trained with the logistic
  loss via gradient descent
  on binary classification data where the underlying data distribution is general,
  and the (optimal) Bayes risk is not necessarily zero.
  In this setting, it is shown that gradient descent with early stopping
  achieves population risk arbitrarily close to optimal in terms of not just
  logistic and misclassification losses, but also in terms of calibration,
  meaning the sigmoid mapping of its outputs approximates the true underlying
  conditional distribution arbitrarily finely.
  Moreover, the necessary iteration, sample, and architectural complexities of this
  analysis all scale
  naturally with a certain complexity measure of the true conditional model.
  Lastly, while it is not shown that early stopping is necessary, it is shown that
  any univariate classifier satisfying a \emph{local interpolation property} is
inconsistent.
\end{abstract}

\section{Overview and main result}

Deep networks trained with gradient descent seem to have no trouble adapting
to arbitrary prediction problems, and are steadily displacing stalwart methods across
many domains.
In this work,
we provide a mathematical basis for this good performance
on arbitrary binary classification problems,
considering the simplest possible networks:
shallow ReLU networks where only the inner (input-facing) weights are trained
via vanilla gradient descent with a constant step size.
The central contributions are as follows.

\begin{enumerate}
  \item
    \textbf{Fully general classification tasks.}
    The joint distribution generating the $(x,y)$ pairs only requires $x$ to be bounded,
    and is otherwise arbitrary.  In particular, the underlying distribution may be noisy,
    meaning the true conditional model of the labels, $\Pr[Y=1 | X=x]$, is arbitrary.
    
    In this setting, we show that as data, width, and training time increase,
    the logistic loss \emph{measured over the population} converges
    to optimality over all measurable functions, which moreover implies that
    the induced conditional model (defined by a sigmoid mapping) converges to the true model,
    and the population misclassification rate also converges to optimality.
This is in contrast with prior analyses of gradient descent, which either only
    consider the training risk
    \citep{allen_deep_opt,du_iclr,zou_deep_opt,oymak_moderate,song_quadratic},
    or can only handle restricted conditional models
    \citep{allen_3_gen,arora_2_gen,cao_deep_gen,nitanda_refined,ziwei_ntk,gu_polylog}.

  \item
    \textbf{Adaptivity to data simplicity.}
    The required number of data samples, network nodes,
    and gradient descent iterations all shrink
    if the \emph{distribution}
    satisfies a natural notion of simplicity: the true conditional model $\Pr[Y=1|X=x]$ is 
    approximated well by a low-complexity infinite-width random feature model.

\end{enumerate}

Rounding out the story and contributions,
firstly we present a brief toy univariate model hinting towards
the necessity of early stopping: concretely,
any univariate predictor satisfying a \emph{local interpolation property} can not achieve
optimal test error for noisy distributions.
Secondly, our analysis is backed by a number of lemmas that could
be useful elsewhere; amongst these are a
\emph{multiplicative error} property of the logistic loss,
and separately a technique to control the effects of large network
width over not just a finite sample, but over the entire sphere.

\subsection{Main result: optimal test error via gradient descent}

The goal in this work is to minimize the logistic risk over the population:
letting $\mu$ denote an arbitrary Borel measure over $(x,y)$ pairs with compactly-supported 
marginal $\mu_x$ and
conditional $p_y$, with a data sample $((x_i,y_i))_{i=1}^n$, and a function $f$, define the
logistic loss,
empirical logistic risk, 
and logistic risk respectively as
\[
  \ell(r) := \ln(1+e^{-r}),
  \qquad
  \hcR(f) := \frac 1 n \sum_{k=1}^n \ell(y_kf(x_k)),
  \qquad
  \cR(f) := \bbE_{x,y} \ell(yf(x)).
\]
We use the logistic loss not only due to its practical prevalence,
but also due to an interesting multiplicative error property which strengthens our main results
(cf. \Cref{fact:logistic:error} and \Cref{fact:main}),
all while being Lipschitz.

We seek to make the risk $\cR(f)$ as small as possible: formally, we compare against
the Bayes risk
\[
  \barcR := \inf\cbr{ \cR(f)\ {}:\ {}\textup{measurable } f\!:\!\R^d \to \R}.
\]
While competing with $\barcR$ may seem a strenuous goal, in fact it simplifies many 
aspects of the learning task.
Firstly, due to the universal approximation properties of neural networks
\citep{funahashi_apx,nn_stone_weierstrass,cybenko,barron_nn},
we are effectively working over the space of all measurable functions already.
Secondly, as will be highlighted in the main result below,
via the theory of classification calibration
\citep{zhang_convex_consistency,bartlett_jordan_mcauliffe},
competing with the Bayes (convex) risk also recovers the true conditional model,
and minimizes the misclassification loss;
this stands in contrast with the ostensibly more modest goal of minimizing misclassification
over a restricted class of predictors, namely the \emph{agnostic learning} setting,
which suffers a variety of computational and statistical obstructions
\citep{pmlr-v119-goel20a,surbhi_hardness,ohad_relu_gd,frei_agnostic}.

Our predictors are shallow ReLU networks, trained via gradient descent --- the simplest
architecture which is not convex in its parameters, but satisfies universal approximation.
In detail,
letting $(a_j)_{j=1}^m$
be uniformly random $\pm 1$ signs, $(w_j)_{j=1}^m$ with $w_j\in\R^d$ be standard Gaussians,
and $\rho>0$ be a \emph{temperature},
we predict on an input $x\in\R^d$ with
\[
  f(x;\rho, a,W) := f(x;W) := \frac {\rho}{\sqrt m}\sum_{j=1}^m a_j \srelu(w_j^\T x),
\]
where $\srelu(z):=\max\{0,z\}$ is the ReLU; since only $W$ is trained, both $\rho$ and $a$ are
often dropped.  To train, we perform gradient descent with a constant step size
on the empirical risk:
\[
  W_{i+1} := W_i - \eta \nhcR(W_i),
  \qquad\textup{where }
  \hcR(W) := \hcR\del{ x\mapsto f(x; W) }.
\]
Our guarantees are for an iterate with small empirical risk and small norm:
$W_{\leq t} := \argmin\{ \hcR(W_i) : i\leq t, \|W_i-W_0\|\leq \radopt\}$,
where $\radopt$ is our \emph{early stopping radius}:
if $\radopt$ is guessed correctly, our rates improve, but our analysis also
handles the case $\radopt=\infty$ where no guess is made, and indeed this is used
in our final consistency analysis
(a pessimistic, fully general setting).

Our goal is to show that this iterate $W_{\leq t}$ has approximately optimal
\emph{population} risk: $\cR(W_{\leq t}) \approx \barcR$.
Certain prediction problems may seem simpler than others, and we want our analysis 
to reflect this while abstracting away as many coincidences of the training process as
possible.  Concretely, we measure simplicity via the performance and complexity
of an infinite-width random feature model over the true distribution, primarily based on the
following considerations.
\begin{itemize}
  \item
    By measuring performance over the population,
    random effects of the training sample are removed, and it is impossible
    for the random feature model to simply revert to memorizing data,
    as it never sees that training data.
  \item
    The random feature model has infinite width, and via sampling can be used as a benchmark
    for all possible widths simultaneously, but is itself freed from coincidences
    of random weights.
\end{itemize}

In detail, our infinite-width random feature model is as follows.
Let $\barUi :\R^d\to\R^d$ be an (uncountable) collection of weights
(indexed by $\R^d$), and define a prediction mapping via
\[
  f(x;\barUi) := \int \ip{\barUi(v)}{x\1[v^\T x\geq 0]}\dif\cN(v),
  \qquad
  \text{whereby }
  \cR(\barUi) := \cR(x\mapsto f(x;\barUi)).
\]
Note that for each Gaussian random vector $v\sim \cN$, we construct a \emph{random feature}
$x\mapsto x\1[v^\T x\geq 0]$.  This particular choice is simply the gradient of a corresponding
ReLU $\nabla_v \srelu(v^\T x)$, and is motivated by the NTK literature
\citep{jacot_ntk,li_liang_nips,du_iclr}.
A similar object has appeared before in NTK convergence analyses
\citep{nitanda_refined,ziwei_ntk},
but the conditions on $\barUi$ were always strong (e.g., data separation with a margin).

What, then, does it mean for the data to be simple?  In this work, it is when there exists
a $\barUi$ with $\cR(\barUi)\approx \barcR$, and moreover $\barUi$ has low norm; for technical
convenience, we measure the norm as the maximum over individual weight norms,
meaning $\sup_v \|\barUi(v)\|$.  To measure approximability, for sake of interpretation,
we use the \emph{binary Kullback-Leibler divergence (KL)}:
defining a conditional probability model $\phi_\infty$ corresponding to $\barUi$ via
\[
  \phi_\infty(x) := \phi(f(x;\barUi)), \qquad\textup{where } \phi(r) := \frac {1}{1+\exp(-r)},
\]
then the binary KL can be written as
\[
  \klb(p_y, \phi_\infty) :=
  \int\del{p_y \ln\frac {p_y}{\phi_\infty} + (1-p_y)\ln\frac {1-p_y}{1-\phi_\infty}}\dif\mu_x
  = \cR(\barUi) - \barcR.
\]
This relationship between binary KL and the excess risk is a convenient property
of the logistic loss, which immediately implies calibration as a consequence of achieving
the optimal risk. 

The pieces are all in place to state our main result.

\begin{theorem}\label{fact:main}
  Let width $m\geq \ln(emd)$,
  temperature $\rho > 0$,
  and reference model
  $\barUi$ be given with $R := \max\{4, \rho, \sup_v\|\barUi(v)\|\}<\infty$,
  and define a corresponding conditional model $\phi_\infty(x) := \phi(f(x;\barUi))$.
  Let optimization accuracy $\epsopt$ and radius $\radopt\geq R/\rho$
  be given,
  define effective radius
  $B := \min\cbr[1]{\radopt,\ {}\frac{3R}{\rho} + \frac{4e}{\rho} \sqrt{t}\sqrt{e^{\tau_0}\cR(\barUi) + R\tau_n} }$,
  and generalization, linearization, and sampling errors
  $(\tau_n,\tau_1,\tau_0)$ as
  \begin{align*}
    \tau_n :=
\tcO\del{\frac{(d\ln(1/\delta))^{3/2}}{\sqrt n}}\!\!,
\ \tau_1 :=
\tcO\del{\frac{\rho B^{4/3}\sqrt{d\ln(1/\delta)}}{m^{1/6}}}\!\!,
\ \tau_0
    :=
\tcO\del{\rho \ln(1/\delta) + \frac {\sqrt{d\ln(1/\delta)}}{m^{1/4}}}\!\!,
  \end{align*}
  where it is assumed $\tau_1\leq 2$, and $\tcO$ hides constants and $\ln(nmd)$.
  Choose step size $\eta := 4/\rho^2$, and run gradient descent
  for $t:=1/(8\epsopt)$ iterations, selecting iterate
  $W_{\leq t} :=\argmin\{\hcR(W_i) : i \leq t, \|W_i-W_0\|\leq \radopt\}$.
Then, with probability at least $1-25\delta$,
  \begin{align*}
& \cR(W_{\leq t}) - \barcR
    &\text{(logistic error)}\phantom{.}
    \\
  \leq\qquad& \klb(p_y, \phi_{\infty}) + \del[1]{e^{\tau_1 + \tau_0} - 1}\cR(\barUi)
            &\text{(reference model error)}\phantom{.}
            \\
  + \quad & e^{\tau_1} R^2 \epsopt
          &\text{(optimization error)}\phantom{.}
          \\
  + \quad &
e^{\tau_1} (\rho B + R)\tau_n
          & \hspace{2em}\text{(generalization error)},\\
          \intertext{where the classification and calibration errors satisfy}
& {}\cR(W_{\leq t}) - \barcR
    &\text{(logistic error)}\phantom{.}
    \\
            \geq \qquad& 2\int \del{\phi(f(x;W_{\leq t})) - p_y}^2\dif\mu_x(x)
               &\text{(calibration error)}\phantom{.}
               \\
   \geq \qquad &\frac 1 2 \del{ \cRz(W_{\leq t}) - \barcRz}^2
                &\text{(classification error)}.
  \end{align*}
  Lastly, for any $\epsilon > 0$, there exists $\barUi^{(\eps)}$
  with $\sup_v \|\barUi^{(\eps)}(v)\|<\infty$
  and whose conditional model $\phi_{\infty}^{(\eps)}(x) := \phi(f((x,1)/\sqrt{2};\barUi^{(\eps)}))$
  satisfies $\klb(p_y, \phi_{\infty}^{(\eps)}) \leq \epsilon$.
\end{theorem}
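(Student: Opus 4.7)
The statement separates into (A) the upper bound on $\cR(W_{\leq t})-\barcR$, (B) the calibration and classification lower bounds, and (C) the concluding universal-approximation clause. For (A) I would construct a comparator $W^\star$ that embeds $\barUi$ into the $m$-width architecture at initialization---concretely, $W^\star_j := W_{0,j} + (a_j/(\rho\sqrt{m}))\,\barUi(W_{0,j})$---so that the first-order Taylor expansion of $f(\cdot;W^\star)$ around $W_0$ is exactly the Monte Carlo estimator, using the $m$ Gaussian samples $W_{0,j}$, of the infinite-width integral defining $f(\cdot;\barUi)$. Then telescope
\[
\cR(W_{\leq t})-\barcR = [\cR(W_{\leq t})-\hcR(W_{\leq t})]+[\hcR(W_{\leq t})-\hcR(W^\star)]+[\hcR(W^\star)-\cR(W^\star)]+[\cR(W^\star)-\barcR].
\]
The first and third brackets are handled by uniform convergence over networks with $\|W-W_0\|\leq B$; the effective radius $B$ arises by intersecting the explicit early-stopping cap $\radopt$ with the a priori bound $\|W_i-W_0\|\lesssim \sqrt{t\,\hcR(\barUi)}/\rho$ one extracts from the descent lemma by telescoping $\|W_{i+1}-W_0\|^2$ and using $\|\nabla\hcR\|^2\lesssim \rho^2\hcR$ (a self-bounding property of the logistic loss). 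Bounding the Rademacher complexity of this class then produces the generalization term $(\rho B+R)\tau_n$.

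\textbf{Reference and optimization pieces.} The reference bracket $\cR(W^\star)-\barcR$ splits into a linearization error $\tau_1$ (replacing $\srelu(W_j^{\star\T}x)$ by its first-order Taylor expansion at $W_{0,j}^\T x$) and a Monte Carlo error $\tau_0$ (replacing the Gaussian integral by $m$ samples). Rather than leaving these as additive slacks I would invoke the multiplicative error property of the logistic loss (\Cref{fact:logistic:error}), which converts the resulting $L^\infty$ discrepancy into the clean prefactor $\del{e^{\tau_1+\tau_0}-1}\cR(\barUi)$ appearing in the statement. For the optimization bracket, $W^\star$ lies in the early-stopping ball since $\|W^\star-W_0\|\leq R/\rho\leq \radopt$; the empirical logistic risk is $\rho^2/4$-smooth on that ball (each of the $m$ features has norm $\leq\rho/\sqrt{m}$, and $\ell''\leq 1/4$), so with $\eta=4/\rho^2$ and $t=1/(8\epsopt)$ a smooth-GD argument (perturbed to handle the mild non-convexity via the same linearization just established) gives $\min_{i\leq t}\hcR(W_i)-\hcR(W^\star)\leq \|W^\star-W_0\|^2/(2\eta t)\leq R^2\epsopt$; the $e^{\tau_1}$ prefactor on this line again comes from trading the empirical risk of $W^\star$ for that of $\barUi$ via the multiplicative property.

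\textbf{Lower bounds and universal approximation.} The identity $\cR(f)-\barcR=\klb(p_y,\phi\circ f)$, a pointwise computation from the definition of $\ell$, combined with Pinsker's inequality $\klb(p,q)\geq 2(p-q)^2$, gives the calibration bound $\cR(f)-\barcR\geq 2\int(\phi(f)-p_y)^2\dif\mu_x$. The zero-one excess risk satisfies $\cRz(f)-\barcRz\leq 2\int|\phi(f)-p_y|\dif\mu_x$ pointwise by the standard calibration theory of Zhang and of Bartlett--Jordan--McAuliffe, so Cauchy--Schwarz plus the calibration bound yield $\cRz(f)-\barcRz\leq\sqrt{2(\cR(f)-\barcR)}$, rearranging to the stated classification bound. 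For the final clause, take the Bayes-optimal score $f^\star=\ln(p_y/(1-p_y))$, mollify and truncate to a continuous bounded $\tilde f$ with $\cR(\tilde f)\leq\barcR+\eps$, and express $\tilde f$ on the lifted domain $(x,1)/\sqrt{2}$ via a Barron-type integral representation against the ReLU random-feature family (the lift supplies the missing bias) to obtain a bounded $\barUi^{(\eps)}$; the KL identity above then converts the $\eps$-optimal logistic risk into $\klb(p_y,\phi_\infty^{(\eps)})\leq\eps$.

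\textbf{Main obstacle.} The hardest step is the linearization error $\tau_1$: unlike NTK analyses that only need agreement of the ReLU activation patterns of $W^\star$ and $W_0$ on the $n$ training points, here $\cR(W^\star)-\barcR$ is a population quantity, so control is required uniformly over the entire support of $\mu_x$. This is precisely the ``entire sphere'' technique advertised in the introduction: combine Gaussian anti-concentration $\Pr[|W_{0,j}^\T x|\leq\delta]\lesssim\delta$ with the small increment $\|W_j^\star-W_{0,j}\|\lesssim \|\barUi\|_\infty/(\rho\sqrt{m})$, so that outside a small-ball set the activation patterns of $W^\star$ and $W_0$ agree and the linearization is exact; optimizing the threshold $\delta$ yields the $m^{-1/6}$ dependence in $\tau_1$.
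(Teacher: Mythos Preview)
Parts (B) and (C) are essentially the paper's argument: the chain $\cR(f)-\barcR=\klb(p_y,\phi\circ f)\geq 2\int(\phi\circ f-p_y)^2$ via Pinsker, then Cauchy--Schwarz to the zero-one excess, is exactly \Cref{fact:logistic:error}; the approximation clause proceeds by clamping $p_y$, passing to a continuous proxy (the paper uses Lusin's theorem rather than mollification, but either route works), and a Barron-type representation on the bias-augmented input (\Cref{fact:shallow:barUi:lusin}). Your comparator $W^\star$ is also the paper's $\barU$ verbatim (cf.\ \cref{eq:barUi:sample}).

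For (A), two steps as written would not deliver the stated bound. First, the norm control: you claim $\|W_i-W_0\|\lesssim\sqrt{t\,\hcR(\barUi)}/\rho$ from the self-bounding inequality $\|\nabla\hcR\|^2\lesssim\rho^2\hcR$ plus the descent lemma, but the descent lemma telescopes to give $\sum_i\|\nabla\hcR(W_i)\|^2\lesssim\hcR(W_0)/\eta$, not $\hcR(\barUi)/\eta$---the comparator never enters a pure descent argument, so you only recover $\|W_t-W_0\|\lesssim\sqrt{t}/\rho$ and lose the crucial $\sqrt{\cR(\barUi)}$ factor inside $B$ (this factor is what makes the ``easy data'' regime in \Cref{rem:main} work). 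The paper instead retains the $\|W_t-Z\|^2$ term in the regret inequality
\[
\|W_t-Z\|^2+2\eta\sum_{i<t}\hcR^{(i)}(W_{i+1})\leq\|W_0-Z\|^2+2\eta\sum_{i<t}\hcR^{(i)}(Z)
\]
for the \emph{convex linearized} risks $\hcR^{(i)}$ (\Cref{fact:shallow:magic}), and argues by contradiction: if $\|W_s-W_0\|>B$ first at time $s$, the inequality at $s$ together with $\hcR^{(i)}(Z)\leq e^\tau\hcR^{(0)}(Z)$ forces $\|W_s-W_0\|\leq B$. The same regret inequality then yields the optimization guarantee $\hcR(W_{\leq t})\leq e^{2\tau}\hcR^{(0)}(Z)+e^\tau R^2\epsopt$ directly; the non-convexity is absorbed entirely by the multiplicative ratio $\hcR^{(i)}/\hcR^{(0)}\leq e^\tau$, not by a ``perturbed smooth-GD'' argument on the nonconvex $\hcR$.

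Second, your generalization brackets $[\cR-\hcR](W_{\leq t})$ and $[\hcR-\cR](W^\star)$ work on the full network class, but the paper deliberately avoids this. It first linearizes $\cR(W_{\leq t})\leq e^\tau\cR^{(0)}(W_{\leq t})$ over the whole sphere (\Cref{fact:shallow:linearization:frob}), then applies the \emph{linear} Rademacher bound $\rho B/\sqrt{n}$ to $\cR^{(0)}-\hcR^{(0)}$ (\Cref{fact:shallow:gen:frob:1}), then de-linearizes $\hcR^{(0)}(W_{\leq t})\leq e^\tau\hcR(W_{\leq t})$. This linearize--generalize--de-linearize sandwich, interleaved with the multiplicative sampling bound $\cR^{(0)}(\barU)\leq e^{\tau_0}\cR(\barUi)$ (\Cref{fact:shallow:barUi:sample}), is precisely why every additive term in the final bound carries an $e^{\tau_1}$ prefactor; a single additive telescope cannot reproduce that structure.
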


\begin{remark}\label{rem:main}
  The key properties of \Cref{fact:main} are as follows.

  \begin{enumerate}

    \item
      \textbf{(Achieving error $\cO(\eps)$ in three different regimes.)}
      As \Cref{fact:main} is quite complicated, consider three different situations,
      which vary the reference model $\barUi$ and its norm upper bound
      $R := \max\{4, \rho, \sup_v\|\barUi(v)\|\}<\infty$,
      as well as the early stopping radius $\radopt$.  Let target population (excess)
      risk $\eps>0$
      be given, set $\epsopt = \eps$ and $t = 1/(8\epsopt)$ as in \Cref{fact:main},
      and suppose $n \geq 1/\eps^2$ samples: in each of the
      three following settings,
      the other parameters parameters (namely $\rho$ and $m$) will be chosen
      to ensure a final error $\cR(W_{\leq t}) - \barcR = \cO(\eps)$.

      \begin{enumerate}
        \item
          \textbf{(Easy data.)}  Suppose a setting with \emph{easy data}:
          specifically, suppose that for chosen target accuracy $\eps>0$,
          there exists $\barUi$ with
          $\klb(p_y, \phi_\infty) = \cR(\barUi) - \barcR \leq \cR(\barUi) \leq \eps$.
          If we set $\rho = 1$ and $m \geq R^8$, then $(\tau_n,\tau_1,\tau_0)$ are all
          constant, and we get a final bound
          $\cR(W_{\leq t}) - \barcR = \cO(\eps)$.

          Note crucially that $m \approx R^8$ sufficed for this setting; this was
          a goal of the present analysis, as it recovers the \emph{polylogarithmic width}
          analyses from prior work \citep{ziwei_ntk,gu_polylog}.
          Those works however either used a separation condition due to
          \citet{nitanda_refined} in the shallow case, or an assumption on the approximation
          properties of the sampled weights (a random variable) in the deep case, and thus the
          present analysis provides not just a re-proof, but a simplification and generalization.
          This was the motivation for the strange \emph{multiplicative} form of the errors
          in \Cref{fact:main}: had we used the more common additive errors with standard
          linearization tools, a polylogarithmic
          width proof would fail.

        \item
          \textbf{(General data, clairvoyant early stopping radius $\radopt$.)}
          Suppose that we are in the general noisy case, meaning
          any $\barUi$ we pick has a large error
          $\klb(p_y, \phi_\infty)$, but we magically know the
          $R$ corresponding to a good $\barUi$, and can choose $\radopt = R/\rho$.
          Unlike the previous case, to achieve some target error $\eps$,
          we need to work harder to control the term $\sbr{\exp(\tau_1+\tau_0) - 1}\cR(\barUi)$,
          since we no longer have small $\cR(\barUi)$; to this end,
          since $\tau_1 =\tcO(R^{4/3} / (m \rho^2)^{1/6})$ and $\tau_0 = \tcO(\rho + 1/m^{1/4})$,
          choosing $\rho = m^{-1/8}$ and $m = 1/\eps^{8}$ gives
          $\tau_1=\tcO(\eps)$ and $\tau_0 = \tcO(\eps)$, and together
          $\cR(W_{\leq t}) - \barcR = \cO(\eps)$.

        \item
          \textbf{(General data, worst-case early stopping.)}
          Suppose again the case of general noisy data
          with large error
          $\klb(p_y, \phi_\infty)$ for any $\barUi$ we pick,
          but now suppose we have
          no early stopping hint,
          and pessimistically set $\radopt = \infty$.
          As a consequence of all of this,
          the term $B$ can scale as $t^{2/3}/\rho= 1 / (\rho \eps^{2/3})$,
          thus to control $\tau_1 = \tcO( (1/\eps)^{2/3} / (m\rho^2)^{1/6})$ and 
          $\tau_0 = \tcO(\rho + 1/m^{1/4})$,
          we can again choose $\rho = m^{-1/8}$, but need a larger width
          $m = 1/\eps^{40/3}$.  Together, we once again achieve population excess risk
          $\cR(W_{\leq t}) - \barcR = \cO(\eps)$.
      \end{enumerate}

      Summarizing, a first key point is that arbitrarily small excess risk $\cO(\eps)$
      is always possible; as discussed, this is in contrast to prior work,
      which either only gave training error guarantees, or required restrictive conditions for
      small test error.  A second key point is that the parameters of the bound, most notably
      the required width, will shrink greatly when either the data is easy, or an optimal
      stopping radius $\radopt$ is known.

    \item
      \textbf{(Consistency.)}
      \emph{Consistency} is a classical statistical goal of achieving the optimal test error
      almost surely
      \emph{over all possible predictors} as $n\to\infty$; here it is
      proved as a consequence of \Cref{fact:main}, namely the preceding argument
      that we can achieve excess risk $\cO(\eps)$ even with general prediction problems
      and no early stopping hints ($\radopt=\infty$).
      The consistency guarantee is stated formally in \Cref{fact:consistency}.
      The statement takes the width to infinity, and demonstrates another advantage of using 
      an infinite-width reference model: within the proof, after fixing a target accuracy,
      the reference model is \emph{fixed} and used for all widths simultaneously.

\begin{figure}[b!]
\begin{tcolorbox}[enhanced jigsaw, empty, sharp corners, colback=white,borderline north = {1pt}{0pt}{black},borderline south = {1pt}{0pt}{black},left=0pt,right=0pt,boxsep=0pt,rightrule=0pt,leftrule=0pt]
\centering
    \includegraphics[width=0.7\textwidth]{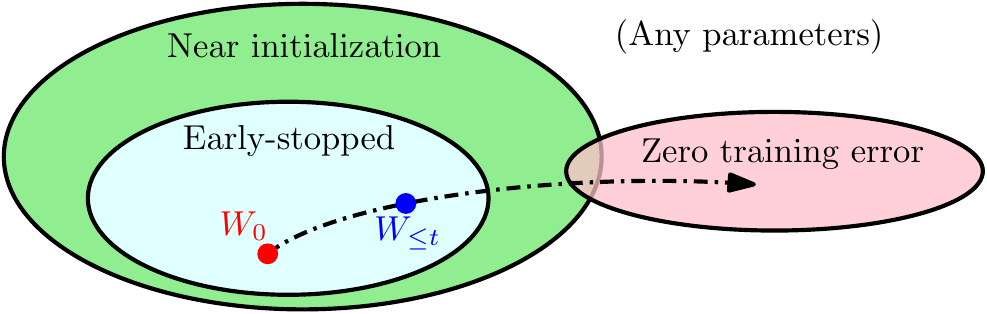}
    \caption{The setting of this paper, contrasted with standard settings.  \Cref{fact:main}
      considers iterate $W_{\leq t}$, which is somewhere in the \emph{early-stopped} ball
      around the initial random choice $W_0$.
      This early-stopped ball is well inside the \emph{near initialization} or \emph{NTK}
      ball, since in noisy settings, the early-stopped ball will not reach zero training error,
      whereas the NTK ball will.  Meanwhile, the NTK itself requires early stopping and is a
      subset of the space of all parameters.
    }
    \label{fig:venn}
   \end{tcolorbox}
\end{figure}
    \item
      \textbf{(Non-vacuous generalization, and an estimate of $R$.)}
      There is extensive concern throughout the community that generalization estimates
      are hopelessly loose
      \citep{nati_implicit_gen,zhang_gen,roy_vacuous}; to reduce the concern here,
      we raise two points.  Firstly, these concerns usually involve explicit calculations of
      generalization bounds which have terms scaling with some combination
      of $\|W\|$ (not $\|W - W_0\|$)
      and $m$; e.g,. one standard bound has spectral norms $\|W\|_2$
      and $(2,1)$ matrix norms $\|(W - W_0)^\T\|_{2,1}$, which are upper bounded by
      $\|W - W_0\|\sqrt{m}$ \citep{spec}.
      By contrast, the present work uses a new generalization bound
      technique
      (cf. \Cref{fact:shallow:gen:frob:1})
      which first \emph{de-linearizes} the network, then applies a \emph{linear
      generalization bound} which has only $\|W-W_0\|$ and no explicit $\poly(m)$, and then
      \emph{re-linearizes}.

      Secondly, there may still be concern that the story here is broken due to the term $R$,
      and namely the non-existence of good choices for $\barUi$.  For this, we conducted a simple
      experiment.  Noting that we can freeze the initial features and train linear predictors of
      the form $f^{(0)}(x;V)$ for weights $V\in\R^{m\times d}$ (cf. \cref{sec:notation}),
      and that the performance converges to the infinite-width performance as $m\to\infty$,
      we fixed a large width and trained two prediction tasks: an \emph{easy} task
      of MNIST 1 vs 5 until $R_{\text{easy}}/\sqrt{n} \approx 1/2$, and a \emph{hard} task
      of MNIST 3 vs 5 until $R_{\text{hard}}/\sqrt{n} \approx 1/2$.
      After training, we obtained test error $\cR(V_{\text{easy}}) \approx 0.01$
      and $\cR(V_{\text{hard}})\approx 0.08$.  Plugging all of these terms back in to the bound,
      firstly these techniques can yield a non-vacuous generalization bound, secondly they
      do not exhibit bad scaling with large width, and thirdly they do reflect the difficulty
      of the problem, as desired.

    \item
      \textbf{(Early stopping and the NTK.)}
      As discussed above, when the data is noisy, the method is explicitly early stopped,
      either by clairvoyantly choosing $\radopt$, or by making $t$ small.  In this setting,
      the optimization accuracy $\epsopt$ is an \emph{excess} empirical risk, meaning in particular
      that $0$ training error (the \emph{interpolation regime} \citep{double_descent})
      will \emph{not} be reached.  This is in stark contrast to standard
      NTK analyses \citep{allen_deep_opt}, which guarantee zero training error, but can not
      ensure good test error in general.  Since the NTK itself is an early stopping (as in,
      if one continues to optimizes, one exits the NTK), then the early stopping in this work
      is even earlier than the NTK early stopping; this situation is summarized in
      \Cref{fig:venn},
      and will be revisited for the lower bound in \Cref{sec:intro:lb}.

    \item
      \textbf{(Classification and calibration.)}
      The relationship to classification and calibration errors is merely a restatement
      of existing results \citep{zhang_convex_consistency,bartlett_jordan_mcauliffe},
      though it is reproved here in an elementary way for the special case of the logistic loss.
      Similarly, the guarantee that
      $\klb(p_y, \phi_{\infty}^{(\eps)})$ can be made arbitrarily small is also not a primary
      contribution, and indeed most of the heavy lifting is provided both by prior work
      in neural network approximation \citep{barron_nn}, and by the
      existing and reliable machinery
      for proving consistency \citep{schapire_freund_book_final}.
      As such, the consistency result is stated only much later in \Cref{fact:consistency},
      and our focus is on the exact risk guarantees in \Cref{fact:main}.

    \item
      \textbf{(Inputs with bias: $(x,1)/\sqrt{2}\in\R^{d+1}$.)}
      The end of \Cref{fact:main} appends a constant to the input (and rescales),
      which simulates a bias
      term inside each ReLU; this is necessary since our models 
      are (sigmoid mappings of) homogeneous functions, whereas $p_y$ is general.
Biases are also simulated in this way in the consistency result in \Cref{fact:consistency}.
      \qedhere
  \end{enumerate}
\end{remark}

Further discussion of \Cref{fact:main}, including the formal consistency result
(cf. \Cref{fact:consistency}) and a proof sketch, all appear in \Cref{sec:gd}.
Full proofs appear in the appendices.

\subsection{Should we early stop?}
\label{sec:intro:lb}

\begin{figure}[b!]
\begin{tcolorbox}[enhanced jigsaw, empty, sharp corners, colback=white,borderline north = {1pt}{0pt}{black},borderline south = {1pt}{0pt}{black},left=0pt,right=0pt,boxsep=0pt,rightrule=0pt,leftrule=0pt]
\centering
    \begin{subfigure}[t]{0.46\textwidth}
      \centering
      \includegraphics[width=\textwidth]{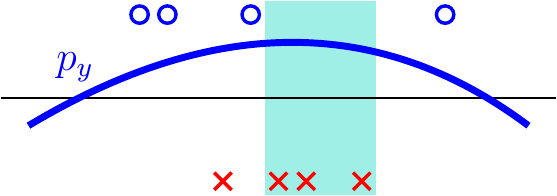}
      \caption{Conditional model $p_y$ and some noisy data.  A smoothed prediction rule
      would perform well.}
      \label{fig:interp:1}
    \end{subfigure}\hfill
    \begin{subfigure}[t]{0.46\textwidth}
      \centering
      \includegraphics[width=\textwidth]{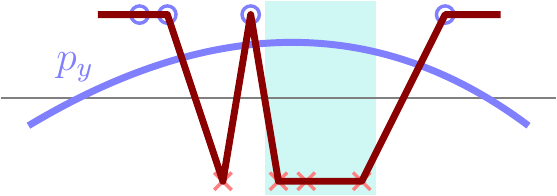}
      \caption{A \emph{local interpolation rule} working very hard to fit the noisy data.}
      \label{fig:interp:2}
    \end{subfigure}\caption{When data is noisy, it's best to give up on a few points.  The shaded region here
      highlights
      consecutive points with the wrong label; as in \Cref{fact:lb:local},
      prediction rules that locally interpolate will have a large population risk in these
      regions.
    }
    \label{fig:interp}
   \end{tcolorbox}
\end{figure}

\Cref{fact:main} uses early stopping: it can blow up if
$\barcR > 0$ and the two gradient descent parameters $\radopt$ and $1/\epsopt$
are taken to $\infty$ in an uncoordinated fashion.
Part of this is purely technical: as with many neural network optimization proofs,
the analysis breaks when far from initialization.
It is of course natural to wonder what happens
if one trains indefinitely, entering the actively-studied \emph{interpolation regime}
\citep{daniel_interpolation,double_descent,peter_benign}. Furthermore, there is evidence
that gradient descent on shallow networks limits towards a particular interpolating
choice, one with large margins
\citep{nati_logistic,riskparam_logreg,kaifeng_jian_margin,chizat_bach_imp,ziwei_directional}.
Is this behavior favorable?

While we do not rule out that the interpolating solutions found by neural networks
perform well, we show that at least in the low-dimensional (univariate!) setting,
if a prediction
rule perfectly labels the data and is not too wild between training points, then it is guaranteed
to achieve poor test loss on noisy problems.
This negative observation is not completely at odds with the interpolation literature,
where the performance of some rules improves with dimension \citep{daniel_interpolation}.

\begin{proposition}
  \label{fact:lb:local}
  Given a finite sample $((x_i,y_i))_{i=1}^n$ with $x_i\in\R$ and $y_i\in\{\pm 1\}$,
  let $\cF_n$ denote the collection of \emph{local interpolation rules}
  (cf. \Cref{fig:interp}):
  letting $x_{(i)}$ index examples in sorted order, meaning
  $x_{(1)} \leq x_{(2)}\leq \cdots \leq x_{(n)}$,
  define $\cF_n$ as
  \begin{align*}
    \cF_n
    := \big\{ f : \R \to \R \ : \ {}
    &\forall i\ {}f(x_{(i)}) = y_{(i)}, \text{ and}
      \\
    &\text{if } y_{(i)} = y_{(i+1)}, \text{ then }
    \inf_{\alpha\in[0,1]} f\del[1]{ \alpha x_{(i)} + (1-\alpha)x_{(i+1)} }y_{(i)} > 0
    \big\}.
  \end{align*}
  Then there exists a constant $c>0$ so that with probability at least $1-\delta$ over the draw of $((x_i,y_i))_{i=1}^n$
  with $n \geq \ln(1/\delta)/c$,
  every $f\in\cF_n$ satisfies $\cRz(f) \geq \bar\cRz(f) + c$.
\end{proposition}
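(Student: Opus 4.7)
The plan is to exhibit a simple noisy univariate distribution and show that the local interpolation property deterministically forces any $f\in\cF_n$ to misclassify across a region whose measure concentrates away from zero. Fix $\mu_x = \mathrm{Unif}([0,1])$ together with a constant conditional probability $p_y(x) \equiv q$ for some $q\in(1/2,1)$; then the Bayes-optimal $0/1$ predictor is the constant $+1$ and $\barcRz = 1-q$. A crucial feature of the constant conditional is that positions and labels are independent, so the sorted-order labels $(y_{(i)})_{i=1}^n$ are themselves i.i.d.\ Bernoulli$(q)$ on $\{\pm 1\}$ and are independent of the order statistics $(x_{(i)})$.

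Next, I would make a deterministic reduction. Call an interval $I_i := [x_{(i)}, x_{(i+1)}]$ \emph{bad} when $y_{(i)} = y_{(i+1)} = -1$; by the local interpolation property every $f\in\cF_n$ is strictly negative throughout $I_i$, so its pointwise $0/1$ risk on $I_i$ equals $q$ while the Bayes risk is $1-q$. Writing $S := \sum_{i=1}^{n-1}(x_{(i+1)} - x_{(i)})\,\1[y_{(i)} = y_{(i+1)} = -1]$ for the total Lebesgue measure of bad intervals, this gives
\begin{align*}
  \cRz(f) - \barcRz \;\geq\; (2q-1)\, S
\end{align*}
uniformly in $f\in\cF_n$. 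It therefore suffices to prove that $S \geq c_0$ with probability at least $1-\delta$ for some $c_0 = c_0(q) > 0$ whenever $n \geq \ln(1/\delta)/c_1$.

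The main obstacle is this concentration bound, since $S$ mixes the dependent uniform spacings $L_i := x_{(i+1)} - x_{(i)}$ with the one-step-dependent label indicators $B_i := \1[y_{(i)} = y_{(i+1)} = -1]$. I would dispatch both difficulties with standard tools. First, by the exponential representation of uniform order statistics, $(L_i) \stackrel{d}{=} (E_i/T)$ where $E_0,\ldots,E_n$ are i.i.d.\ $\mathrm{Exp}(1)$ and $T := \sum_j E_j$; here $(E_j)$ and $(B_i)$ are independent, and a Chernoff bound gives $T \leq 2n$ with probability $1-\delta$ once $n \gtrsim \ln(1/\delta)$. Second, splitting $\sum_i E_i B_i$ by the parity of $i$ decouples the $B_i$'s (the two groups use disjoint label pairs), so each half is a sum of i.i.d.\ bounded summands (after truncating $E_i$ at a constant) with mean $(1-q)^2\,\bbE[E_i] = (1-q)^2$; Bernstein's inequality applied to each half yields $\sum_i E_i B_i \geq \tfrac{1}{2}(1-q)^2 n$ with probability $1-\delta$, again for $n$ logarithmic in $1/\delta$. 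A union bound and division then give $S \geq c_0 := (1-q)^2/4$, so we may take $c := (2q-1)(1-q)^2/4$ and the proposition follows.
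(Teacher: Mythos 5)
Your core reduction is the same one the paper uses: adjacent sample points carrying the same wrong label force any $f\in\cF_n$ to predict incorrectly on the entire gap between them, so the excess misclassification risk is at least a margin constant times the $\mu_x$-mass of the union of these bad gaps, and the remaining work is to show that this mass is bounded below by a constant with high probability. Within the particular distribution you fix, your execution is essentially sound: with $p_y\equiv q$ the sorted labels are i.i.d.\ and independent of the positions, the exponential representation of uniform spacings plus a parity split to decouple the one-dependent indicators and a Bernstein bound do give $S\geq c_0(q)$ with the claimed logarithmic sample-size requirement.

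The genuine gap is the scope of the claim. The proposition concerns the paper's ambient distribution $\mu$: the constant $c$ may depend on $\mu$, but the guarantee (proved through the refined \Cref{fact:lb:pairs}) holds for \emph{every} distribution whose marginal $\mu_x$ is continuous and compactly supported and whose conditional $p_y$ is continuous and genuinely noisy on a set of positive $\mu_x$-measure; this is what supports the paper's assertion that local interpolation rules fail on noisy problems, not merely that some bad distribution exists. By hard-coding $\mu_x=\mathrm{Unif}([0,1])$ and $p_y\equiv q$ you prove only the existential (failure-of-universal-consistency) version, and the simplifications you lean on are precisely the ones that break in general: when $p_y$ is non-constant, the sorted labels are neither i.i.d.\ nor independent of the positions (the paper instead uses continuity to extract an interval $I$ on which $p_y$ is bounded away from $\{0,1/2,1\}$ and counts wrong-labeled adjacent pairs inside $I$), and for a general continuous $\mu_x$ the gap masses $\mu_x([x_{(i)},x_{(i+1)}])$ admit no exponential-spacings representation, which is why the paper lower-bounds the total mass of the bad gaps by a separate argument: bounding the number of gaps of mass below a threshold $\Delta$ via two staggered systems of bins together with McDiarmid's bounded-differences inequality. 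To close the gap you would need to carry out such an argument uniformly over the stated regularity class (or argue that only the existential statement is being claimed, which is strictly weaker than what the paper proves and relies on).
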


Although a minor contribution, this result will be discussed briefly
in \Cref{sec:lb:local}, with detailed proofs appearing in the appendices.
For a similar discussion for nearest neighbor classifiers albeit under a few additional
assumptions, see \citep{nakkiran_bansal}.

\subsection{Related work}

\paragraph{Analyses of gradient descent.}
The proof here shares the most elements with recent works whose width could be polylogarithmic
in the sample size and desired target accuracy $1/\eps$
\citep{ziwei_ntk,gu_polylog}.  Similarities include using a regret inequality as the core
of the proof, using an infinite-width target network \citep{nitanda_refined,ziwei_ntk},
and using a \emph{linearization inequality} \citep{gu_polylog,allen_deep_opt}.
On the technical side, the present work differs in the detailed treatment of the logistic loss,
and in the linearization inequality which is extended to hold over the population risk;
otherwise, the core gradient descent analysis here is arguably simplified relative
to these prior works.
It should be noted that the use of a regret inequality here and in the previous works crucially
makes use of a negated term which was dropped in some classical treatments;
this trick is now re-appearing
in many places \citep{orabona2021parameterfree,frei_agnostic}.

There are many other, somewhat less similar works in the vast literature of gradient descent on neural networks,
in particular in the neural tangent regime
\citep{jacot_ntk,li_liang_nips,du_iclr}.
These works often handle not only training error, but also testing error
\citep{allen_3_gen,arora_2_gen,cao_deep_gen,nitanda_refined,ziwei_ntk,gu_polylog}.
As was mentioned before, these works do not appear to handle arbitrary target
models; see for instance the modeling discussion in \citep[Section 6]{arora_2_gen}.
As another interesting recent example,
some works explicitly handle certain noisy conditional models,
but with error terms that do not go to zero in general
\citep{liang2021achieving}.

\paragraph{Consistency.}
Consistency of deep networks with classification loss and
\emph{some} training procedure is classical; e.g.,
in \citep{lugosi_consistency}, the authors show that it suffices to
run a computationally intractable algorithm on an architecture chosen to balance VC dimension
and universal approximation.  Similarly, the work here makes use of Barron's superposition
analysis in an infinite-width form to meet the Bayes risk
\citep{barron_nn,ntk_apx}.
The statistics literature has many other works giving
beautiful analyses of neural networks, e.g., even with minimax rates
\citep{schmidthieber}, though it appears this literature generally does not consider gradient
descent and arbitrary classification objectives.

In the boosting literature, most consistency proofs only consider classification loss
\citep{bartlett_traskin_adaboost,schapire_freund_book_final},
though there is a notable exception which controls the convex loss (and thus calibration),
although the algorithm has a number of modifications \citep{zhang_yu_boosting}.
In all these works, arbitrary $p_y$ are not handled explicitly as here, but rather
\emph{implicitly} via assumptions on the expressiveness of the weak learners.
One exception is the logistic loss boosting proof of \citet{mjt_log_cons},
which explicitly handles measurable $p_y$ via Lusin's theorem as is done here,
but ultimately the proof only controls classification loss.

Following the arXiv posting of this work, a few closely related works appeared.
Firstly, \citet{ilja_consistency} show that the expected excess risk can scale with
$\|W_t-W_0\|_\tF / n^\alpha$, though in contrast with the present work, it is not shown
that this ratio can go to zero for arbitrary prediction problems, and moreover the
bound is in expectation only.  Secondly, the work of \citet{smoking_gun} is even closer,
however it requires a condition on the Fourier spectrum of the conditional model $p_y$,
which is circumvented here
via a more careful Fourier analysis due to \citet{ntk_apx}.

\paragraph{Calibration.}
There is an increasing body of work considering the (in)ability of networks trained
with the logistic loss to recover the underlying conditional model.
Both on the empirical side \citep{guo2017calibration} and on the theoretical side
\citep{bai2021dont}, the evidence is on the side of the logistic loss doing poorly, specifically
being \emph{overconfident}, meaning the sigmoid outputs are too close to $0$ or $1$.
This overconfident regime corresponds to large margins; indeed,
since gradient descent can be proved in some settings
to exhibit unboundedly large unnormalized margins on
all training points \citep{kaifeng_jian_margin}, the sigmoid mapping of the
predictions will necessarily limit to exactly $0$ or $1$.
On the other hand, as mentioned in \citep{bai2021dont}, regularization suffices
to circumvent this issue. In the present work, a combination of early stopping
and small temperature are employed.
As mentioned before, calibration is proved here as an immediate corollary of meeting
the optimal logistic risk via classification calibration
\citep{zhang_convex_consistency,bartlett_jordan_mcauliffe}.

\subsection{Further notation and technical background}
\label{sec:notation}

The loss $\ell$, risks $\cR$ and $\hcR$, and network $f$ have been defined.
The misclassification risk $\cRz(f) = \Pr[\sgn(f(X))\neq Y]$ appeared in \Cref{fact:main},
where $\sgn(f(x)) = 2\cdot \1[f(x)\geq 0] - 1$.

Next, consider the ``gradient'' of $f$ with respect to weights $W$:
\[
  \nabla f(x;W) := \frac {\rho}{\sqrt m}\sum_{j=1}^m a_j \1[w_j^\T x \geq 0] \ve_jx^\T;
\]
it may seem the nondifferentiability at $0$ is concerning, but in analyses close to initialization
(as is the one here), few activations change, and their behavior is treated in a worst-case
fashion.  Note that, as is easily checked with this expression,
$\|\nabla f(W)\| \leq \rho$, which is convenient in many places in the proofs.
Here $\|\cdot\|$ denotes the Frobenius norm; $\|\cdot\|_2$ will denote the spectral norm.

Given weight matrix $W_i$ at time $i$, let $(w_{i,j}^\T)_{j=1}^m$ refer to its rows.
Define features $\ffi$ at time $i$ and a corresponding empirical risk $\hcRi$ using
the features at time $i$ as
\begin{align*}
  \ffi(x;V)
  &:= \ip{\nf(x;W_i)}{V} = \frac \rho {\sqrt m} \sum_j a_j v_j^\T x \1[w_{i,j}^\T x\geq 0],
  \\
  \hcRi(x;V) &:= \hcR(x\mapsto \ffi(x;V)).
\end{align*}
By $1$-homogeneity of the ReLU, $\ffi(x;W_i) = f(x;W_i)$, which will also be used often.
These features at time $i$, meaning $\ffi$ and $\hcRi$,
are very useful in analyses near initialization, as they do not change much.
As such, $\ff{0}$ and $\cR^{(0)}$ and $\hcR^{(0)}$ will all appear often as well.

To be a bit pedantic about the measure $\mu$: as before, there is a joint distribution $\mu$,
which is over the Borel $\sigma$-algebra on $\R^d\times\{\pm 1\}$, where $\|x\|\leq 1$ almost
surely.  This condition suffices to grant both a \emph{disintegration} of $\mu$ into
marginal $\mu_x$ and conditional $p_y$ \citep[Chapter 6]{kallenberg2002foundations},
and also Lusin's theorem \citep[Theorem 7.10]{folland}, which is used to switch from a
measurable function to a continuous one in the consistency proof (cf. \Cref{fact:consistency}).

\section{Discussion and proof sketch of \Cref{fact:main}}
\label{sec:gd}

This section breaks down the proof and discussion into four subsections: a section
with common technical tools,
then sections for the analysis of generalization, optimization, and approximation.

\subsection{Key technical lemmas}

There are two main new technical ideas which power many parts of the proofs: a multiplicative
error property of the logistic loss, and a \emph{linearization over the sphere}.

The logistic loss property is simple enough: for any $a\geq b$, it holds that
$\ell(-a)/\ell(-b)\leq \exp(a-b)$.  On the surface, this seems innocuous, but this simple
inequality allows us to reprove existing polylogarithmic width results for easy data
\citep{ziwei_ntk,gu_polylog}, however making use of a proof scheme which is slightly more
standard, or at the very least more apparently a smooth convex proof with just this one
special property of the logistic loss (as opposed to a few special properties).

The second tool is more technical, and is used crucially in many places in the proof.
Many prior analyses near initialization bound the quantity
\[
  f(x;V) - f(x;W) - \ip{\nf(x;W)}{V-W},
\]
where $V$ and $W$ are both close to initialization
\citep{allen_deep_opt,cao_deep_gen,gu_polylog}.  These proofs are typically performed on
a fixed example $x_k$, and then a union bound carries them over to the whole training set.
Here, instead, such a bound is extended to hold \emph{over the entire sphere}, as follows.

\begin{lemma}[Simplification of \Cref{fact:shallow:linearization:frob}]\label{fact:shallow:linearization:frob:simplified}
  Let scalars $\delta>0$ and $R_V\geq 1$ and $R_B\geq 0$ be given.
  \begin{enumerate}

    \item
With probability at least $1-3n\delta$,
      \[
\sup_{\substack{\|W_i - W_0\| \leq R_V\\
            \|W_j - W_0\| \leq R_V\\
        \|B - W_0\|\leq R_B}}
        \frac {\hcR^{(i)}(B)}{\hcR^{(j)}(B)}
        \leq
\exp\del{
          \frac {6\rho \del{R_B + 2R_V}R_V^{1/3}\ln(e/\delta)^{1/4}}{m^{1/6}}
        }
        .
      \]

    \item
Suppose $m \geq \ln(edm)$.
      With probability at least $1- (1 + 3 (d^2m)^{d})\delta$, 
      \[
        \sup_{\|W_i - W_0\| \leq R_V}
        \frac {\cR(W_i)}{\cR^{(0)}(W_i)}
        \leq
\exp\del{
          \frac {25 \rho R_V^{4/3} \sqrt{\ln(edm/\delta)}}{m^{1/6}}
        }
        .
      \]
  \end{enumerate}
\end{lemma}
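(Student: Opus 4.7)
The plan is to combine the multiplicative logistic-loss inequality (stated at the start of this subsection) with a careful activation sign-flip count based on Gaussian anti-concentration. These two steps reduce both parts to controlling a pointwise prediction gap; Part 1 then finishes by a union bound over the $n$ training points, while Part 2 requires extending the same bound to every $x$ in the unit ball via a covering argument---which is the main obstacle.

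First I would apply the inequality $\ell(u)\le \ell(v)\exp(|u-v|)$, a direct consequence of $\ell(-a)/\ell(-b)\le e^{a-b}$ for $a\ge b$, term by term inside the empirical mean:
\[
  \hcR^{(i)}(B) = \tfrac{1}{n}\sum_k \ell(y_k f^{(i)}(x_k;B)) \le \hcR^{(j)}(B)\cdot \sup_k\exp\bigl(|f^{(i)}(x_k;B)-f^{(j)}(x_k;B)|\bigr),
\]
and similarly at the population level, where the $1$-homogeneity identity $f(x;W_i)=f^{(i)}(x;W_i)$ lets me replace $\cR(W_i)$ by $\bbE\,\ell(y f^{(i)}(x;W_i))$ before comparing with $\cR^{(0)}(W_i)$. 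So in both parts it suffices to bound the pointwise gap $|f^{(i)}(x;B)-f^{(j)}(x;B)|$ over the relevant set of inputs.

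For fixed $x$ with $\|x\|\le 1$ the gap collapses to
\[
  |f^{(i)}(x;B)-f^{(j)}(x;B)| \le \tfrac{\rho}{\sqrt m}\sum_{k\in F(x)}|b_k^\T x|,\qquad F(x) := \{k:\1[w_{i,k}^\T x\ge 0]\ne \1[w_{j,k}^\T x\ge 0]\}.
\]
A flip at neuron $k$ forces $|w_{0,k}^\T x|\le \|w_{i,k}-w_{0,k}\|+\|w_{j,k}-w_{0,k}\|$. Separating at a threshold $\tau$: at most $O(R_V^2/\tau^2)$ rows have deviation exceeding $\tau$ (by the Frobenius budget $\|W_i-W_0\|,\|W_j-W_0\|\le R_V$), and Gaussian anti-concentration plus a Bernstein bound yields $|F(x)|\le \tcO(m\tau)$ with high probability for a \emph{fixed} $x$. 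Decomposing
\[
  \sum_{k\in F(x)}|b_k^\T x| \le |F(x)|\,\tau + \sqrt{|F(x)|}\,(R_B+2R_V)
\]
(the first term from $|w_{0,k}^\T x|\le \tau$ inside $F(x)$, the second from Cauchy--Schwarz applied to $B-W_0$) and then optimizing $\tau\asymp R_V^{2/3}/m^{1/3}$ produces the advertised rate $\rho(R_B+2R_V)R_V^{1/3}/m^{1/6}$.

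For Part 1, a union bound of the anti-concentration event over the $n$ training points $x_k$ closes the argument, explaining the $3n\delta$ failure probability. For Part 2 the supremum ranges over every $x$ in the unit ball; here I would fix an $\epsilon$-net with $\epsilon\asymp 1/(d^2m)$, whose cardinality is at most $(d^2m)^d$ and exactly accounts for the combinatorial factor in the stated failure probability, apply the pointwise bound to each net point with a union bound, and then transfer to arbitrary $x$ by combining $1$-homogeneity (so $f(\cdot;W_i)=f^{(i)}(\cdot;W_i)$ is Lipschitz in $x$) with a separate anti-concentration argument showing that only $\tcO(m\epsilon)$ activations $\1[w_{0,k}^\T x\ge 0]$ can flip within an $\epsilon$-ball of any net point. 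This last transfer is the main obstacle: the genuine jump discontinuities in $f^{(0)}(\cdot;W_i)$ rule out a naive Lipschitz-plus-net argument, and it is the very fine covering needed to defeat them that both inflates the exponent of the log to $\sqrt{\ln(edm/\delta)}$ and produces the $(d^2m)^d$ count appearing in Part 2.
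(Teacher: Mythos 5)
Your proposal is correct and follows essentially the same route as the paper: the multiplicative logistic-loss property reduces both parts to a sup-norm feature gap; the fixed-$x$ gap is handled by splitting rows into small-initial-preactivation rows (Gaussian anti-concentration plus Chernoff) and large-deviation rows (Frobenius budget) with threshold $\asymp R_V^{2/3}m^{-1/3}$; part 1 union-bounds over the $n$ sample points; and part 2 covers the unit ball and controls how many $W_0$-activations can flip inside each $\eps$-ball, which is exactly the mechanism of \Cref{fact:covc:again} and \Cref{fact:covc:helper}. (One quantitative quibble: with $\eps\asymp 1/(d^2m)$ the high-probability per-ball flip count is governed by the confidence/threshold term rather than by $m\eps$ itself --- the paper's threshold choice yields roughly $\sqrt{m\ln(edm/\delta)}$ flips --- but this only shifts lower-order terms and constants, since the dominant contribution remains the fixed-$x$ linearization term of order $m^{-1/6}$.)
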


The preceding \namecref{fact:shallow:linearization:frob} combines both the linearization
technique and the multiplicative error property: it bounds how much the empirical and true risk
change for a fix weight matrix if we swap in and out the features at different iterations.
That these bounds are a ratio is due to the multiplicative error property.  That the second
part holds over the true risk, in particular controlling behavior over all $\|x\|\leq 1$,
is a consequence of the new more powerful linearization technique.
This linearization over the sphere is used crucially in three separate places:
we use it when controlling the range in the generalization proofs,
when \emph{de-linearizing} after generalization,
and when sampling from the infinite-width model $\barUi$.
The method of proof is
inspired by the concept of \emph{co-VC dimension} \citep{gurvits_koiran}:
the desired inequality is first union bounded over a cover of the sphere,
and then relaxed to all points on the sphere.
A key difficulty here is the non-smoothness of the ReLU, and a key lemma
establishes a smoothness-like inequality (cf. \Cref{fact:covc:helper}).
These techniques appear in full in the appendices.

\subsection{Generalization analysis}

The generalization statement appears as \Cref{fact:shallow:gen:frob:1} in the appendices,
together with its proofs,
but here is a sketch of the key elements.
To start, rather than directly studying uniform convergence properties of the networks
reachable by gradient descent,
\Cref{fact:shallow:linearization:frob:simplified}
is applied over the training set to convert the network to a linear predictor,
and only then is generalization \emph{of linear predictors} applied;
this use of generalization for linear predictors and not of general networks is how the bound
pays only logarithmically in the width, and otherwise has just a Frobenius norm dependence on
the weight matrices (minus initialization), which is in contrast with standard generalization
bounds.
Thereafter, \Cref{fact:shallow:linearization:frob:simplified} is applied once more,
but on the population risk (which
uses the approximation guarantee over the entire sphere and not just the training set), which \emph{de-linearizes} the
linear predictor used for generalization and gives a test error guarantee for the original network.

Typically the easiest step in proving generalization is to provide a
worst-case estimate on the range of the predictor, however a standard worst-case estimate
in this setting
incurs a polynomial dependence on network width.
To avoid this, we once again use the tools of
\Cref{fact:shallow:linearization:frob:simplified} to control the range with high
probability.

\subsection{Gradient descent analysis}

A common tool in linear prediction is the regret inequality
\[
  \|v_t - z\|^2 + 2\eta\sum_{i<t} \hcR(v_{i+1}) \leq \|v_0 - z\|^2 + 2t\eta \hcR(z),
\]
which can be derived by expanding the square in $\|v_t - z\|^2$ and applying
smoothness and convexity.  The term $\|v_t-z\|^2$ is often dropped, but can be used in a very
convenient way:  by the triangle inequality, if $\|v_t-v_0\|\geq 2\|z-v_0\|$, then the norm
terms above
may be canceled from both sides, which leaves only the empirical risk terms; overall, this
argument ensures both small norm and small empirical risk.
This idea has appeared in a variety of works
\citep{shamir2020gradient,gd_reg}, and is used here to provide a convenient norm control,
allowing linearization and all other proof parts to go through.  Combining this idea
with the earlier generalization analysis and a few other minor tricks
gives the following bounds, which in turn provide most of \Cref{fact:main}.

\begin{lemma}\label{fact:shallow:magic:linearized}
  Let
  temperature $\rho > 0$,
  step size $\eta \leq 4/\rho^2$,
  optimization accuracy $\epsopt > 0$,
  radius $\radopt>0$,
  network width $m\geq \ln(emd)$,
  reference matrix $Z\in\R^{m\times d}$,
  corresponding scalar $R_Z\leq\radopt$ where $R_Z \geq \max\{1, \eta \rho, \|W_0 - Z\|\}$,
  and $t \geq 1 / (2\eta\rho^2 \epsopt)$ be given;
  correspondingly define $W_{\leq t} := \argmin\{ \hcR(W_i) : i \leq t, \|W_i-W_0\|\leq \radopt\}$.
  Define effective radius
  $B := \min\cbr[1]{\radopt,\ {}3R_Z + 2e \sqrt{\eta t \hcR^{(0)}(Z)} }$,
  and linearization and generalization errors
  \[
\tau := \frac {25 \rho B^{4/3}\sqrt{d\ln(em^2d^3/\delta)}}{m^{1/6}},
    \qquad
    \tau_n := \frac {80\del{d \ln(em^2 d^3/\delta)}^{3/2}}{\sqrt n},
  \]
  and suppose $\tau \leq 2$.
  Then, with probability at least $1-3n\delta$, 
  the selected iterate $W_{\leq t}$
  satisfies $\|W_{\leq t}-W_0\|\leq B$,
  along with the empirical risk guarantee
  \begin{align*}
    \hcR(W_{\leq t}) \leq e^{2\tau} \hcR^{(0)}(Z) + e^{\tau} (\rho R_Z)^2 \epsopt,
  \end{align*}
  and by discarding an additional $16\delta$ failure probability,
  then $\hcR^{(0)}(Z) \leq \cR^{(0)}(Z) + \rho R_Z \tau_n$,
  and
  \begin{align*}
    \cR(W_{\leq t})
    &\leq
    e^{4\tau} \cR^{(0)}(Z)
    + e^{3\tau} (\rho R_Z)^2 \epsopt
    +
    e^{4\tau} (B+R_Z)\rho \tau_n.
\end{align*}
\end{lemma}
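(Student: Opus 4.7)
The plan is to combine a linear-style regret inequality for gradient descent on the $1$-homogeneous network with the multiplicative linearization of \Cref{fact:shallow:linearization:frob:simplified}, exploiting two identities: by $1$-homogeneity of the ReLU, $\ffi(x;W_i) = f(x;W_i)$ and $\nabla_W f(x;W)|_{W_i} = \nabla \ffi(x;W_i)$, so a gradient step on $\hcR$ at $W_i$ is also a gradient step on the convex linearized objective $\hcRi$. Since $\|\nabla \ffi(x;\cdot)\|\leq\rho$ and $\ell$ has $1/4$-Lipschitz gradient, $\hcRi$ is $\rho^2/4$-smooth as a function of its weight argument; with $\eta\leq 4/\rho^2$ this yields the smooth descent inequality, which combined with convexity of $\hcRi$ gives the per-step regret
\[
  \|W_{i+1}-Z\|^2 + 2\eta\,\hcRi(W_{i+1}) \leq \|W_i - Z\|^2 + 2\eta\,\hcRi(Z),
\]
which I would telescope over $i<t$.

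Next I would bound $\sum_{i<t}\hcRi(Z)$ by invoking the first part of \Cref{fact:shallow:linearization:frob:simplified} with $R_V=B$ and $R_B=R_Z$, uniformly giving $\hcRi(Z)\leq e^{\tau}\hcR^{(0)}(Z)$, so the telescoped RHS becomes $R_Z^2+2\eta t e^{\tau}\hcR^{(0)}(Z)$. Applying the same lemma in reverse bounds $\hcRi(W_{i+1})\geq e^{-\tau}\hcR^{(i+1)}(W_{i+1})=e^{-\tau}\hcR(W_{i+1})$, so the average (hence the minimum) of $\hcR(W_{i+1})$ over $i<t$ is at most $\tfrac{e^{\tau}R_Z^2}{2\eta t}+e^{2\tau}\hcR^{(0)}(Z)$; using $t\geq 1/(2\eta\rho^2\epsopt)$ bounds the first summand by $e^{\tau}(\rho R_Z)^2\epsopt$, matching the empirical risk guarantee as soon as the minimizer is known to satisfy the $\radopt$ constraint.

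For the norm bound I would drop the non-negative sum from the LHS of the telescoped regret at time $s\leq t$, obtaining $\|W_s-Z\|^2\leq R_Z^2+2\eta t e^{\tau}\hcR^{(0)}(Z)$; triangle inequality with $\|Z-W_0\|\leq R_Z$ together with the hypothesis $\tau\leq 2$ (so $e^{\tau/2}\leq e$) yields
\[
  \|W_s-W_0\| \leq 2R_Z + e\sqrt{2\eta t\,\hcR^{(0)}(Z)} \leq 3R_Z + 2e\sqrt{\eta t\,\hcR^{(0)}(Z)}.
\]
When this quantity is at most $\radopt$ it equals $B$ and every iterate lies in the radius-$B$ ball, so the constrained minimum equals the unconstrained one and inherits $\|W_{\leq t}-W_0\|\leq B$; otherwise $B=\radopt$ and the selection constraint trivially gives $\|W_{\leq t}-W_0\|\leq\radopt=B$. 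A small bootstrap is needed because \Cref{fact:shallow:linearization:frob:simplified} requires the iterates to lie in the radius-$R_V=B$ ball to begin with; I would resolve this by induction on $s$, noting that the displayed bound is itself equal to $B$, so the induction closes. This circular interplay between ``iterates stay in the ball'' and ``linearization error is small'' is the main technical friction, but it is clean because the two quantities are matched by construction.

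Finally, for the population risk bound I would follow the de-linearization strategy from the generalization sketch: (i) apply the second part of \Cref{fact:shallow:linearization:frob:simplified} to pass $\cR(W_{\leq t})\leq e^{\tau}\cR^{(0)}(W_{\leq t})$; (ii) apply a standard Lipschitz-loss generalization bound to the linearized class $\{x\mapsto\ff{0}(x;V):\|V-W_0\|\leq B\}$, which is linear in $V$ with feature norm $\leq\rho$, producing $\cR^{(0)}(W_{\leq t})\leq\hcR^{(0)}(W_{\leq t})+\rho B\tau_n$; (iii) apply the first part of \Cref{fact:shallow:linearization:frob:simplified} once more to convert $\hcR^{(0)}(W_{\leq t})$ back into $e^{\tau}\hcR(W_{\leq t})$; (iv) substitute the already-proved empirical risk bound; (v) apply the same generalization argument to the reference matrix $Z$ to get $\hcR^{(0)}(Z)\leq\cR^{(0)}(Z)+\rho R_Z\tau_n$. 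Collecting the compounding $e^{\tau}$ factors (up to $e^{4\tau}$) and the two $\tau_n$ contributions (scaled by $B$ and $R_Z$, then combined into $(B+R_Z)\rho\tau_n$) yields the stated population risk guarantee.
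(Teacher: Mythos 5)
Your argument is sound, and essentially identical to the paper's, in the regime where $3R_Z + 2e\sqrt{\eta t\,\hcR^{(0)}(Z)} \leq \radopt$: there $B$ equals that quantity, your earliest-violation induction closes, every iterate $W_i$ with $i\leq t$ stays in the radius-$B$ ball, and the regret-plus-multiplicative-linearization chain delivers the norm, empirical, and (after the de-linearize/generalize/re-linearize steps, which match the paper) population bounds with the stated constants. The genuine gap is the complementary case $B=\radopt < 3R_Z + 2e\sqrt{\eta t\,\hcR^{(0)}(Z)}$, which is exactly the interesting regime where the early-stopping radius binds (e.g.\ the clairvoyant choice $\radopt=R/\rho$ in \Cref{rem:main}). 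Gradient descent is unconstrained, so the iterates can leave the $\radopt$-ball; your a priori bound only confines them to the larger radius $3R_Z + 2e\sqrt{\eta t\,\hcR^{(0)}(Z)}$, so your induction does not close (you close it by ``noting that the displayed bound is itself equal to $B$,'' which is false here). Consequently the invocations of \Cref{fact:shallow:linearization:frob:simplified} with $R_V=B$ at every iterate are unjustified, and you cannot simply enlarge $R_V$, since $\tau$ is defined through the possibly much smaller $B$ and that smallness is the entire point of $\radopt$. Worse, bounding the minimum of $\hcR(W_{i+1})$ over all $i<t$ by the average no longer controls $\hcR(W_{\leq t})$, because iterates outside the $\radopt$-ball are infeasible for the constrained argmin; your remark that ``the selection constraint trivially gives $\|W_{\leq t}-W_0\|\leq\radopt=B$'' rescues only the norm claim, not the two risk claims.

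The paper supplies the missing device precisely for this case: let $T$ be the first time $\|W_i-W_0\|>2R_Z$. Since $\eta\rho\leq R_Z$, every iterate up to and including $W_T$ lies within $2R_Z+\eta\rho\leq 3R_Z$ of $W_0$, which the proof keeps inside the radius-$B$ ball, so linearization applies to these iterates and they are feasible. If $T>t$, your argument goes through verbatim. If $T\leq t$, the triangle inequality gives $\|Z-W_T\|>\|Z-W_0\|$, so in the telescoped regret inequality at time $T$ the squared-norm term on the left dominates the one on the right and the two cancel, leaving $\hcR(W_{\leq t})\leq \min_{i<T}\hcR(W_i)\leq e^{2\tau}\hcR^{(0)}(Z)$ from the first $T$ iterations alone, which is even stronger than the stated empirical bound; the population bound then follows by the same chain you describe, using the already-established norm guarantee for $W_{\leq t}$. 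Without this stopping-time and norm-cancellation step (or some substitute), your proof does not establish the empirical or population risk bounds when the radius constraint binds.
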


This version of the statement, unlike \Cref{fact:main}, features an arbitrary reference
\emph{matrix} $Z$.  This is powerful, though it can be awkward, since $W_0$ is a random variable.

\subsection{Approximation analysis, consistency, and the proof of \Cref{fact:main}}

Rather than trying to reason about good predictors which may happen to be close to random
initialization,
the approach here is instead to start from deterministic predictors over the population
(e.g., $\barUi$),
and to use their structure to construct approximants
near the initial iterate, the random matrix $W_0$.
Specifically, the approach here is fairly brute force: given initial weights
$W_0$ with rows $(w_{0,j}^\T)_{j=1}^m$, the rows $(\baru_j)_{j=1}^m$
of the finite width reference matrix $\barU\in\R^{m\times d}$ intended to mimic $\barUi$
(which is after all a mapping $\barUi :\R^d \to \R^d$)
are simply
\begin{equation}
  \baru_j := \frac {a_j \barUi(w_{0,j})}{\rho\sqrt{m}} + w_{0,j}.
  \label{eq:barUi:sample}
\end{equation}
By construction, $\|\barU - W_0\|\leq R/\rho$, where $R:=\sup_v \|\barUi(v)\|$.
To argue that $\cR^{(0)}(\barU)$ and $\cR(\barUi)$ are close,
the risk control over the sphere in
\Cref{fact:shallow:linearization:frob:simplified}
is again used.
Plugging this $\barU$ into 
\Cref{fact:shallow:magic:linearized} and introducing $\klb(p_y, \phi_\infty)$
gives the first part of \Cref{fact:main}, and the second part
of \Cref{fact:main} is from a few properties of the logistic loss summarized
in \Cref{fact:logistic:error}.

It remains to prove that for any $p_y$, there exists $\barUi$ with
$\phi(x\mapsto f((x,1)/\sqrt{2};\barUi)) \approx p_y$
(we must include a bias term, as mentioned in \Cref{rem:main}).
If $p_y$ were continuous, there is a variant
of \citeauthor{barron_nn}'s seminal universal approximation construction which explicitly
gives an infinite-width network of the desired form \citep{barron_nn,ntk_apx}.
To address continuity is even easier: \emph{Lusin's theorem} \citep[Theorem 7.10]{folland}
lets us take the measurable
function $p_y$, and obtain a continuous function that agrees with it on all but a negligible
fraction of the domain.  This completes the proof.

As mentioned, a key property of the reference model $\barUi$
is that it depends on neither the random sampling of data, nor the random sampling of weights.
This vastly simplifies the proof of consistency, where the proof scheme first fixes an $\eps>0$
and chooses a $\barUi$, and leaves it fixed as $m$ and $n$ vary.

\begin{corollary}\label{fact:consistency}
  Let early stopping parameter $\xi\in(0,1)$ be given,
  and for each sample size $n$, define a weight matrix $\widehat W_n \in \R^{m^{(n)}\times (d+1)}$
  and corresponding conditional probability model 
  $\widehat \phi_n(x) := \phi(f((x,1)/\sqrt 2;\widehat W_n))$ as follows.
  For each sample size $n$, let $(W^{(n)}_i)_{i\geq 0}$
  denote the corresponding sequence of gradient descent iterates obtained with parameter choices
$\rho^{(n)} := (m^{(n)})^{-1/8}$,
  and $m^{(n)} := n^{\frac{40}{3}(1-\xi)}$,
  and $\eta^{(n)} := 4/(\rho^{(n)})^2$,
  and $\epsopt^{(n)} := n^{\xi-1}$,
  and $t^{(n)} := n^{1-\xi}/8$,
  and choose the empirical risk minimizer over the sequence, meaning
  $\widehat W_n := \argmin\cbr[1]{\hcR(W^{(n)}_i) : i \leq t^{(n)} }$
  (in the notation of \Cref{fact:main}, this is $W_\leq t$ with $\radopt = \infty$).
  Then
  \[
    \cR(\widehat W_n) \xrightarrow{\phantom{\ {}L_2(\mu_x)}\ {}}  \barcR \text{ a.s.,}
    \qquad
    \quad
    \cRz(\widehat W_n) \xrightarrow{\phantom{\ {}L_2(\mu_x)\ {}}} \barcRz \text{ a.s.,}
    \qquad
    \quad
    \widehat \phi_n \xrightarrow{\ {}L_2(\mu_x)\ {}} p_y \text{ a.s.,}
  \]
  where the last convergence is in the $L_2(\mu_x)$ metric.
\end{corollary}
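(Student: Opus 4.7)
The plan is to reduce the consistency statement to \Cref{fact:main} in the worst-case regime (case~(c) of \Cref{rem:main}), using the $\eps$-indexed reference models $\barUi^{(\eps)}$ guaranteed at the end of that theorem, and then upgrade the resulting in-probability bounds to almost-sure convergence via Borel--Cantelli.

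Fix $\eps>0$, and use the final part of \Cref{fact:main} to obtain a fixed (deterministic, $n$-independent) reference $\barUi^{(\eps)}$ over the bias-augmented input space, with $R^{(\eps)} := \max\{4,\sup_v\|\barUi^{(\eps)}(v)\|\}<\infty$ and $\klb(p_y,\phi_\infty^{(\eps)})\le\eps$. Instantiate \Cref{fact:main} for each $n$ with this $\barUi^{(\eps)}$, inputs $(x,1)/\sqrt 2$, early stopping radius $\radopt=\infty$, and the scheduled parameters $\rho^{(n)}=(m^{(n)})^{-1/8}$, $m^{(n)}=n^{(40/3)(1-\xi)}$, $\eta^{(n)}=4/(\rho^{(n)})^2$, $\epsopt^{(n)}=n^{\xi-1}$, $t^{(n)}=n^{1-\xi}/8$, and failure parameter $\delta_n:=n^{-2}$. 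Since $R^{(\eps)}$ is fixed and $\rho^{(n)}\downarrow 0$, the hypotheses $\radopt\ge R^{(\eps)}/\rho^{(n)}$ and $\tau_1\le 2$ are automatic for all large $n$.

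Next I would verify, with the scaling just chosen, that each of $\tau_n,\tau_1,\tau_0\to 0$: $\tau_n=\tcO((\ln n)^{3/2}/\sqrt n)$ directly; $B=\cO(\rho^{-1}\sqrt{t})=\cO(n^{(13/6)(1-\xi)})$ from the worst-case term $(4e/\rho)\sqrt t\sqrt{e^{\tau_0}\cR(\barUi^{(\eps)})+R\tau_n}$; so $\tau_1=\tcO(\rho B^{4/3}/m^{1/6})$ evaluates to $\tcO(n^{-(1-\xi)})$, and $\tau_0=\tcO(\rho\ln n+m^{-1/4})=\tcO(m^{-1/8}\ln n)$. Then the four terms in the excess-risk bound of \Cref{fact:main} give, with probability at least $1-25\delta_n=1-25/n^2$,
\[
  \cR(\widehat W_n)-\barcR
  \;\le\; \eps + (e^{\tau_1+\tau_0}-1)\cR(\barUi^{(\eps)}) + e^{\tau_1}(R^{(\eps)})^2\epsopt^{(n)} + e^{\tau_1}(\rho^{(n)} B+R^{(\eps)})\tau_n,
\]
and each of the last three summands tends to $0$ (in particular $\rho^{(n)} B\,\tau_n=\cO(n^{-\xi/2})$ up to logs, and $(R^{(\eps)})^2\epsopt^{(n)}=\cO(n^{\xi-1})$).

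Because $\sum_n\delta_n<\infty$, Borel--Cantelli upgrades this to almost-sure control: for the fixed $\eps$, $\limsup_{n\to\infty}\bigl(\cR(\widehat W_n)-\barcR\bigr)\le\eps$ on a probability-one event $E_\eps$. Intersecting $E_{1/k}$ over $k\in\bbN$ yields a single probability-one event on which $\cR(\widehat W_n)\to\barcR$. The other two convergences are then immediate from the two lower bounds in \Cref{fact:main}: the calibration lower bound gives $\int(\widehat\phi_n-p_y)^2\,\dif\mu_x\le\tfrac12(\cR(\widehat W_n)-\barcR)\to 0$, i.e.\ $L_2(\mu_x)$ convergence of $\widehat\phi_n$ to $p_y$; and the classification lower bound gives $(\cRz(\widehat W_n)-\barcRz)^2\le 2(\cR(\widehat W_n)-\barcR)\to 0$, so $\cRz(\widehat W_n)\to\barcRz$ almost surely as well.

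The conceptually delicate step is not any single estimate but the \emph{order of quantifiers}: the reference $\barUi^{(\eps)}$ must be chosen before sampling data and weights and must stay fixed as $n$ (and hence $m^{(n)}$) grows, which is precisely what is enabled by taking the reference on the population-level infinite-width side; this is what lets a single Borel--Cantelli argument handle all widths simultaneously. The arithmetic obstacle is bounding $B$ cleanly in the worst-case ($\radopt=\infty$) regime so that $\rho^{(n)} B\,\tau_n$ still vanishes under the prescribed exponent $40/3$; getting that exponent right (and checking $\tau_1\to 0$ rather than merely being bounded) is the main verification to carry out carefully.
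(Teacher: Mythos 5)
Your proposal is correct and follows essentially the same route as the paper's proof: fix $\eps$, choose a fixed infinite-width reference $\barUi^{(\eps)}$ via the last part of \Cref{fact:main}, instantiate the theorem for each $n$ with $\radopt=\infty$ and failure probability $n^{-2}$, verify under the prescribed schedule that $B=\tcO(\sqrt{t^{(n)}}/\rho^{(n)})$ and $\tau_1^{(n)},\tau_0^{(n)},\rho^{(n)}B\tau_n\to 0$, and conclude by Borel--Cantelli plus the calibration and classification inequalities. The rate computations (e.g.\ $\tau_1=\tcO(n^{\xi-1})$, generalization term $\tcO(n^{-\xi/2})$) match the paper's.
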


The use of a parameter $\xi\in(0,1)$ is standard in similar consistency results; see for instance
the analogous parameter in the consistency analysis of AdaBoost
\citep{bartlett_traskin_adaboost}.
Proofs, as usual, are in the appendices.

\section{Discussion and proof sketch of \Cref{fact:lb:local}}
\label{sec:lb:local}

\Cref{fact:lb:local} asserts that univariate \emph{local interpolation rules} --- 
predictors which perfectly fit the data, and are not too wild between data points of the
same label --- will necessarily achieve suboptimal population risk.
The proof idea seems simple enough: if the true conditional probability $p_y$
is not one of $\{0,\nicefrac 1 2,1\}$
everywhere, and is also continuous, then there must exist a region where it is well separated
from these three choices.
It seems natural that a constant fraction of the data in these regions
will form adjacent pairs with the wrong label; a local interpolation rule will fail on exactly
these adjacent noisy pairs,
which suffices to give the bound.  In reality, while this is indeed the proof scheme followed here,
the full proof must contend with many technicalities
and independence issues.  It appears in the appendices.

While the motivation in \Cref{sec:intro:lb} focused on neural networks which interpolate, and also
maximum margin solutions,
the behavior on this noisy univariate
data is also well-illustrated by $k$-nearest-neighbors classifiers ($k$-nn).
Specifically, $1$-nn is a local interpolant, and \Cref{fact:lb:local} applies.
On the other hand, choosing $k = \Theta(\ln(n))$ is known to provide
enough smoothing to achieve consistency and avoid interpolation \citep{DGL}.

It should be stressed again that even if the remaining pieces could be proved to apply this result
to neural networks, namely necessitating early stopping,
it would still be a univariate result only,
leaving open many interesting possibilities in higher dimensions.

\section{Concluding remarks and open problems}
\label{sec:open}

\paragraph{Empirical performance.}
Does the story here match experiments?  E.g., is it often the case that if a neural network
performs well, then so does a random feature model?  Do neural networks fail on noisy data if
care is not taken with temperature and early stopping?  Most specifically, is this part
of what happens in existing results reporting such failures \citep{guo2017calibration}?

\paragraph{Temperature parameter $\rho$.}
Another interesting point of study is the temperature parameter $\rho$.  It arises here
in a fairly technical way: if $p_y$ is often close to $1/2$,
then the random initialization of $W_0$ gets in the way of learning $p_y$.  The temperature
$\rho$ is in fact a brute-force method of suppressing this weight initialization noise.
On the other hand,
temperature parameters are common across many works which rely heavily on the detailed
real-valued outputs of sigmoid and softmax mappings; e.g., in the distillation literature
\citep{distillation}.  The temperature also plays the same role as the scale parameter
in the \emph{lazy training} regime
\citep{bach_chizat_note}.
Is $\rho$ generally useful, and does the analysis here relate to its practical utility?

\paragraph{Random features, and going beyond the NTK.}
The analysis here early stops before the feature learning begins to occur.
How do things fare outside the NTK?  Is there an analog of \Cref{fact:main},
still stopping shy of the interpolation pitfalls of \Cref{fact:lb:local},
but managing to beat random features with some generality?

\paragraph{The logistic loss.}
One reason the logistic is used here is its simple interplay with calibration
(e.g., see the elementary proof of \Cref{fact:logistic:error}, as compared with the full
machinery of classification calibration
\citep{zhang_convex_consistency,bartlett_jordan_mcauliffe}).
The other key reason was the multiplicative error property \Cref{fact:logistic:error}.
Certainly, the logistic loss is widely used in practice; are the preceding technical points
at all related to the widespread empirical use of the logistic loss?

\subsubsection*{Acknowledgments}
The authors are grateful for support from the NSF under grant IIS-1750051.
MT thanks many friends for illuminating and motivating discussions:
Daniel Hsu,
Phil Long,
Maxim Raginsky,
Fanny Yang.

\bibliography{bib}
\bibliographystyle{plainnat}

\clearpage

\appendix

\section{Proof of \Cref{fact:main} and supporting results}

This appendix section proves all bounds necessary for \Cref{fact:main},
and also proves the consistency statement in \Cref{fact:consistency}.

\subsection{Technical preliminaries}

First, the key logistic loss properties.

\begin{lemma}\label{fact:logistic:error}
  \begin{enumerate}
    \item
      For any $a\geq b$,
      \[
        \frac{\phi(a)}{\phi(b)} \leq e^{a-b}
        \qquad
        \textup{and}\qquad
        \frac {\ell(-a)}{\ell(-b)} \leq e^{a-b}.
      \]
      In particular, for any $f,g$ with $\sup_{\|x\|\leq 1} |f(x)-g(x)| \leq \tau$,
      \[
        e^{-\tau} \cR(f) \leq \cR(g) \leq e^\tau \cR(f).
      \]
If only $\max_k |f(x_k) - g(x_k)| \leq \tau$,
      then
      $e^{-\tau} \hcR(f) \leq \hcR(g) \leq e^\tau \hcR(f)$.

    \item
For any $f:\R^d\to\R$ and corresponding conditional model $\phi_f(x) := \phi(f(x))$,
      \[
        \frac 1 2 \del{ \cRz(f) - \barcRz}^2
        \leq 2\int (\phi_f(x) - p_y(x))^2\dif\mu_x(x)
        \leq \klb(p_y, \phi_f)
        = \cR(f) - \barcR.
      \]

  \end{enumerate}
\end{lemma}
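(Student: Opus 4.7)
The first inequality $\phi(a)/\phi(b)\le e^{a-b}$ is a one-line algebraic check: writing $\phi(a)/\phi(b) = (1+e^{-b})/(1+e^{-a})$, clearing denominators reduces the claim to $1 \le e^{a-b}$, which holds since $a\ge b$. For the logistic version $\ell(-a)/\ell(-b) \le e^{a-b}$, I would take logarithms and show $g(a):=\ln\ell(-a)$ has derivative at most $1$: computing $g'(a) = \phi(a)/\ell(-a)$, the claim reduces to $\phi(a)\le \ell(-a) = \ln(1+e^a)$, which via the substitution $x = e^a$ is the standard inequality $x/(1+x) \le \ln(1+x)$ for $x\ge 0$. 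Integrating $g' \le 1$ from $b$ to $a$ yields the bound. The consequences for the risks then follow pointwise: since $y\in\{\pm1\}$, $|yf(x)-yg(x)| = |f(x)-g(x)|\le \tau$, so applying the scalar inequality in whichever direction makes $-a \ge -b$ gives $e^{-\tau}\ell(yf(x)) \le \ell(yg(x)) \le e^\tau \ell(yf(x))$, and taking expectations (resp.\ sample averages) yields the $\cR$ and $\hcR$ versions.

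\textbf{Plan for Part 2, identity $\cR(f)-\barcR = \klb(p_y,\phi_f)$.} I would first disintegrate $\cR(f) = \int\bigl[p_y\ell(f) + (1-p_y)\ell(-f)\bigr]\,\dif\mu_x$. Writing $q := \phi(f)$ and using $\ell(f) = -\ln\phi(f) = -\ln q$ and $\ell(-f) = -\ln(1-q)$, the integrand becomes the binary cross-entropy $-p_y\ln q - (1-p_y)\ln(1-q)$. Minimizing pointwise in $r=f(x)$ (derivative $\phi(r)-p_y$) gives the Bayes-optimal choice $q = p_y$, so the pointwise Bayes risk is the binary entropy $H(p_y)$, hence $\barcR = \int H(p_y)\dif\mu_x$. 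Subtracting gives exactly $\int \klb(p_y,\phi_f)\dif\mu_x$.

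\textbf{Plan for Part 2, Pinsker and classification-calibration bounds.} For the lower bound $\klb(p_y,\phi_f)\ge 2\int(\phi_f-p_y)^2\dif\mu_x$, I would invoke Pinsker's inequality for Bernoulli distributions in its pointwise form $\klb(p,q)\ge 2(p-q)^2$ (provable via a two-line second-derivative argument on $p\mapsto \klb(p,q) - 2(p-q)^2$) and integrate. For the classification calibration bound, write the pointwise excess misclassification as $|2p_y-1|\cdot \1[\sgn f(x)\ne \sgn(2p_y(x)-1)]$; whenever that indicator is $1$, the predicted conditional probability $\phi_f$ lies on the opposite side of $1/2$ from $p_y$, so $|\phi_f - p_y|\ge |p_y - 1/2|$, giving the pointwise inequality $|2p_y-1|\cdot\1[\cdots] \le 2|\phi_f - p_y|$. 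Integrating against $\mu_x$ and applying Cauchy--Schwarz (with $\mu_x$ a probability measure) yields $\cRz(f) - \barcRz \le 2\sqrt{\int(\phi_f-p_y)^2\dif\mu_x}$, which is the desired bound after squaring and rearranging.

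\textbf{Main obstacle.} All pieces are classical, so there is no deep technical barrier; the only mild care required is in the classification-calibration step, where one must correctly identify that disagreement of $\sgn f$ with the Bayes classifier forces $\phi_f$ and $p_y$ to lie on opposite sides of $1/2$, and then apply Cauchy--Schwarz with respect to the probability measure $\mu_x$ rather than losing a dimension-dependent factor. A minor bookkeeping point is the tie-breaking convention in $\sgn(\cdot)$, but this affects only a $\mu_x$-null set of $x$ with $p_y(x) = 1/2$ and does not alter the bound.
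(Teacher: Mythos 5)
Your proposal is correct and takes essentially the same route as the paper: the pointwise multiplicative bounds on $\phi$ and $\ell$ followed by taking expectations/averages, the cross-entropy identity $\cR(f)-\barcR=\klb(p_y,\phi_f)$ via the pointwise Bayes minimizer, pointwise Pinsker integrated over $\mu_x$, and the sign-disagreement argument $|\phi_f-p_y|\geq|p_y-1/2|$ plus Cauchy--Schwarz for the classification bound. The only cosmetic difference is in proving $\ell(-a)/\ell(-b)\leq e^{a-b}$: you bound the log-derivative $\phi(a)/\ell(-a)\leq 1$ and integrate, while the paper uses the integral representation $\ell(-a)=\int_{-\infty}^{a}\phi(r)\dif r$ together with the $\phi$ ratio bound; both are valid.
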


\begin{proof}\begin{enumerate}
    \item
      Since $a \geq b$, then $e^{b-a} \leq 1$, and
      \[
        \frac {\phi(a)}{\phi(b)}
        = \frac {1 + e^{-b}}{1 + e^{-a}}
        = e^{a-b} \del{\frac {e^{b-a} + e^{-a}}{1 + e^{-a}}}
        \leq e^{a-b},
      \]
      whereby
      \[
        \int_{-\infty}^{a} \phi(r)\dif r
        = \int_{-\infty}^{b} \phi(r + (a-b))\dif r
        \leq e^{a-b} \int_{-\infty}^{b} \phi(r)\dif r.
      \]
      Consequently,
      \begin{align*}
        \ell(-a) = - \int_{-a}^\infty \ell'(r) \dif r = \int_{-a}^\infty \phi(-r)\dif r
        = \int_{-\infty}^{a} \phi(r)\dif r
        \leq e^{a-b} \int_{-\infty}^{b} \phi(r)\dif r
        = e^{a-b} \ell(-b).
      \end{align*}
      The first set of claims for risk follow from the fact that for any pair $(x,y)$
      and $\tau \geq 0$,
      \[
        \ell(yr + y^2 \tau) \leq \ell(yr) \leq \ell(yr - y^2 \tau),
      \]
      whereby
      \[
        \cR(f) = \bbE \ell(yf(x)) \leq \bbE \ell(yg(x) - \tau) 
        \leq e^\tau \bbE \ell(yg(x))  = e^\tau \cR(g).
      \]
      The proof for empirical risk is similar, but only relies upon behavior on
      the finite sample.

    \item
      From standard results in the literature on classification calibration
      \citep{zhang_convex_consistency,bartlett_jordan_mcauliffe},
      the optimal logistic loss pointwise satisfies
      \[
        \bar r_x := \inf_{r\in \R} p_y(x) \ell(r) + (1-p_y(x))\ell(-r) = -p_y(x) \ln p_y(x) - (1-p_y(x))\ln(1-p_y(x)).
      \]
      Consequently, for any predictor $f:\R^d \to \R$ and corresponding probability model
      $\phi_f(x) := \phi(f(x))$, note that
      \begin{align*}
        \cR(f)
        &= \int \del{p_y(x) \ln(1+ \exp(-f(x))) + (1-p_y(x)) \ln(1+\exp(f(x)))}\dif\mu_x(x)
        \\
        &= \int \del{ - p_y(x) \ln \phi_f(x) - (1-p_y(x)) \ln(1-\phi_f(x))}\dif\mu_x(x),
      \end{align*}
      and thus
      \[
        \cR(f) - \barcR = \klb(p_y, \phi_f).
      \]
      By Pinsker's inequality,
      \begin{align*}
        \klb(p_y, \phi_f)
    &= \int \del{ p_y(x) \ln \frac {p_y(x)}{\phi_f(x)} + (1-p_y(x)) \ln \frac {1-p_y(x)}{1-\phi_f(x)} }\dif \mu_x(x)
    \\
    &\geq \frac 1 2 \int \del{ |p_y(x) - \phi_f(x)| + | (1-p_y(x)) + (1-\phi_f(x))| }^2 \dif \mu_x(x)
    \\
    &= 2 \int \del{ p_y(x) - \phi_f(x) }^2 \dif \mu_x(x).
      \end{align*}
      If $\sgn(\phi_f(x)-1/2) \neq \sgn(p_y(x)-1/2)$,
      then $|\phi_f(x) - p_y(x)| \geq |p_y(x) - 1/2|$, and so
      \begin{align*}
        \cRz(f) - \barcRz
    &= \int \1[\sgn(\phi_f(x) - 1/2) \neq \sgn(p_y(x)-1/2)]\cdot |2p_y(x)-1|\dif\mu_x(x)
    \\
    &\leq 2 \int |\phi_f(x) - p_y(x)|\dif\mu_x(x)
    \\
    &\leq 2 \sqrt{\int (\phi_f(x)-p_y(x))^2 \dif\mu_x(x)}.
      \end{align*}

  \end{enumerate}
\end{proof}

The remainder of this technical subsection develops a variety of concentration inequalities used throughout,
most notably the control over the sphere in \Cref{fact:covc:again}.
First, a few standard Gaussian inequalities, included here for completeness.

\begin{lemma} \label{fact:gaussians:1}
  Suppose $W\in\R^{m\times d}$ has iid Gaussian entries $W_{j,k} \sim \cN(0,1)$,
  and let $(w_j^\T)_{j=1}^m$ denote the rows.
  \begin{enumerate}
    \item
      For any $\tau > 0$,
      with probability at least $1-3\delta$,
      \[
        \sum_{j=1}^m \1\sbr{|w_j^\T x| \leq \tau \|x\| }
        \leq m \tau + \sqrt{8m \tau \ln(1/\delta)}.
      \]

    \item
      With probability at least $1-\delta$,
      \[
        \|W\|_2 < \sqrt{m} + \sqrt{d} + \sqrt{2 \ln(1/\delta)}.
      \]

    \item
      With probability at least $1-2\delta$,
      \begin{align*}
        -\|z\| \sqrt{2\ln(1/\delta)}
        &\leq
        \|\srelu(Wz)\| - \bbE \|\srelu(Wz)\|
        \leq
        \|z\| \sqrt{2\ln(1/\delta)},
      \end{align*}
      where
      \[
        \|z\|\del{\sqrt{\frac m 2} - \frac 5 {\sqrt{ 8 m}}}
        \leq
        \bbE \| \srelu(Wz)\|
        \leq
        \|z\|\sqrt{\frac m 2}
        .
      \]

    \item
      With probability at least $1-\delta$, $w\in\R^d$
      with coordinates $w_i \sim \cN(0,1)$ satisfies
      \[
        \|w\| \leq \sqrt{d} + \sqrt{2\ln(1/\delta)}.
      \]

  \end{enumerate}
\end{lemma}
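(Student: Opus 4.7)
This \namecref{fact:gaussians:1} collects four standard Gaussian concentration facts, and my approach is to pair a first-moment computation with a sub-Gaussian or Gaussian Lipschitz tail bound in each case. None of the four parts requires a new idea; the only delicate item is tracking constants, most notably in the second half of part~(3).

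For part~(1), I will fix $x$ and use that $w_j^\T x / \|x\|$ is standard Gaussian, so each indicator $\1[|w_j^\T x| \leq \tau\|x\|]$ is Bernoulli with parameter at most $\tau$ (via the density bound $\phi(0) = 1/\sqrt{2\pi} \leq 1/2$, so $\Pr[|Z|\leq \tau] \leq 2\tau\phi(0) \leq \tau$). The count $N := \sum_j \1[|w_j^\T x|\leq \tau\|x\|]$ is then a sum of independent Bernoullis with mean and variance at most $m\tau$, and Bernstein's inequality yields a deviation of order $\sqrt{m\tau\ln(1/\delta)}$; inflating the constant to $\sqrt{8 m\tau \ln(1/\delta)}$ comfortably absorbs the lower-order linear-in-$\ln(1/\delta)$ Bernstein correction, which is what the stated $3\delta$ slack pays for. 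For part~(4), $w\mapsto \|w\|$ is $1$-Lipschitz on $\R^d$ and $\bbE \|w\| \leq \sqrt{\bbE \|w\|^2} = \sqrt{d}$ by Jensen, so Gaussian Lipschitz concentration (a single application) gives the stated bound.

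For part~(2), I will invoke the Gordon-type mean bound $\bbE \|W\|_2 \leq \sqrt{m}+\sqrt{d}$ and use that $W\mapsto \|W\|_2$ is $1$-Lipschitz in the Frobenius norm (since spectral norm is dominated by Frobenius norm), so Gaussian concentration gives $\Pr[\|W\|_2 \geq \sqrt{m}+\sqrt{d}+t] \leq \exp(-t^2/2)$, and setting $t = \sqrt{2\ln(1/\delta)}$ closes the bound. Part~(3) requires slightly more care. The entrywise computation $\bbE \|\srelu(Wz)\|^2 = \sum_j \bbE \srelu(w_j^\T z)^2 = m\|z\|^2/2$ gives the mean upper bound by Jensen, and the $\|z\|$-Lipschitzness of $W \mapsto \|\srelu(Wz)\|$ in the Frobenius norm (using $\srelu$ is $1$-Lipschitz and $\|(W - W')z\| \leq \|W - W'\|\cdot \|z\|$) combined with Gaussian concentration yields the two-sided $\|z\|\sqrt{2\ln(1/\delta)}$ deviation. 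For the lower bound on the mean, I will use the variance estimate $\mathrm{Var}(\|\srelu(Wz)\|) \leq \|z\|^2$ that follows from this same Lipschitz property (Gaussian Poincar\'e), so $\bbE \|\srelu(Wz)\| \geq \sqrt{m\|z\|^2/2 - \|z\|^2} = \|z\|\sqrt{m/2 - 1}$, and elementary algebra then relaxes this to $\|z\|(\sqrt{m/2} - 5/\sqrt{8m})$.

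The main obstacle, as minor as it is, lies in this last relaxation in part~(3): landing precisely on the constant $5/\sqrt{8m}$ requires a careful arithmetic manipulation of $\sqrt{m/2 - 1}$ around $\sqrt{m/2}$, and mild assumptions on $m$ to make the bound meaningful. Everything else is essentially textbook Gaussian concentration.
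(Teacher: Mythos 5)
Parts (2)--(4) of your plan are correct and essentially coincide with the paper's proof: part (2) is exactly the standard Davidson--Szarek bound (the paper just cites it, you sketch its standard proof), and part (4) is the same one-line Lipschitz concentration argument. In part (3) the upper bound on the mean and the two-sided $\|z\|\sqrt{2\ln(1/\delta)}$ deviations match the paper, but your lower bound on the mean takes a genuinely different and arguably cleaner route: the paper expands $\sqrt{\cdot}$ to second order around $1$ and computes the fourth moment of $\srelu$ of a Gaussian to get $\bbE\|\srelu(Wz)\|\geq \|z\|\sqrt{m/2}\,(1-5/(2m))$, whereas you use $(\bbE X)^2=\bbE X^2-\mathrm{Var}(X)$ together with the Gaussian Poincar\'e bound $\mathrm{Var}(\|\srelu(Wz)\|)\leq\|z\|^2$, giving $\|z\|\sqrt{m/2-1}$. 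This does recover the stated constant, since for $m\geq 2$ one has $\sqrt{m/2}-\sqrt{m/2-1}\leq 1/\sqrt{m/2}=\sqrt{2}/\sqrt{m}\leq 5/\sqrt{8m}$, and for $m=1$ the claimed lower bound is negative and hence vacuous; it would be worth writing out this last relaxation and the $m\in\{1,2\}$ edge cases explicitly.

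The one genuine soft spot is part (1). Bernstein's inequality gives, with probability $1-\delta$, the bound $m\tau+\sqrt{2m\tau\ln(1/\delta)}+\frac{2}{3}\ln(1/\delta)$, and the additive $\frac{2}{3}\ln(1/\delta)$ term is \emph{not} absorbed by inflating $\sqrt{2}$ to $\sqrt{8}$ in general: when $m\tau$ is small compared to $\ln(1/\delta)$, the linear term dominates $\sqrt{8m\tau\ln(1/\delta)}$, so your argument does not yield the inequality exactly as stated in that regime. Moreover, the $3\delta$ is not a ``slack'' available to pay for this absorption; in the paper it is simply the prefactor of the cited multiplicative Chernoff (master tail) bound \citep[Theorem 12.6]{blum_hopcroft_kannan}, whose statement directly has deviation $\sqrt{8mp\ln(1/\delta)}$ with failure probability $3\delta$ (subject to that theorem's own mild regime condition). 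To repair your route, either cite such a bound as the paper does, or state the condition $\ln(1/\delta)\leq c\,m\tau$ under which the Bernstein correction can be folded into the $\sqrt{8m\tau\ln(1/\delta)}$ term (or retain the extra additive $\ln(1/\delta)$ term and propagate it).
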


\begin{proof}
  \begin{enumerate}
    \item
      For any row $j$, define an indicator random variable
      \[
        P_j := \1[ |w_j^\T x| \leq \tau\|x\| ].
      \]
By rotational invariance, $P_j = \1[ |w_{j,1}| \leq \tau ]$,
      which by the form of the Gaussian density gives
      \[
        \Pr[P_j = 1] \leq \frac {2\tau}{\sqrt{2\pi}} \leq \tau.
      \]
      As such, by a multiplicative Chernoff bound \citep[Theorem 12.6]{blum_hopcroft_kannan},
      with probability at least $1-3\delta$,
      \[
        \sum_{j=1}^m P_j
        \leq
        m \Pr[P_1 = 1]
        + \sqrt{8 m \Pr[P_1=1] \ln(1/\delta)}
        \leq
        m \tau
        + \sqrt{8 m \tau \ln(1/\delta)},
        \]
        as desired.

    \item
      This is a standard spectral norm concentration bound for Gaussian matrices
      \citep[Theorem II.13]{spectral_concentration},

    \item
      For the expectation, first note for a single row $w^\T$ by rotational invariance
      of the Gaussian that
      \[
        \bbE \srelu(w^\T x)^2 = \|x\|^2 \bbE \srelu(w_1)^2
        = \frac 1 2 \|x\|^2 \bbE w_1^2
        = \frac {\|x\|^2}{2}.
      \]
      As such, for a full matrix $W$, the expected norm can be upper bounded via
      \[
        \bbE \|\srelu(Wx)\|
        \leq
        \sqrt{ \bbE \|\srelu(Wx)\|^2 }
        =
        \sqrt{\frac 1 2 \sum_{i=1}^m \|x\|^2 }
        = \|x\|\sqrt{m/2},
      \]
      and by a second-order lower bound, letting $\tilde x = x/\|x\|$ for convenience,
      and dividing through by $\sqrt{m/2}$ to ease notation,
      \begin{align*}
        \bbE \sqrt{ 2 \|\srelu(Wx)\|^2/m }
        &=
        \|x\|\bbE \sqrt{ 2 \|\srelu(W\tilde x)\|^2/m }
        \\
        &\geq \|x\| \bbE \del{1 + (2\|\srelu(W\tilde x)\|^2/m - 1)/2 - (2\|\srelu(W\tilde x)\|^2/m - 1)^2/2}
        \\
        &= \|x\| \del{1 - \bbE (2\|\srelu(W\tilde x)\|^2/m - 1)^2/2}
        \\
        &= \|x\| \del{\frac 3 2 - \frac {m(m-1)}{2m^2} - \frac {6m}{2m^2} }
        \\
        &= \|x\| \del{ 1 - \frac 5 {2m} }.
      \end{align*}
For the concentration part, note firstly that $\srelu$ is $\ell_2$-Lipschitz
      when applied coordinate-wise, since
      \[
        \| \srelu(u) - \srelu(v)\|^2
        = \sum_{i=1}^m (\srelu(u_i) - \srelu(v_i))^2
        \leq \sum_{i=1}^m (u_i - v_i)^2
        = \|u-v\|^2,
      \]
      and thus
      \[
        \|\srelu(Ax)\| - \|\srelu(Bx)\|
        \leq
        \enVert{\srelu(Ax) - \srelu(Bx) }
        \leq
        \enVert{Ax - Bx }
        \leq
        \enVert{A-B}\|x\|,
      \]
      and thus by standard Gaussian concentration, with probability at least $1-\delta$,
      \[
        \|\srelu(Wx)\| - \bbE \|\srelu(Wx)\|
        \leq
        \|x\| \sqrt{2\ln(1/\delta)},
      \]
      and vice versa.

    \item
      This is a subset of the preceding proof: $w \mapsto \|w\|$ is $1$-Lipschitz,
      thus by standard Gaussian concentration, with probability at least $1-\delta$,
      \[
        \|w\| \leq \bbE \|w\| + \sqrt{2\ln(1/\delta)},
      \]
      where $\bbE\|w\| \leq \sqrt {\bbE \|w\|^2} = \sqrt{d}$.

  \end{enumerate}
\end{proof}

Next, finally, the control over the sphere, \Cref{fact:covc:again}.
This lemma perhaps looks a bit underwhelming or simply abstract or overly complicated,
but is a key tool in many steps of the proofs here; in particular, since it allows consideration
for all $\|x\|\leq 1$, it may be applied over the distribution.
This consideration over the entire sphere contrasts this lemma (and its applications)
from similar inequalities in prior work \citep{allen_deep_opt,gu_polylog}.

\begin{lemma}\label{fact:covc:again}
  Let scalars $R_V \geq 0$, and $\eps \in (0,1/(md))$, and $m \geq \ln(edm)$ be given,
  along with a filter set $\cS_0 \subseteq \R^{m\times d}$,
  and define $\cS := \cS_0 \cap \{ V\in\R^{m\times d} : \|V-W_0\|\leq R_V \}$.
  Let a function $h_V:\R^d\to\R$ be given with parameter $V\in\cS$,
  and define functions
  \[
    \cH := \cbr{x\mapsto h_V(x) + \ip{\nf(x;W_0)}{V-W_0} : V\in\cS}.
  \]
  Moreover, let additional scalars $r_1,r_2,\delta$ satisfy the following conditions.
  \begin{enumerate}
    \item
      For every $x$ and $z$ with $\|x-z\|\leq \eps$,
      then $\sup_{V\in\cS}|h_V(x) - h_V(z)| \leq r_1$.

    \item
      For any fixed $\|x\|\leq 1$, with probability at least $1-\delta$,
then $\sup_{h\in\cH}|h(x)|\leq r_2$.
  \end{enumerate}
  Then with probability at least $1 - (\sqrt{d}/\eps)^d\delta $,
  \[
    \sup_{\|x\|\leq 1}  \sup_{h\in\cH} |h(x)|
    \leq
r_2 + r_1
    +
    11 R_V \rho \del{\frac{\ln(edm/\delta)}{m}}^{1/4}.
\]
\end{lemma}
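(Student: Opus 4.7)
The plan is a covering-argument extension of the pointwise bound in condition~2 to the whole ball, combined with a smoothness-like control on the ReLU-based feature map $\nabla f(\cdot;W_0)$.

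First I build an $\eps$-cover $C\subset\{x:\|x\|\leq 1\}$ of cardinality $|C|\leq(\sqrt{d}/\eps)^d$ (e.g., the intersection of $[-1,1]^d$ with an $\ell_\infty$-grid of mesh $\eps/\sqrt{d}$). Applying condition~2 at each $z\in C$ and union-bounding gives, with probability at least $1-|C|\delta$, that $\sup_{h\in\cH}|h(z)|\leq r_2$ simultaneously for every $z\in C$. For arbitrary $\|x\|\leq 1$, pick $z\in C$ within $\eps$ and decompose
$$h(x)-h(z) = \bigl[h_V(x)-h_V(z)\bigr] + \ip{\nabla f(x;W_0)-\nabla f(z;W_0)}{V-W_0}.$$
Condition~1 bounds the first bracket by $r_1$ uniformly in $V\in\cS$, and Cauchy--Schwarz together with $\|V-W_0\|\leq R_V$ bounds the second by $R_V\|\nabla f(x;W_0)-\nabla f(z;W_0)\|$; the task then reduces to controlling this norm uniformly by the target $11\rho(\ln(edm/\delta)/m)^{1/4}$.

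The main obstacle is precisely this gradient-difference bound, because $\nabla f(\cdot;W_0)$ is discontinuous in $x$ through the ReLU indicators $\1[w_{0,j}^\T x\geq 0]$, so no ordinary Lipschitz argument works. Expanding,
$$\|\nabla f(x;W_0)-\nabla f(z;W_0)\|^2 = \frac{\rho^2}{m}\sum_{j=1}^m\bigl\|\1[w_{0,j}^\T x\geq 0]\,x-\1[w_{0,j}^\T z\geq 0]\,z\bigr\|^2,$$
rows with agreeing indicators contribute at most $\|x-z\|^2\leq\eps^2$, while disagreeing rows contribute at most $1$. Disagreement forces the hyperplane $\{y:w_{0,j}^\T y=0\}$ to cross the segment $[x,z]$, hence $|w_{0,j}^\T z|\leq\|w_{0,j}\|\eps$. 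The count of such rows is a Gaussian small-ball event, controllable by part~1 of \Cref{fact:gaussians:1} with $\tau=r_{\max}\eps$, where $r_{\max}:=\max_j\|w_{0,j}\|\lesssim\sqrt{d\ln(m/\delta)}$ by part~4 (after union bounding over the $m$ rows). Union-bounded over $C$, the per-$z$ disagreement count is at most $m r_{\max}\eps + \sqrt{8 m r_{\max}\eps\,\ln(|C|/\delta)}$, and substituting $\eps\leq 1/(md)$ together with $\ln|C|\lesssim d\ln(edm)$ and taking square roots in the expanded norm formula delivers the desired scaling.

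Assembling the three pieces yields $\sup_{\|x\|\leq 1}\sup_{h\in\cH}|h(x)|\leq r_2+r_1+11R_V\rho(\ln(edm/\delta)/m)^{1/4}$. The three failure events — condition~2 over $C$, the row-norm uniform bound, and the anti-concentration union bound over $C$ — all collapse into the single $(\sqrt{d}/\eps)^d\delta$ by rescaling $\delta$ by a universal constant. The truly delicate step is the gradient-difference calculation: its $m^{-1/4}$ scaling arises from interleaving the small-ball probability $\Pr[|w^\T z|\lesssim\tau]\approx\tau$, the $\sqrt{\ln|C|}$ factor from union bounding over the cover, the upper bound $r_{\max}\sim\sqrt{d\ln(m/\delta)}$ on the row norms, and the $1/\sqrt{m}$ normalization of the shallow network, and it is the hypothesis $\eps\leq 1/(md)$ that makes all these pieces collapse cleanly into the stated bound.
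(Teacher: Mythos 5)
Your overall architecture coincides with the paper's: an $\eps$-cover of the ball of cardinality $(\sqrt d/\eps)^d$, a union bound of the pointwise hypothesis over the cover, the decomposition $h(x)-h(z)=\bigl[h_V(x)-h_V(z)\bigr]+\ip{\nabla f(x;W_0)-\nabla f(z;W_0)}{V-W_0}$, Cauchy--Schwarz against $\|V-W_0\|\le R_V$, and a high-probability uniform bound on $\|\nabla f(x;W_0)-\nabla f(z;W_0)\|$ over each local ball obtained by counting rows whose ReLU indicator can flip; that last estimate is exactly the paper's \Cref{fact:covc:helper}, union bounded over the cover. Your route through it is more direct than the paper's, which wraps the local supremum in McDiarmid's inequality, rotates $W_0$ to align a coordinate with $z$, and splits each row's flip event into $E_{j,1}\cup E_{j,2}$ with the arccosine estimate of \Cref{fact:acos}; your observation that a flip forces the hyperplane of $w_{0,j}$ to cross the segment $[x,z]$, hence $|w_{0,j}^\T z|\le\|w_{0,j}\|\eps$, correctly collapses the supremum over uncountably many $x$ to an event depending only on $(w_{0,j},z)$, after which a Chernoff count over rows is the right tool.

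The gap is in the small-ball step. Part 1 of \Cref{fact:gaussians:1} counts rows with $|w_j^\T z|\le\tau\|z\|$ --- the threshold is relative to $\|z\|$ --- and the probability of your event $\{|w_{0,j}^\T z|\le\|w_{0,j}\|\eps\}$ scales like $\eps\|w_{0,j}\|/\|z\|$, not like $\eps\|w_{0,j}\|$; writing ``$\tau=r_{\max}\eps$'' silently assumes $\|z\|$ of order $1$. For cover points with $\|z\|$ comparable to $\eps$ or smaller the event is not rare at all (for $z$ near the origin essentially every row can flip), so the claimed count fails there. This is precisely why the paper's proof of \Cref{fact:covc:helper} splits into the case $\|z\|\le 3\sqrt\eps$, where no counting is needed since every row contributes only $O(\eps)$ to the squared norm, and the case $\|z\|>3\sqrt\eps$, where the bound $z^\T x/\|z\|\ge\sqrt\eps$ yields an effective per-row threshold of order $r\sqrt{\eps}$ rather than $r\eps$; it is this $\sqrt\eps\asymp(md)^{-1/2}$ that, after the Chernoff count and the $\rho^2/m$ normalization, produces the $m^{-1/4}$ rate, so your accounting of where the rate comes from needs to be adjusted as well. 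A secondary issue: $r_{\max}=\max_j\|w_{0,j}\|$ is random and cannot be substituted for the deterministic parameter $\tau$ in \Cref{fact:gaussians:1}; one needs a fixed threshold $t$ and the per-row union $\{|w_{0,j}^\T z|\le t\eps\}\cup\{\|w_{0,j}\|>t\}$, which is in effect the paper's $E_{j,1}\cup E_{j,2}$ decomposition. Both repairs are available, but as written the key gradient-difference estimate is not established.
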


The proof of \Cref{fact:covc:again} will need two technical lemmas.
The first is a basic property of inner products and arccosine which
also makes a later appearance in \Cref{fact:shallow:barUi:sample}.

\begin{lemma}
  \label{fact:acos}
  If $\|x-z\| \leq \eps$ and $x, z \neq 0$, then
  \[
    1 \geq \ip{\frac {x}{\|x\|}}{\frac {z}{\|z\|}} \geq 1-\frac{2\eps^2}{\|x\|^2},
\qquad\text{and}\qquad
\arccos\del{\ip{\frac {x}{\|x\|}}{\frac {z}{\|z\|}}} \leq \frac{\eps\sqrt{8}}{\|x\|}.
  \]
\end{lemma}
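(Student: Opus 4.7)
The plan is to handle the inner-product sandwich first, and then bootstrap to the arccosine bound via the chord--arc identity. Throughout, abbreviate $u := x/\|x\|$, $v := z/\|z\|$, and $\theta := \arccos(\langle u, v \rangle) \in [0, \pi]$.

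The upper bound $\langle u, v \rangle \leq 1$ is immediate from Cauchy--Schwarz. For the lower bound, it is cleanest to control the chord $\|u - v\|$, since polarization gives $\|u - v\|^2 = 2 - 2\langle u, v\rangle$. Clearing denominators,
\[
  \|u - v\| \;=\; \frac{\|\,\|z\|\,x - \|x\|\,z\|}{\|x\|\,\|z\|},
\]
and the algebraic identity $\|z\|\,x - \|x\|\,z = \|z\|(x - z) + (\|z\| - \|x\|)\,z$, combined with the triangle inequality and the reverse triangle inequality $|\,\|x\| - \|z\|\,| \leq \|x - z\| \leq \eps$, yields
\[
  \|u - v\| \;\leq\; \frac{\|z\|\,\eps + \eps\,\|z\|}{\|x\|\,\|z\|} \;=\; \frac{2\eps}{\|x\|}.
\]
Squaring and substituting back gives $\langle u, v \rangle \geq 1 - 2\eps^2/\|x\|^2$, which is the first claim.

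For the arccosine bound, I would invoke the chord--arc identity $\|u - v\| = 2\sin(\theta/2)$, which together with the chord bound just obtained gives $\sin(\theta/2) \leq \eps/\|x\|$. The remaining step is to convert this sine estimate into an estimate on $\theta$ itself via an elementary analytic inequality of the form $\arccos(y) \leq C\sqrt{1 - y}$ on $[-1, 1]$, which one verifies either by comparing derivatives of the two sides, or equivalently from $\sin(\alpha) \geq (2/\pi)\alpha$ on $[0, \pi/2]$ after setting $\alpha = \theta/2$. Applied to $y = \langle u, v\rangle$ with $1 - y \leq 2\eps^2/\|x\|^2$ from the first claim, this produces the desired $\theta \leq \sqrt{8}\,\eps/\|x\|$.

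The whole argument is only a few lines of algebra and trigonometry; the one place requiring a bit of care is the decomposition of $\|z\|\,x - \|x\|\,z$, which must be arranged so that the triangle inequality delivers the clean numerator $2\eps\|z\|$ (allowing the $\|z\|$ to cancel against the denominator and leave the $\|x\|$-only bound), rather than a less convenient expression involving $\max(\|x\|, \|z\|)$.
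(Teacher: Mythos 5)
Your first inequality is proved correctly and by essentially the paper's route: you use the same decomposition $\|z\|x-\|x\|z=\|z\|(x-z)+(\|z\|-\|x\|)z$ together with the (reverse) triangle inequality, just packaged through the chord $\|u-v\|$ and the polarization identity rather than expanding the square directly; the two computations are interchangeable.

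The gap is in the conversion step for the arccosine bound. The two justifications you offer for ``$\arccos(y)\leq C\sqrt{1-y}$ with the constant you need'' do not deliver $\sqrt{8}$. First, Jordan's inequality $\sin\alpha\geq(2/\pi)\alpha$ applied at $\alpha=\theta/2$ gives only $\theta\leq\pi\sin(\theta/2)\leq\pi\,\eps/\|x\|$, and $\pi>\sqrt{8}$, so this route is not equivalent to the claimed comparison inequality and falls short of the stated constant. Second, the inequality you actually need, $\arccos(y)\leq 2\sqrt{1-y}$ (equivalently $\theta\leq 2\sqrt{2}\,\sin(\theta/2)$), is false on all of $[-1,1]$: at $y=-1$ the left side is $\pi$ while the right side is $2\sqrt{2}<\pi$, and a derivative comparison shows $h(y)=2\sqrt{1-y}-\arccos(y)$ is increasing on $(-1,0)$ with $h(-1)<0$, so the claim fails near $y=-1$. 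What the derivative comparison does give is the inequality on $y\in[0,1]$ (there $h'\leq 0$ and $h(1)=0$), i.e.\ precisely the paper's bound $\arccos(1-a)\leq 2\sqrt{a}$ for $a\in[0,1]$, obtained there by bounding the integrand of $\int_0^a dr/\sqrt{2r-r^2}$. So the fix is to restrict the comparison to $y\geq 0$, equivalently to the regime $2\eps^2/\|x\|^2\leq 1$ — which is the regime in which the paper's own proof operates and which its downstream applications (e.g.\ $\|x\|\geq 2\sqrt{\eps}$ or $\|x\|>2\eps$) guarantee — rather than asserting the inequality on the full interval. As written, the key quantitative step of your argument would not produce the constant $\sqrt{8}$.
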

\begin{proof}
The first inequalities follow from
  \begin{align*}
    1
    &\geq \ip{\frac {x}{\|x\|}}{\frac {z}{\|z\|}}
    \\
    &=
    1 - \frac 1 2 \enVert{ \frac {x}{\|x\|} - \frac {z}{\|z\|} }^2
    \\
    &=
    1 - \frac 1 {2\|x\|^2\|z\|^2} \enVert{ x\|z\| - z\|z\|  + z(\|z\|-\|x\|) }^2
    \\
    &\geq
    1 - \frac {\|x-z\|^2\|z\|^2 + \|z\|^2 (\|z\|-\|x\|)^2} {\|x\|^2\|z\|^2}
    \\
    &\geq
    1 - \frac {2\|x-z\|^2\|z\|^2} {\|x\|^2\|z\|^2}
    \\
    &\geq
    1 - \frac {2\eps^2}{\|x\|^2}.
  \end{align*}
  To finish, since $\arccos$ is decreasing along $[0,1]$, and since
  for any $a\in[0,1]$,
  \[
    \arccos(1-a)
    = \int_{1-a}^1 \frac {\dif r}{\sqrt{1-r^2}}
    = \int_{0}^a \frac {\dif r}{\sqrt{2r-r^2}}
    \leq \int_{0}^a \frac {\dif r}{\sqrt{r}}
    = 2\sqrt{a},
  \]
  then
  \[
    \arccos\del{\ip{\frac {x}{\|x\|}}{\frac {z}{\|z\|}}}
    \leq
    \arccos\del{1-\frac {2\eps^2}{\|x\|^2}}
    \leq 2\sqrt{\frac {2\eps^2}{\|x\|^2}}
    = \frac {\eps\sqrt{8}}{\|x\|}.
  \]
\end{proof}

The main heavy lifting in \Cref{fact:covc:again} is encapsulated in the following
concentration inequality.  In words, it controls the behavior of the initial features
within a tiny localized region of the sphere;
the proof of \Cref{fact:covc:again} combines this local control with a discrete cover
of the sphere, together giving control over the entire sphere.

\begin{lemma}\label{fact:covc:helper}
  Let any fixed $\|z\|\leq 1$ be given (independent of $W_0$),
  along with a scalar $\eps>0$ with $\eps \leq 1/(dm)$,
  where $m\geq \ln(edm)$.
  Then, with probability at least $1-\delta$,
  \[
    \sup_{\substack{\|x-z\|\leq \eps\\\|x\|\leq 1}} \|\nf(x;W_0) - \nf(z;W_0)\|^2
    \leq
113 \rho^2 \sqrt{\frac{\ln(edm/\delta)}{m}}.
  \]
\end{lemma}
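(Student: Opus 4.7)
The plan is to expand the squared Frobenius norm row-by-row, observe that a row only contributes a large amount when its ReLU activation ``flips'' between $x$ and $z$, and then use the anti-concentration inequality in \Cref{fact:gaussians:1}(1) to bound the number of such flipped rows.

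First I would write out, using the definition of $\nabla f(\cdot;W_0)$,
\[
\|\nabla f(x;W_0) - \nabla f(z;W_0)\|^2
= \frac{\rho^2}{m}\sum_{j=1}^m \bigl\|\mathbf{1}[w_{0,j}^\T x\geq 0]\,x - \mathbf{1}[w_{0,j}^\T z\geq 0]\,z\bigr\|^2.
\]
If the two indicators agree and equal $1$, the $j$-th summand is $\|x-z\|^2\leq \eps^2$; if both vanish, it is $0$; otherwise (a \emph{flip}), it is at most $\max\{\|x\|^2,\|z\|^2\}\leq 1$. Letting $N$ count the flips achievable by some $x$ in the $\eps$-ball, this yields $\|\nabla f(x;W_0)-\nabla f(z;W_0)\|^2 \leq \rho^2\eps^2 + \rho^2 N/m$.

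A flip at row $j$ requires $|w_{0,j}^\T z|\leq \|w_{0,j}\|\eps$, since any $x$ with $\|x-z\|\leq \eps$ satisfies $|w_{0,j}^\T x - w_{0,j}^\T z|\leq \|w_{0,j}\|\eps$. Applying \Cref{fact:gaussians:1}(4) and union-bounding over the $m$ rows, with probability at least $1-\delta/2$ every row obeys $\|w_{0,j}\|\leq R_w := \sqrt d + \sqrt{2\ln(2m/\delta)}$, so a flip forces $|w_{0,j}^\T z|\leq R_w\eps$. I would then split on $\|z\|$. If $\|z\|\leq 1/\sqrt{md}$, then $\|x\|\leq \|z\|+\eps\leq 2/\sqrt{md}$, so every row contributes at most $4/(md)$ deterministically, giving $4\rho^2/(md)$ which is well inside the target. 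Otherwise, I would invoke \Cref{fact:gaussians:1}(1) with $\tau := R_w\eps/\|z\|$ (so that $\tau\|z\|= R_w\eps$ captures the flip condition), obtaining with probability at least $1-3\delta/2$ that $N \leq m\tau + \sqrt{8m\tau\ln(2/\delta)}$.

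Plugging in $\eps\leq 1/(md)$, $R_w = O(\sqrt{d\ln(em/\delta)})$, and $\|z\|\geq 1/\sqrt{md}$ gives $\tau = O(\sqrt{\ln(em/\delta)/(md)})$, hence $m\tau = O(\sqrt{m\ln/d})$ and $\sqrt{8m\tau\ln(2/\delta)} = O(\ln^{3/4}m^{1/4}/d^{1/4})$; the hypothesis $m\geq \ln(edm)$ forces both to be $O(\sqrt{m\ln(edm/\delta)})$, so $N/m = O(\sqrt{\ln(edm/\delta)/m})$. Combining with the trivial $\rho^2\eps^2\leq \rho^2/(md)^2$ term yields the claimed inequality after tracking constants (which explains the $113$). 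The main obstacle is purely arithmetic: keeping the $\eps^2$, the leading $m\tau$, and the $\sqrt{m\tau\ln}$ contributions all below the $\sqrt{\ln(edm/\delta)/m}$ budget using only $\eps\leq 1/(md)$ and $m\geq \ln(edm)$; the case split at $\|z\|\asymp 1/\sqrt{md}$ is needed precisely because the per-row flip probability blows up when $\|z\|\to 0$, and the deterministic small-norm bound rescues that regime.
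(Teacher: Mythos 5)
Your proposal is correct, but it takes a genuinely different and more elementary route than the paper. The paper first applies McDiarmid's inequality to the supremum (bounded differences of $\rho^2/m$ per row), and then bounds the resulting expectation by rotating $W_0$ so that the first coordinate aligns with $z$, splitting each row's ``flip'' event into the two events $E_{j,1},E_{j,2}$ controlled via the Gaussian density, the perpendicular-component bound, and the arccosine estimate of \Cref{fact:acos}, followed by a Chernoff count and a conditional-expectation decomposition. You instead bound the supremum directly on a single high-probability event: the crude but uniform necessary condition $|w_{0,j}^\T z|\leq \|w_{0,j}\|\,\eps$ for a row to flip anywhere in the $\eps$-ball, combined with the uniform row-norm bound from \Cref{fact:gaussians:1}(4) and the counting inequality of \Cref{fact:gaussians:1}(1) applied at the fixed point $z$ with deterministic $\tau=R_w\eps/\|z\|$, plus the deterministic small-$\|z\|$ case. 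This is sound: the flip count $N$ is a function of $(W_0,z)$ only, so the bound $\rho^2\eps^2+\rho^2 N/m$ is genuinely uniform over the ball, and with $\eps\leq 1/(md)$ the cruder Cauchy--Schwarz flip criterion loses nothing at the level of the final $\sqrt{\ln(edm/\delta)/m}$ rate. What you give up relative to the paper is purely in bookkeeping: your argument spends $2\delta$ of failure probability rather than $\delta$ (a harmless rescale inside the logarithm), and because $\ln(1/\delta)$ enters both through $R_w$ and through the Chernoff deviation term, your bound degrades to order $(\ln(em/\delta))^{3/4}m^{-3/4}$ when $\ln(1/\delta)\gg m$; in that regime you must fall back on the trivial bound $\sup_x\|\nf(x;W_0)-\nf(z;W_0)\|^2\leq 4\rho^2\leq 113\rho^2\sqrt{\ln(edm/\delta)/m}$, whereas the paper's McDiarmid route keeps the clean $\sqrt{\ln(1/\delta)/m}$ dependence for all $\delta$. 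With that one-sentence patch (and fixing the minor intermediate arithmetic slips, e.g.\ $\tau=O(\sqrt{\ln(em/\delta)/m})$ rather than $O(\sqrt{\ln(em/\delta)/(md)})$, which do not affect the conclusion), your argument proves the stated inequality, in fact with a smaller constant in the main regime, while avoiding McDiarmid, the rotation argument, and \Cref{fact:acos} entirely.
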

\begin{proof}
  Throughout the proof, simplify notation via $W := W_0$, and let
  $(w_j^\T)_{j=1}^m$ denote the rows of $W$,
  and furthermore write
  \[
    g(x,z;w) := \frac {\rho^2}{m} \enVert{ x\1[w_j^\T x\geq 0] - z\1[w_j^\T z\geq 0] }^2.
  \]
  Lastly, for any $x\in\R^d$ under consideration, then $\|x\|\leq 1$, so this condition will often
  be implicit.
  Note that
  \begin{align*}
    \sup_{\|x-z\|\leq \eps} \|\nf(x;W) - \nf(z;W)\|^2
    &=
    \frac {\rho^2}{m}
    \sup_{\|x-z\|\leq \eps} \sum_{j=1}^m \|x\1[w_j^\T x\geq 0] - z \1[w_j^\T z\geq 0]\|^2
    \\
    &=
    \sup_{\|x-z\|\leq \eps} \sum_{j=1}^m g(x,z;w_j).
  \end{align*}
  Next note that this quantity, treated as a function of the $m$ rows of $W$,
  satisfies bounded differences with constant $\rho^2/m$:
  letting $W'$ be a copy of $W$ which differs only in a single row $w_i'$,
  and noting $g\geq 0$,
  \begin{align*}
    &
    \envert{
      \sup_{\|x-z\|\leq \eps} \sum_{j=1}^m g(x,z;w_j)
      - 
      \sup_{\|x-z\|\leq \eps} \sum_{j=1}^m g(x,z;w_j')
    }
    \\
    &=
    \envert{
      \sup_{\|x-z\|\leq \eps} \sum_{j=1}^m g(x,z;w_j)
      - 
      \sup_{\|x-z\|\leq \eps}\del[2]{g(x,z;w_i) - g(x,z;w_i) + \sum_{j=1}^m g(x,z;w_j')}
    }
    \\
    &\leq
    \sup_{\|x-z\|\leq \eps}
    \envert{
      g(x,z;w_i') - g(x,z;w_i)
    }
    \leq
    \frac {\rho^2}{m}.
  \end{align*}
  As such, by McDiarmid's inequality, with probability at least $1-\delta$,
  \begin{align}
    \sup_{\|x-z\|\leq \eps} \|\nf(x;W) - \nf(z;W)\|^2
    &\leq
    \sqrt{\rho^4 \ln(1/\delta)/(2m)}
    \notag\\
    &\quad +
    \bbE_W
    \sup_{\|x-z\|\leq \eps} \|\nf(x;W) - \nf(z;W)\|^2.
    \label{eq:blah:1}
  \end{align}

  It remains to analyze this expectation.
  First consider the case that $\|z\|\leq 3\sqrt{\eps}$; then, for any $W$,
  \begin{align}
    \sup_{\|x-z\|\leq \eps} \|\nf(x;W) - \nf(z;W)\|^2
&
    =
    \frac {\rho^2}{m}
    \sup_{\|x-z\|\leq \eps} \sum_{j=1}^m \|x\1[w_j^\T x\geq 0] - z \1[w_j^\T z\geq 0]\|^2
    \notag\\
    &\leq
    \frac {2\rho^2}{m}
    \sup_{\|x-z\|\leq \eps} \sum_{j=1}^m \del{\|x\|^2 + \|z\|^2}
    \notag\\
    &\leq
    \frac {2\rho^2}{m}
    \sum_{j=1}^m \del{16\eps + 9\eps}
    \leq
    50 \eps \rho^2.
    \label{eq:blah:2}
  \end{align}
  For the rest of the proof, suppose $\|z\| > 3\sqrt{\eps}$, which also implies
  $\|x\|> 2\sqrt{\eps}$ for every $x$ satisfying $\|x-z\|\leq \eps$.

  Since $z$ is fixed, and in particular does not
  depend on $W$, we may use the rotational invariance of $W$ to leverage the condition
  $\|x-z\|\leq \eps$.  Specifically, define a matrix $M\in\R^{d\times d}$
  whose first column is $z/\|z\|$, and the remaining columns are orthonormal
  (we can not use $x$ in the definition of $M$, since $x$ varies within the expectation).
  Defining (for any $x$) the two projections
  $x_z := zx^\T z/\|z\|^2$ and $x^\perp := x - x_z$ (whereby $z^\T x^\perp = 0$),
  we may rotate the rows of $W$ by $M$, giving
  \begin{align*}
    \1\sbr{ (Mw_j)^\T z \geq 0 }
    &= \1\sbr{ w_{j,1} \|z\| \geq 0 }
    \\
    &
    = \1\sbr{ w_{j,1}\geq 0 },
    \\
    \1\sbr{ (Mw_j)^\T x \geq 0 }
    &= 
    \1\sbr{ w_j^\T M^\T (x_z + x^\perp) \geq 0 }
    \\
    &=\1\sbr{ w_{j,1} z^\T x /\|z\| \geq - w_j^\T M^\T x^\perp }
    \\
    &=\1\sbr{ w_{j,1}\geq - \frac {w_j^\T M^\T x^\perp}{ z^\T x /\|z\| } },
  \end{align*}
  where the last division does not change the sign due to $\|z-x\|\leq \eps$
  and $\|z\|> 3\sqrt{ \eps}$,
  for instance as verified by upcoming invocations of \Cref{fact:acos}.
  Now let $E_j$ denote the event that for this $w_j$, there exists $\|x-z\|\leq \eps$
  such that these two indicators are not equal.
  Letting $\tau>0$ denote a free parameter to be optimized later, this
  event is implied by the union of two simpler events:
  let $w_{j,2:}\in\R^{d-1}$ denote all but the first coordinate of $w_j$,
  and define
  \[
    E_{j,1} := \sbr{ |w_{j,1}| \leq \tau },
    \qquad
    E_{j,2} :=
\sbr{\sup_{\|x-z\|\leq \eps} \frac {\|w_{j,2:}\|\cdot \|x^\perp\|\cdot \|z\|}{z^\T x} > \tau };
  \]
  by construction (and Cauchy-Schwarz),
  if the negation of both events holds, then the indicators are the same.
  To upper bound the probability of the first event, by the
  form of the Gaussian density,
  \[
    \Pr[E_{j,1}] \leq \tau \sqrt{\frac  2 \pi} < \tau.
  \]
  To control the various terms in $E_{j,2}$, firstly by \Cref{fact:gaussians:1},
  with probability at least $1 - \eps$, then
  \[
    \|w_{j,2:}\| \leq \sqrt{d-1} + \sqrt{2\ln(1/\eps)} \leq \sqrt{2d - 2 + 4\ln(1/\eps)};
  \]
  this will be the only step of the derivation controlling $\Pr[E_{j,2}]$, and note that it
  depends only on $w_j$ and $z$ and not on any specific $x$.
  Next, by \Cref{fact:acos}, for any $\|x-z\|\leq \eps$, since $\|x\|\geq 2\eps$
  (whereby $2\eps^2/\|x\|^2<1$),
  \begin{align*}
    \|x^\perp\|^2
    &= \|x\|^2 - \frac {(z^\T x)^2}{\|z\|^2}
    = \|x\|^2\del{1 - \sbr{\frac {z^\T x}{\|x\|\|z\|}}^2}
    \\
    &\leq \|x\|^2\del{1 - \sbr{1 - \frac {2\eps^2}{\|x\|^2}}^2 }
    =
    4\eps^2 - \frac {4\eps^4}{\|x\|^2}
    \leq
    4\eps^2.
  \end{align*}
  Similarly by \Cref{fact:acos}, using $\eps \leq 1$,
  \begin{align*}
    \frac {z^\T x}{\|z\|} \geq \|x\| - \frac {2\eps^2}{\|x\|}
    > 2\sqrt{\eps} - \eps^{1.5}
    \geq \sqrt{\eps}.
  \end{align*}
  Combining all these pieces, with probability at least $1-\eps$,
  \[
    \frac{\|w_{j,2:}\|\cdot \|x^\perp\|\cdot \|z\|}{z^\T x}
    \leq
    \sqrt{2d - 2 + 4\ln(1/\eps)} \del{\frac{2\eps}{ \sqrt{\eps}}}
\leq 4\sqrt{d \eps \ln(e/\eps)}.
  \]
  This right hand side does not depend on the specific choice of $x$,
  and holds for any $\|x-z\|\leq \eps$.
  As such, set $\tau := 4 \sqrt{d\eps \ln(e/\eps)}$, whereby
  \[
    \Pr[E_j] \leq \Pr[E_{j,1}] + \Pr[E_{j,2}] \leq \tau + \eps.
  \]
  Moreover, by a multiplicative Chernoff bound \citep[Theorem 12.6]{blum_hopcroft_kannan},
  with probability at least $1-3\eps$,
  the events $(E_j)_{j=1}^m$
  hold for at most $m_\tau:= m(\tau + \eps) + \sqrt{8 m (\tau +\eps)\ln(1/\eps)}$ rows.
  Now let $E_\tau$ denote the event that $(E_j)_{j=1}^m$ holds for at most $m_\tau$ rows.
  Then
  \begin{align}
    &
    \bbE_W
    \sup_{\|x-z\|\leq \eps} \|\nf(x;W) - \nf(z;W)\|^2.
    \notag\\
    &=
    \bbE_W
    \sbr{
      \sup_{\|x-z\|\leq \eps} \|\nf(x;W) - \nf(z;W)\|^2
    \ | \ E_\tau } \Pr[E_\tau]
    \notag\\
    &\quad + 
    \bbE_W
    \sbr{
      \sup_{\|x-z\|\leq \eps} \|\nf(x;W) - \nf(z;W)\|^2
    \ | \ E_\tau^c } \Pr[E_\tau^c]
    \notag\\
    &\leq
    2\rho^2
    \sup_{\|x-z\|\leq \eps}
    \del{
      \frac m m \|x-z\|^2
      +
      \frac {m_\tau}{m}(\|x\|^2 + \|z\|^2)
    }
    +
    \sup_{\|x-z\|\leq \eps}
    \del{
      3\eps (\|x\|^2 +\|z\|^2)
    }
    \notag\\
    &\leq
    2\rho^2
    \del{
      \eps^2
      +
      \frac {2m_\tau}{m}
      +
      6\eps
    }.
    \label{eq:blah:3}
  \end{align}

  The proof will now be completed by returning to the McDiarmid application resulting in \cref{eq:blah:1},
  and combining all preceding bounds.
  Starting with a simplification via the assumption $\eps \leq 1 / (dm)$
  and $m\geq \ln(edm)$, note
  \begin{align*}
    \tau
    &=
    4 \sqrt{d\eps\ln(e/\eps)}
    \leq 4 \sqrt{\frac {\ln(edm)} m},
    \\
    \frac {m_\tau}{m}
    &=
    \tau + \eps + \sqrt{8 (\tau +\eps)\ln(1/\eps)/m}
    \\
    &
    \leq
    5 \sqrt{\frac {\ln(edm)} m}
    +
    \sqrt{\frac {40 \sqrt{\ln(edm)}\ln(edm)}{m^{3/2}}}
    \leq
    12 \sqrt{\frac {\ln(edm)} m}
    .
  \end{align*}
  Combining the preceding simplifications with \cref{eq:blah:2,eq:blah:3},
  continuing from the McDiarmid application in \cref{eq:blah:1},
  with probability at least $1-\delta$,
  \begin{align*}
    \sup_{\substack{\|x-z\|\leq \eps\\\|x\|\leq 1}} \|\nf(x;W_0) - \nf(z;W_0)\|^2
    &\leq
    \rho^2\del{
      \sqrt{\frac{\ln(1/\delta)}{2m}}
      + 50\eps
      +
      2
      \del{
        \eps^2
        +
        \frac {2m_\tau}{m}
        +
        6\eps
      }
    }
    \\
    &\leq
    \rho^2\del{
      \sqrt{\frac{\ln(1/\delta)}{2m}}
      + \frac{50}{md}
      +
      \frac {2}{m^2d^2}
      +
      48
      \sqrt{\frac {\ln(edm)} m}
      +
      \frac {12}{md}
    }
    \\
    &
    \leq
    113 \rho^2 \sqrt{\frac{\ln(edm/\delta)}{m}}.
  \end{align*}
\end{proof}

Finally, the proof of \Cref{fact:covc:again} via the preceding technical lemmas.

\begin{proof}[Proof of \Cref{fact:covc:again}]
  Let $\cC$ denote a cover of each coordinate of $\|x\|\leq 1$ at scale
  $\eps/\sqrt{d}$, meaning $|\cC| \leq (\sqrt{d}/\eps)^d$
  (the grid elements can be $2\eps/\sqrt{d}$ apart),
  and for any $\|x\|\leq 1$, there exists $z\in\cC$ with
  \[
    \|z-x\| =\sqrt{\sum_{i=1}^d (z_i -x_i)^2} \leq \eps.
  \]
  This cover $\cC$ will be used throughout the proof; it is crucial
  that its construction makes no reference to $W_0$, and in particular
  that the cover elements are independent of $W_0$.

  Union bound together and discard $|\cC|\delta$ failure probability so that
  for every $z\in \cC$, then $\sup_{h\in\cH} |h(z)| \leq r_2$.
  Additionally union bound together and discard $|\cC|\delta$ failure probability
  corresponding to instantiating \Cref{fact:covc:helper} for each $z\in\cC$,
  whereby
  \[
    \max_{z\in\cC}
    \sup_{\substack{\|x-z\|\leq \eps\\\|x\|\leq 1}} \|\nf(x;W_0) - \nf(z;W_0)\|^2
    \leq
113 \rho^2 \sqrt{\frac{\ln(edm/\delta)}{m}}.
  \]

  Now let an arbitrary $\|x\|\leq 1$ be given, and let $z\in\cC$ be a nearest cover
  element, whereby $\|z-x\|\leq \eps$. Then
  \begin{align*}
    \sup_{h\in\cH} |h(x)|
    &\leq
    \sup_{h\in\cH} \del[2]{ |h(z)| + |h(z) - h(x)| }
    \\
    &\leq
    r_2
    +
    \sup_{V\in\cS}
    |h_V(z) - h_V(x)|
+
    \sup_{V\in\cS}
    |\ip{\nf(x;W_0) - \nf(z;W_0)}{V-W_0}|
    \\
    &\leq
    r_2
    +
    r_1
    +
    \sup_{V\in\cS}
    \enVert{ \nf(x;W_0) - \nf(z;W_0)} \cdot \enVert{V-W_0}
\\
    &\leq
    r_2 + r_1
    +
    11 R_V \rho \del{\frac{\ln(edm/\delta)}{m}}^{1/4}.
\end{align*}
\end{proof}

As a first application of \Cref{fact:covc:again},
the range of the mappings can be bounded for all $\|x\|\leq 1$, which is
used later in the generalization analysis.

\begin{lemma}\label{fact:gaussians:2}
  Let $R_V>0$ be given.
  \begin{enumerate}

    \item
      \label{fact:gaussians:initial_mapping_boundedness}
      For any $x\in\R^d$,
      with probability at least $1-3\delta$,
      every $V\in\R^{m\times d}$ satisfies
      \[
        \envert{\ip{\nf(x;W_0)}{V}}
        \leq
        \rho\|x\|\del{\|V-W_0\|_\tF + 2\ln(1/\delta)}.
      \]

    \item
      Suppose $R_V \geq 1$ and $m\geq \ln(emd)$.
      With probability at least $1-(1 + 3(md^{3/2})^d)\delta$,
      \[
        \sup_{\|V-W_0\|\leq R_V}\sup_{\|x\|\leq 1}
        \envert{\ip{\nf(x;W_0)}{V}}
        \leq
18 R_V \rho \ln(emd/\delta).
      \]

  \end{enumerate}
\end{lemma}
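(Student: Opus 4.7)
I would decompose
\[
  \ip{\nf(x;W_0)}{V} = \ip{\nf(x;W_0)}{V-W_0} + f(x;W_0),
\]
where the identity $\ip{\nf(x;W_0)}{W_0}=f(x;W_0)$ is Euler's relation for the $1$-homogeneous map $W\mapsto f(x;W)$ (concretely, $w_{0,j}^\T x\,\1[w_{0,j}^\T x\geq 0] = \srelu(w_{0,j}^\T x)$). Cauchy--Schwarz together with the bound $\|\nf(x;W_0)\|\leq \rho\|x\|$ recorded in \Cref{sec:notation} dispatches the first summand as at most $\rho\|x\|\|V-W_0\|$. For the second, I would condition on $W_0$: then $f(x;W_0) = \frac{\rho}{\sqrt m}\sum_j a_j \srelu(w_{0,j}^\T x)$ is a centered Rademacher sum, so Hoeffding (failure $\delta$) yields $|f(x;W_0)|\leq \rho\|\srelu(W_0 x)\|\sqrt{2\ln(2/\delta)/m}$. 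Controlling $\|\srelu(W_0 x)\|$ via \Cref{fact:gaussians:1}(3) costs an additional $2\delta$ and gives $\|\srelu(W_0 x)\|\leq \|x\|\sqrt{m/2} + \|x\|\sqrt{2\ln(1/\delta)}$; substituting and loosening $\sqrt{\ln(1/\delta)}$ to $\ln(1/\delta)$ at the relevant scale produces the $2\rho\|x\|\ln(1/\delta)$ slack in the target bound, with total failure $3\delta$.

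\textbf{Plan for part 2.} I would invoke \Cref{fact:covc:again} with the choice $h_V(x):=f(x;W_0)$ (constant in $V$) and $\cS := \cbr{V : \|V-W_0\|\leq R_V}$, so that $\cH = \cbr{x\mapsto \ip{\nf(x;W_0)}{V} : V\in\cS}$. Its two hypotheses are dispatched as follows. For the local Lipschitz condition on $h_V$, the bound $|f(x;W_0)-f(z;W_0)|\leq \rho\|W_0\|_2\|x-z\|$ (from $1$-Lipschitzness of ReLU applied row-wise and Cauchy--Schwarz) combined with \Cref{fact:gaussians:1}(2), which supplies $\|W_0\|_2 = O(\sqrt m)$ with failure $\delta$, permits $r_1 = O(\rho\sqrt m\,\eps)$. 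For the pointwise sup hypothesis, I apply part 1 above at each fixed $\|x\|\leq 1$ to get $r_2 := \rho(R_V + 2\ln(1/\delta))$ with per-point failure $3\delta$. Setting the cover scale $\eps := 1/(md)$ satisfies the hypothesis of \Cref{fact:covc:again} and makes $(\sqrt d/\eps)^d = (md^{3/2})^d$; invoking \Cref{fact:covc:again} with $\delta$ replaced by $3\delta$ (to absorb part 1's failure on each cover element) contributes $3(md^{3/2})^d\delta$, and the global spectral norm step used to justify $r_1$ adds a further $\delta$, matching the stated overall $(1 + 3(md^{3/2})^d)\delta$. Finally, assembling
\[
  r_1 + r_2 + 11 R_V \rho \del{\ln(edm/\delta)/m}^{1/4}
\]
and using $R_V\geq 1$ and $m\geq\ln(emd)$ to absorb the subdominant terms (in particular $r_1 = O(\rho/(d\sqrt m))$ and the $11R_V\rho(\cdot)^{1/4}$ summand are both $O(R_V\rho)$) gives the claimed $18R_V\rho\ln(emd/\delta)$.

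\textbf{Main obstacle.} The only delicate point is threading the failure probabilities so that the prefactor $1 + 3(md^{3/2})^d$ comes out exactly as stated: the $3$ is the artifact of using part 1 at each cover element with its $3\delta$ cost, while the additive $1$ is the one-shot spectral norm bound needed globally in $x$ to deliver $r_1$. Everything else reduces to Cauchy--Schwarz, Hoeffding on a Rademacher sum, and the sphere-cover machinery already packaged in \Cref{fact:covc:again}.
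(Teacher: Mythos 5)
Your proposal is correct and follows essentially the same route as the paper's proof: part 1 via the split $\ip{\nf(x;W_0)}{V}=f(x;W_0)+\ip{\nf(x;W_0)}{V-W_0}$, Hoeffding over the signs $a$ conditioned on $W_0$, the $\|\srelu(W_0x)\|$ concentration from \Cref{fact:gaussians:1}, and Cauchy--Schwarz; part 2 via \Cref{fact:covc:again} with the constant-in-$V$ choice $h_V(x)=f(x;W_0)$, the spectral-norm Lipschitz bound for $r_1$, part 1 for $r_2$, and cover scale $\eps=1/(md)$, with the same failure-probability bookkeeping. No gaps worth flagging beyond harmless constant-level slack (e.g., absorbing the $11R_V\rho(\ln(edm/\delta)/m)^{1/4}$ term into the final $\ln(emd/\delta)$ factor), which the paper handles equally loosely.
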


\begin{proof}
  For convenience throughout the proof, write $W := W_0$.
  \begin{enumerate}

    \item
      Splitting terms via $V = V- W + W$,
      \begin{align*}
        \envert{\ip{\nf(x;W)}{V}}
        &\leq
        \envert{\ip{\nf(x;W)}{W}}
        +\envert{\ip{\nf(x;W)}{V-W}}.
      \end{align*}
      For the first term, since $W$ is independent of $a$ and can be treated as fixed,
      by Hoeffding's inequality, with probability at least $1-2\delta$ over the draw of $a$,
      \begin{align*}
        \envert{\ip{\nf(x;W)}{W}}
        =
        \envert{f(x;W)}
\leq
        \frac {\rho}{\sqrt m} \|\srelu(Wx)\| \sqrt{\ln(1/\delta)/2}.
      \end{align*}
      By \Cref{fact:gaussians:1}, with additional failure probability $\delta$,
      \[
        \|\srelu(Wx)\|
        \leq
        \bbE \|\srelu(Wx)\| + \|x\| \sqrt{2\ln(1/\delta)}
        \leq
        \|x\|\del{\sqrt{m/2} + \sqrt{2\ln(1/\delta)}}.
      \]
      Together,
      \[
        \envert{\ip{\nf(x;W)}{W}}
        \leq
        \rho \|x\|\del{1 + \sqrt{2\ln(1/\delta)/m}}\sqrt{\ln(1/\delta)/2}.
      \]
      For the second term, due to the scale of the first term, it suffices to worst-case
      everything: by Cauchy-Schwarz,
      \[
        \envert{\ip{\nf(x;W)}{V-W}}
        \leq
        \|\nf(x;W)\|_\tF\cdot\|V-W\|_\tF
        \leq
        \rho \|x\|\cdot \|V-W\|_\tF.
      \]
      Combining everything, with probability at least $1-3\delta$,
      \begin{align*}
        \envert{\ip{\nf(x;W)}{V}}
        &\leq
        \rho\|x\|\del{\|V-W\|_\tF + \sqrt{\ln(1/\delta)/2} + \ln(1/\delta)/\sqrt{m}}
        \\
        &\leq
        \rho\|x\|\del{\|V-W\|_\tF + 2\ln(1/\delta)}
      \end{align*}

    \item
This item proceeds by combining the previous item with the covering argument
      from \Cref{fact:covc:again}.  Concretely, define the function
      \[
        h_V(x) := \ff{0}(x);
      \]
      that is, $h_V$ has no dependence on $V\in\R^{m\times d}$, but note that
      \[
        \ip{\nf(x;W)}{V}
        =
        \ip{\nf(x;W)}{V-W}
        +
        \ip{\nf(x;W)}{W}
        =
        \ip{\nf(x;W)}{V-W}
        +
        h_V(x),
      \]
      which is precisely the expression controlled by \Cref{fact:covc:again}.
      Let $\cH$ denote the class of functions defined there.

      By the preceding item, for any fixed $\|x\|\leq 1$, with probability at least $1-\delta$,
      \[
        \sup_{h\in\cH} |h(x)| \leq
        \rho\del{R_V + 2\ln(1/\delta)}
        =: r_2.
      \]
      Moreover, by \Cref{fact:gaussians:1}, with probability at least $1-\delta$,
      then $\|W\|_2 \leq \sqrt{m} + \sqrt{d} + \sqrt{2\ln(1/\delta)}$,
      thus for any $\|x-z\|\leq \eps$, with $\eps$ to be determined later,
      \begin{align*}
        |f(x;W) - f(z;W)|
        &\leq
        \frac {\rho}{\sqrt m} \|a\|\cdot\|W(x-z)\|
        \leq
        \rho \|W\|_2 \|x-z\|
        \\
        &\leq \rho (\sqrt m + \sqrt d + \sqrt{2\ln(1/\delta)} ) \eps
        =: r_1.
      \end{align*}
      As such, by \Cref{fact:covc:again},
      choosing $\eps := 1/(md)$ and $\cS_0 = \R^{m\times d}$,
      with probability at least $1-3(md^{3/2})^d\delta$,
      \begin{align*}
        \sup_{\|V-W\|\leq R_V} \sup_{\|x\|\leq 1} h_V(x)
        &\leq
        r_2 + r_1 + 
        11 R_V \rho \del{\frac{\ln(edm/\delta)}{m}}^{1/4}.
\\
        &\leq 
        \rho\del{R_V + 2\ln(1/\delta)}
        \\
        &\quad
         + \rho (\sqrt m + \sqrt d + \sqrt{2\ln(1/\delta)} ) \eps
         \\
        &\quad
        + 
11 R_V \rho \del{\frac{\ln(edm/\delta)}{m}}^{1/4}.
        \\
        &\leq
18 R_V \rho \ln(emd/\delta).
      \end{align*}

  \end{enumerate}
\end{proof}

Next, the linear approximation bounds; the last two items use \Cref{fact:covc:again}
to control all points on the sphere.
As mentioned before, this is in contrast to
prior presentations of linear approximation inequalities,
which only establish the bounds on the finite training sample \citep{gu_polylog,allen_deep_opt}.
Note that the bounds over the sphere have a more restrictive statement; the present
proof does not handle the more general form presented for a finite sample.

\begin{lemma}[See also \Cref{fact:shallow:linearization:frob:simplified}]\label{fact:shallow:linearization:frob}
  Let scalars $\delta>0$ and $R_V\geq 1$ and $R_B\geq 0$ be given.
  \begin{enumerate}
    \item
      For any fixed $x\in\R^d$,
      with probability at least $1-3\delta$,
      for any $V\in\R^{m\times d}$ and $B\in\R^{m\times d}$ with $\|V-W_0\|\leq R_V$
      and $\|B-W_0\|\leq R_B$,
      \[
        \envert{\ip{\nf(x;V) - \nf(x;W_0)}{B}}
        \leq
        \frac {3\rho \|x\|\del{R_B + 2R_V}R_V^{1/3}\ln(e/\delta)^{1/4}}{m^{1/6}}
        =: \tau_1.
      \]

    \item
      Let $\tau_1$ be as in the previous part.
      With probability at least $1-3n\delta$,
      \[
        \sup_{\|W_i - W_0\| \leq R_V}
        \sup_{\|W_j - W_0\| \leq R_V}
        \sup_{\|B - W_0\|\leq R_B}
        \frac {\hcR^{(i)}(B)}{\hcR^{(j)}(B)} \leq e^{2\tau_1}.
      \]

    \item
Suppose $m\geq \ln(edm)$.
      With probability at least $1- (1 + 3 (d^2m)^{d})\delta$, 
      \[
        \sup_{\|V-W_0\|\leq R_V}
        \sup_{\|x\|\leq 1}
        \envert{\ip{\nf(x;V) - \nf(x;W_0)}{V}}
        \leq
\frac {25 \rho R_V^{4/3} \sqrt{\ln(edm/\delta)}}{m^{1/6}} =: \tau_3.
      \]

    \item
      Let $\tau_3$ be as in the previous part
      and again suppose $m \geq \ln(edm)$.
      With probability at least $1- (1 + 3 (d^2m)^{d})\delta$, 
      \[
        \sup_{\|W_i - W_0\| \leq R_V}
        \frac {\cR(W_i)}{\cR^{(0)}(W_i)} \leq e^{\tau_3}.
      \]
  \end{enumerate}
\end{lemma}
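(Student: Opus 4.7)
The plan is to prove the four parts in order, with parts~2 and~4 following mechanically from parts~1 and~3 via the multiplicative logistic inequality in \Cref{fact:logistic:error}; the real work is in parts~1 and~3.

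For part~1, expand
\[
\ip{\nabla f(x;V) - \nabla f(x;W_0)}{B} = \frac{\rho}{\sqrt m} \sum_{j \in S} a_j (b_j^\T x)\bigl(\1[v_j^\T x \geq 0] - \1[w_{0,j}^\T x \geq 0]\bigr),
\]
where $S$ is the (random, $V$- and $x$-dependent) set of neurons whose activation indicator flips. For $j \in S$ we must have $|w_{0,j}^\T x| \leq \|v_j - w_{0,j}\|\|x\|$, so for any threshold $\tau>0$, $S \subseteq S_1 \cup S_2$ with $S_1 := \{j : |w_{0,j}^\T x| \leq \tau\|x\|\}$ and $S_2 := \{j : \|v_j - w_{0,j}\| > \tau\}$. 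By Gaussian anti-concentration (\Cref{fact:gaussians:1}), with probability $1-3\delta$, $|S_1| \lesssim m\tau + \sqrt{m\tau\ln(1/\delta)}$, while the Frobenius constraint $\|V-W_0\|\leq R_V$ gives $|S_2| \leq R_V^2/\tau^2$ deterministically. Choosing $\tau \asymp (R_V^2/m)^{1/3}$ balances these and produces $|S| \lesssim (m R_V)^{2/3}\,\ln(1/\delta)^{1/2}$. To aggregate the sum, decompose $b_j = (b_j - w_{0,j}) + w_{0,j}$; since $|w_{0,j}^\T x|\leq \|v_j - w_{0,j}\|\|x\|$ on $S$, Cauchy--Schwarz yields $\sum_{j \in S}|b_j^\T x| \leq \|x\|\sqrt{|S|}(R_B + R_V)$, using $\sum_{j\in S}\|b_j - w_{0,j}\|^2\leq R_B^2$ and $\sum_{j\in S}\|v_j - w_{0,j}\|^2\leq R_V^2$. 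Combining yields the stated $\tau_1$. Crucially, $|S_1|$'s bound depends only on $W_0$ and $x$, and the rest is uniform in $V,B$, so the estimate is uniform over the Frobenius balls.

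Part~2 follows by applying part~1 twice (with $W_0$ as anchor) to obtain $|f^{(i)}(x_k;B) - f^{(j)}(x_k;B)|\leq 2\tau_1$, union bounding over the $n$ training points (cost $3n\delta$), and invoking the empirical-risk half of \Cref{fact:logistic:error}. For part~3 I would invoke \Cref{fact:covc:again} with $h_V(x) := f(x;W_0) - f(x;V)$ and $\cS_0 = \R^{m\times d}$. A short computation using $\ip{\nabla f(x;V)}{V} = f(x;V)$ (by $1$-homogeneity of the ReLU) shows
\[
h_V(x) + \ip{\nabla f(x;W_0)}{V - W_0} = -\ip{\nabla f(x;V) - \nabla f(x;W_0)}{V},
\]
so functions in $\cH$ are precisely (the negatives of) the quantity to be bounded. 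The local Lipschitz estimate $r_1 = O(\rho(\sqrt m + \sqrt d)\eps)$ follows from a Gaussian spectral-norm bound on $W_0$ (extended to $V$ by triangle inequality) together with coordinatewise Lipschitzness of the ReLU; the pointwise radius $r_2$ is precisely part~1 with $B = V$, $R_B = R_V$, giving $r_2 = O(\rho R_V^{4/3}\ln(e/\delta)^{1/4}/m^{1/6})$. Choosing $\eps = 1/(md)$ produces a cover of size $(\sqrt d/\eps)^d = (d^{3/2} m)^d$, and under $m\geq \ln(edm)$ the output of \Cref{fact:covc:again} is dominated by $r_2$, yielding $\tau_3$. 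Part~4 follows immediately from the population half of \Cref{fact:logistic:error}.

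The main obstacle is getting the rate in part~1 correct: the cubic-root scaling in $R_V$ emerges only after the optimization $\tau \asymp (R_V^2/m)^{1/3}$ and, more importantly, after the decomposition $B = (B - W_0) + W_0$ that lets $W_0$ enter only through $|w_{0,j}^\T x|$, which on $S$ is controlled by $\|v_j - w_{0,j}\|$. A naive Cauchy--Schwarz leaving $\|B\|_\tF$ intact would pick up a spurious $\sqrt{md}$ from $\|W_0\|_\tF$ and destroy the polylog-width dependence on which the rest of the paper relies.
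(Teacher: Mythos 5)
Your proposal is correct and follows essentially the same route as the paper: the same split of flipped activations into $S_1\cup S_2$ with the Chernoff/Frobenius counting and the choice $r\asymp (R_V^2/m)^{1/3}$, the same reduction of parts~2 and~4 via the multiplicative property in \Cref{fact:logistic:error}, and the same invocation of \Cref{fact:covc:again} with $h_V(x)=f(x;W_0)-f(x;V)$ for part~3. The only (immaterial) difference is that you aggregate via $b_j=(b_j-w_{0,j})+w_{0,j}$ while the paper splits $B$ through $V$ (bounding $\ip{\nabla f(x;V)-\nabla f(x;W_0)}{V}$ and $\ip{\cdot}{V-B}$ separately), which yields the same bound up to the constant factor hidden in the lemma's stated constants.
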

\begin{proof}[Proof of \Cref{fact:shallow:linearization:frob,fact:shallow:linearization:frob:simplified}]
  The first item implies the second via \Cref{fact:logistic:error}, and moreover implies
  the third item via \Cref{fact:covc:again}.
  Similarly, the third item implies the fourth via \Cref{fact:logistic:error}.
  Throughout the proof, write $W := W_0$ with rows $(w_j^\T)_{j=1}^m$ for convenience.

  \begin{enumerate}
    \item
      Fix $x\in\R^d$.
      Fix a parameter $r>0$, which will be optimized at the end of the proof.
      Let $V$ and $B$ be given with $\|V-W\|\leq R_V$ and $\|B-W\|\leq R_B$.

      Define the sets
      \begin{align*}
        S_1
        &:=
        \cbr{ j \in [m] : |w_{j}^\T x| \leq r \|x\| },
        \\
        S_2
        &:=
        \cbr{ j \in [m] : \|v_j-w_j\| \geq r }
        \\
        S &:= S_1 \cup S_2.
      \end{align*}
      By \Cref{fact:gaussians:1}, with probability at least $1-3\delta$,
      \[
        |S_1|
        \leq r m + \sqrt{8 r m \ln(1/\delta)}.
      \]
      On the other hand,
      \[
        R_V^2 \geq \|V-W\|^2 \geq \sum_{j\in S_2} \|v_j-w_j\|^2 \geq |S_2| r^2,
      \]
      meaning $|S_2| \leq R_V^2 / r^2$.
      For any $j\not \in S$, if $w_j^\T x > 0$,
      then
      \[
        v_j^\T x
        \geq w_j^\T x - \|v_j-w_j\|\cdot\|x\|
        > \|x\| \del{ r - r }
        = 0,
      \]
      meaning $\1[w_j^\T x \geq 0] = \1[v_j^\T x \geq 0]$; the case that
      $j\not\in S$ and $w_j^\T x < 0$ is analogous.  Together,
      \[
        |S| \leq r m + \sqrt{8 r m \ln(1/\delta)} + \frac {R_V^2}{r^2}
        \quad
        \textup{and}
        \quad
        j\not\in S \Longrightarrow \1[w_j^\T x \geq 0 ] = \1[v_j^\T x\geq 0].
      \]

      Continuing,
      \begin{align*}
        &\frac {\sqrt{m}}{\rho} \envert{\ip{\nf(x;V) - \nf(x;W)}{B}}
        \\
        &\leq
        \frac {\sqrt{m}}{\rho} 
        \envert{\ip{\nf(x;V) - \nf(x;W)}{V}}
        +
        \frac {\sqrt{m}}{\rho} 
        \envert{\ip{\nf(x;V) - \nf(x;W)}{V-B}}
        \\
        &=
        \envert{ a^\T \del{\diag(\1[V^\T x\geq 0]) -\diag(\1[W^\T x\geq0])} Vx }
        \\
        &\quad +
        \envert{ a^\T \del{\diag(\1[V^\T x\geq 0]) -\diag(\1[W^\T x\geq0])} (V-B)x }.
      \end{align*}
      Handling these two terms separately, the second term is easier:
      by Cauchy-Schwarz,
      \begin{align*}
        \envert{ a^\T \del{\diag(\1[V^\T x\geq 0]) -\diag(\1[W^\T x\geq0])} (V-B)x }
        &\leq \sqrt{|S|} \enVert{(V- W - (B-W))x}
        \\
        &\leq \sqrt{|S|} \del{ R_V + R_B } \|x\|.
      \end{align*}
      For the first term,
      \begin{align*}
        \envert{ a^\T \del{\diag(\1[V^\T x\geq 0]) -\diag(\1[W^\T x\geq0])} Vx }
        \leq \sum_{j=1}^m \1[\sgn(v_j^\T x) \neq \sgn(w_j^\T x)] \cdot | v_j^\T x |.
      \end{align*}
      If $v_j^\T x$ and $w_j^\T x$ have different signs, then
      $|v_j^\T x| \leq |v_j^\T x - w_j^\T x| \leq \|v_j-w_j\| \cdot \|x\|$; plugging this in,
      by Cauchy-Schwarz,
      \begin{align*}
        \sum_{j=1}^m \1[\sgn(v_j^\T x) \neq \sgn(w_j^\T x)] \cdot | v_j^\T x |
        &\leq
        \sum_{j=1}^m \1[\sgn(v_j^\T x) \neq \sgn(w_j^\T x)] \cdot \|v_j - w_j\| \cdot \|x\|
        \\
        &\leq
        \sum_{j\in S}  \|v_j - w_j\| \cdot \|x\|
        \\
        &\leq
        \sqrt{|S|}\|V-W\|_\tF \|x\|
        \\
        &\leq
        R_V \sqrt{|S|} \|x\|.
      \end{align*}
      Combining these derivations,
      \begin{align*}
        \envert{\ip{\nf(x;V) - \nf(x;W)}{B}}
        &\leq
        \frac {\rho}{\sqrt m} \del{ 
          \sqrt{|S|} \del{ R_V + R_B } \|x\|
          +
        R_V \sqrt{|S|} \|x\| }
        \\
        &\leq
        \frac {\rho \sqrt{|S|}\|x\| \del{2R_V + R_B}}{\sqrt m}.
      \end{align*}
      Rearranging, and expanding the definition of $|S|$
      with the choice $r := R_V^{2/3} m^{-1/3}$, and using $R_V\geq 1$,
      \begin{align*}
\envert{\ip{\nf(x;V) - \nf(x;W)}{B}}
&\leq
        \frac {\rho \|x\|\del{R_B + 2R_V}}{\sqrt m} \sqrt{
          r m + \sqrt{8 r m \ln(1/\delta)} + \frac {R_V^2}{r^2}}
          \\
        &\leq
        \frac {\rho \|x\|\del{R_B + 2R_V}R_V^{1/3}m^{1/3}\ln(e/\delta)^{1/4}}{\sqrt m}
        \sqrt{1 + \sqrt 8 + 1}
          \\
        &\leq
        \frac {3\rho \|x\|\del{R_B + 2R_V}R_V^{1/3}\ln(e/\delta)^{1/4}}{m^{1/6}}.
      \end{align*}

    \item
      Union bounding the previous part over all $(x_k)_{k=1}^n$,
      with probability at least $1-\delta$,
      for any iterations $(i,j)$ 
      and for any matrices $(W_i,W_j,B)$ satisfying
      $\max\{\|W_i - W_0\|, \|W_j-W_0\|, \|B-W_0\|\}\leq R_V$
      \[
        \max_{k}
        \envert{\ip{\nf(x_k;W_i) - \nf(x_k;W)}{B}}
        \leq \tau_1.
      \]
      In particular, by \Cref{fact:logistic:error},
      \[
        e^{-\tau_1}
        \leq
        \frac {\hcRi(B)}{\hcR^{(0)}(B)}
        \leq
        e^{\tau_1}.
      \]
      Applying this twice gives
      \[
        e^{-2\tau_1}
        \leq
        \frac {\hcRi(B)}{\hcR^{(0)}(B)}
        \del{\frac {\hcR^{(0)}(B)}{\hcR^{(j)}(B)}}
        =
        \frac {\hcRi(B)}{\hcR^{(j)}(B)}
        \leq e^{2\tau_1}.
      \]

    \item
      This part follows from the first via \Cref{fact:covc:again}.
      As such, for every $\|V-W\|\leq R_V$, define
      \[
        h_V(x) := f(x;W) - f(x;V);
      \]
      by this choice,
      \begin{align*}
        \ip{\nf(x;V) - \nf(x;W)}{V}
        &=
        \ip{\nf(x;V)}{V} - \ip{\nf(x;W)}{W} - \ip{\nf(x;W)}{V-W}
        \\
        &=
        f(x;V) - f(x;W) - \ip{\nf(x;W)}{V-W}
        \\
        &=
        -h_V(x) - \ip{\nf(x;W)}{V-W},
      \end{align*}
      which matches the (negation of) functions considered in the function class $\cH$
      in \Cref{fact:covc:again}.

      By the previous part, with $R_B := 0$,
      for any fixed $\|x\|\leq 1$, with probability at least $1-3\delta$,
      \[
        \sup_{h\in\cH} |h(x)| \leq 
        \frac {6\rho R_V^{4/3}\ln(e/\delta)^{1/4}}{m^{1/6}} =: r_2.
      \]
      Next, with probability at least $1-\delta$, \Cref{fact:gaussians:1} gives
      \[
        \|W\|_2 \leq \sqrt{m} + \sqrt{d} + \sqrt{2\ln(1/\delta)},
      \]
      and thus for any $\|x-z\|\leq \eps$,
      since the ReLU is $1$-Lipschitz even when applied
      to vectors,
      \begin{align*}
        |h_V(x) - h_V(z)|
        &\leq
        |f(x;V) - f(z;V)| + |f(x;W) - f(z;W)|
        \\
        &\leq
        \rho \|(V-W+W)(x-z)\| + \rho \|W(x-z)\|
        \\
        &\leq
        2\rho \eps (R_V/2 + \sqrt{m} + \sqrt{d} + \sqrt{2\ln(1/\delta)} )
        =: r_1.
      \end{align*}
      Together, by \Cref{fact:covc:again},
      choosing $\eps := 1/ (dm)$ and $\cS_0 := \R^{m\times d}$,
      with probability at least $1-(1+3(md^{3/2})^{d})\delta$,
      \begin{align*}
        \sup_{\|x\|\leq 1}
        \sup_{h\in\cH}
        |h(x)|
        &\leq 
        r_2 + r_1
+ 11 R_V \rho \del{\frac{\ln(edm/\delta)}{m}}^{1/4}
\leq
        \frac {25 \rho R_V^{4/3} \sqrt{\ln(edm/\delta)}}{m^{1/6}}.
      \end{align*}

    \item
      By the previous item,
      with probability at least $1-(1+3(md^{3/2})^{d})\delta$,
      \[
        \sup_{\|W_i - W_0\|\leq R_V}
        \sup_{\|x\|\leq 1}
        \envert{ \ff{0}(x;W_i) - f(x;W_i) } \leq \tau_3.
      \]
      Consequently, by \Cref{fact:logistic:error},
      for any $W_i$ with $\|W_i - W_0\| \leq R_V$,
      \[
        \cR(W_i)
        = \bbE_{x,y} \ell(y f(x;W_i))
        \leq
        e^{\tau_3} 
        \bbE_{x,y} \ell(y \ff{0}(x;W_i))
        =
        e^{\tau_3} 
        \cR^{(0)}(W_i).
      \]

\end{enumerate}
\end{proof}

\subsection{Generalization proofs}

As mentioned before, the usual hard part of such a proof is the Rademacher complexity estimate,
but here it is easy: linear predictors, as this bound is applied after linear approximation.
The difficult step is to control the range, which was presented before
in \Cref{fact:gaussians:2}, which invokes the sphere control technique in
\Cref{fact:covc:again}.

\begin{lemma}\label{fact:shallow:gen:frob:1}
  Let $R_V\geq 1$ and $m\geq \ln(edm)$ be given.
  With probability at least $1-6\delta$,
  \begin{align*}
    \sup_{\|V-W_0\|\leq R_V}
    \cR^{(0)}(V)
    -
    \hcR^{(0)}(V)
    \leq
\frac {80\rho R_V\del{d\ln(em^2d^3/\delta)}^{3/2}}{\sqrt{n}}.
  \end{align*}
  Similarly, the negation of this bound holds with probability at least $1-6\delta$.
\end{lemma}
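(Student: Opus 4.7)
The plan is to exploit the fact that $\ff{0}(x;V) = \ip{\nf(x;W_0)}{V}$ is \emph{linear} in $V$, which reduces the question to a classical Rademacher-complexity argument for linear predictors together with a concentration step; the neural-network-specific work has already been done in \Cref{fact:gaussians:2}, which in turn relies on the sphere-covering tool \Cref{fact:covc:again}.

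Three ingredients are needed: (i) a uniform range bound on $\ff{0}$, (ii) a Rademacher-complexity estimate for the linear class, and (iii) a concentration inequality that turns these into a high-probability generalization bound. For (i), writing $V = W_0 + (V-W_0)$ splits $\ff{0}(x;V) = \ip{\nf(x;W_0)}{W_0} + \ip{\nf(x;W_0)}{V-W_0}$, and invoking \Cref{fact:gaussians:2} with its failure probability rescaled from $(1+3(md^{3/2})^d)\delta$ down to $\cO(\delta)$ yields, with probability at least $1-\cO(\delta)$, a uniform bound $M := \sup_{\|V-W_0\|\leq R_V}\sup_{\|x\|\leq 1} |\ff{0}(x;V)| = \cO\bigl(R_V \rho\, d\ln(em^2d^3/\delta)\bigr)$. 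For (ii), the class $\{x \mapsto \ip{\nf(x;W_0)}{V-W_0} : \|V-W_0\|\leq R_V\}$ has empirical Rademacher complexity at most $\rho R_V/\sqrt n$ by Cauchy--Schwarz (it is a Frobenius ball of linear functionals against a feature of norm at most $\rho$), and since $\ell$ is $1$-Lipschitz, Talagrand's contraction lemma preserves this rate once we compose with $\ell(y\cdot)$; the fixed $V$-independent shift $\ip{\nf(x;W_0)}{W_0}$ simply moves into the data argument of $\ell$.

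For (iii), conditionally on the range event from (i), the loss $(x,y)\mapsto \ell(y\ff{0}(x;V))$ is uniformly bounded by $M + \ln 2$ (using $\ell(-r)\leq |r|+\ln 2$), so the standard symmetrization plus McDiarmid argument gives, with probability at least $1-\cO(\delta)$,
\[
\sup_{\|V-W_0\|\leq R_V} \bigl(\cR^{(0)}(V) - \hcR^{(0)}(V)\bigr) \leq \frac{2\rho R_V}{\sqrt n} + \cO\bigl(M\sqrt{\ln(1/\delta)/n}\bigr).
\]
Substituting $M$ and collecting logarithmic factors yields the advertised bound, and the companion lower-tail statement follows from the same argument applied to $-\ell$.

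The only real obstacle is the range control in (i): a naive worst-case bound on $|\ff{0}(x;V)|$ through spectral norms like $\|W_0\|_2 \leq \sqrt m + \sqrt d + \cO(1)$ would inject a $\sqrt m$ factor that would then propagate through McDiarmid and destroy the polylogarithmic-in-$m$ dependence the main theorem demands. \Cref{fact:gaussians:2}, powered by the sphere-covering technique of \Cref{fact:covc:again}, is precisely what replaces $\sqrt m$ by $d\ln(\cdot)$; with that estimate in hand, the remainder of the argument is entirely routine linear-class Rademacher analysis.
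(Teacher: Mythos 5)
Your proposal is correct and follows essentially the same route as the paper's proof: the range control via \Cref{fact:gaussians:2} (with the failure probability rescaled by the $(md^{\cdot})^d$ covering factor) to avoid any $\sqrt m$ dependence, the $\rho R_V/\sqrt n$ Rademacher bound for the linear class with the $V$-independent offset $\ip{\nf(x;W_0)}{W_0}$ handled as a harmless shift, Lipschitz contraction for $\ell$, a bounded-differences concentration step, and negation of the loss class for the reverse tail. The only cosmetic difference is that the paper discards the offset inside the Rademacher average directly (it vanishes in expectation over the signs) rather than absorbing it into the loss argument, which is an equivalent bookkeeping choice.
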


\begin{proof}This proof will use a constant $\delta_0$, chosen at the end.
  First note that the Rademacher complexity is as for linear predictors:
  \begin{align*}
    n\Rad\del{\cbr{x \mapsto \ip{\nf(x;W_0)}{V} : \|V-W_0\|\leq R_V}}
    &= \bbE_\eps \sup_{V\in\cV} \sum_{k=1}^n\eps_k \ip{\nf(x_k;W_0)}{V}
    \\
    &= \bbE_\eps \sup_{V\in\cV} \sum_{k=1}^n\eps_k \ip{\nf(x_k;W_0)}{V- W_0 + W_0}
    \\
    &= \bbE_\eps \sup_{V\in\cV} \sum_{k=1}^n\eps_k \ip{\nf(x_k;W_0)}{V- W_0}
    \\
    &\leq \|V-W_0\|_\tF \sqrt{\sum_{k=1}^n \|\nf(x_k;W_0)\|^2}
    \\
    &\leq \rho R_V \sqrt{n}.
  \end{align*}
  Next, by \Cref{fact:gaussians:2},
  with probability at least $1-(1+3(md^2)^d)\delta_0$,
  the mappings $(x,y) \mapsto \ell(y\ff{0}(x;V))$
  are nonnegative, centered at $\ell(0)$,
  and vary by at most
  $18 \rho R_V\ln(emd/\delta_0)$,
  thus take their amplitude to be $36\rho R_V\ln(emd/\delta_0)$ for simplicity.
As such, since $\ell$ is $1$-Lipschitz,
  by a standard Rademacher bound \citep{shai_shai_book},
  with additional failure probability at most $2\delta_0$,
  \begin{align*}
    \sup_{\|V-W_0\|\leq R_V} \cR^{(0)}(V) - \hcR^{(0)}(V)
    &\leq
    \frac {2 \rho R_V}{\sqrt n}
    +\frac {108 \rho R_V \ln(emd/\delta_0)\sqrt{\ln(1/\delta_0)}}{\sqrt{2n}}
    \\
    &\leq
    \frac {80\rho R_V\ln(emd/\delta_0)^{3/2}}{\sqrt{n}},
  \end{align*}
  and the bound is complete by noting the total failure probability was at most
  $(3+3(md^2)^d)\delta_0\leq 6(md^2)^d\delta_0$,
  and setting $\delta_0 := \delta/(md^2)^d$ and simplifying.

  For the reverse inequality, it follows by negating every element in the loss class
  and repeating the proof.
\end{proof}

\subsection{Optimization proofs}

First, a smoothness inequality which fixes the feature mapping across a pair of iterates.
This \namecref{fact:shallow:smooth}
doesn't seem to have appeared before, but is not necessarily an improvement, other
than allowing slightly larger step sizes.

\begin{lemma}\label{fact:shallow:smooth}
  For any step size $\eta\geq 0$,
  \[
    \eta(1-\eta\rho^2/8)\|\nabla\hcR(W_i)\|^2
    \leq \hcRi(W_i) - \hcRi(W_{i+1}).
  \]
  If $\eta \leq 8/\rho^2$, then $\hcRi(W_{i+1}) \leq \hcRi(W_i)$,
  and any choice $\eta \leq 4/\rho^2$ grants
  \[
    \frac \eta 2 \|\nabla\hcR(W_i)\|^2
    \leq \hcRi(W_i) - \hcRi(W_{i+1}).
  \]
\end{lemma}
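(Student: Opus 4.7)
The plan is to exploit the fact that $\hcRi(V) = \frac 1 n \sum_{k=1}^n \ell(y_k \ip{\nf(x_k;W_i)}{V})$ is the composition of the scalar convex loss $\ell$ with a function that is \emph{linear} in $V$, so $\hcRi$ is itself a smooth convex function of $V$. Using $\ell''(r) = e^r/(1+e^r)^2 \leq 1/4$ together with the bound $\|\nf(x;W_i)\| \leq \rho$ recorded in \Cref{sec:notation}, a direct Hessian computation gives
\[
  \nabla^2 \hcRi(V) = \frac 1 n \sum_{k=1}^n \ell''\del{y_k \ip{\nf(x_k;W_i)}{V}}\nf(x_k;W_i)\nf(x_k;W_i)^\T \preceq \tfrac{\rho^2}{4}\,I,
\]
so $\hcRi$ is $(\rho^2/4)$-smooth. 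The standard quadratic upper bound for smooth convex functions then yields, for every $V\in\R^{m\times d}$,
\[
  \hcRi(V) \leq \hcRi(W_i) + \ip{\nabla \hcRi(W_i)}{V-W_i} + \frac{\rho^2}{8}\|V-W_i\|^2.
\]

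Next I would identify the gradient driving the algorithm with the gradient of $\hcRi$ at $W_i$. By $1$-homogeneity of the ReLU we have $\ffi(x;W_i) = f(x;W_i)$, and the subgradient convention for $\1[w_{i,j}^\T x \geq 0]$ adopted throughout \Cref{sec:notation} makes the chain-rule expansion of $\nabla \hcR(W_i)$ agree term-for-term with $\nabla \hcRi(W_i)$. Substituting $V = W_{i+1} = W_i - \eta\,\nabla \hcR(W_i)$ into the smoothness inequality, the linear term contributes $-\eta\|\nabla \hcR(W_i)\|^2$ and the quadratic term contributes $(\eta^2\rho^2/8)\|\nabla \hcR(W_i)\|^2$, which combine and rearrange to exactly $\eta(1-\eta\rho^2/8)\|\nabla\hcR(W_i)\|^2 \leq \hcRi(W_i) - \hcRi(W_{i+1})$. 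The step-size corollaries are then immediate: $\eta\leq 8/\rho^2$ forces $1-\eta\rho^2/8 \geq 0$, while $\eta\leq 4/\rho^2$ forces $1-\eta\rho^2/8 \geq 1/2$, giving the second displayed bound.

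I do not anticipate a genuine obstacle here; this is essentially the textbook smoothness-descent argument applied to the linearized risk, and the only delicate point is checking that the ReLU non-differentiability at $0$ is handled consistently so that the identification $\nabla \hcR(W_i) = \nabla \hcRi(W_i)$ is literally a term-by-term equality (rather than an equality that requires choosing compatible subgradient conventions after the fact). Spelling this identification out once at the beginning reduces the remainder to a two-line calculation.
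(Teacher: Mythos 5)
Your proposal is correct and is essentially the paper's own argument: the paper also treats $\hcRi$ as a $\tfrac{\rho^2}{4}$-smooth function of the weights (via the $\tfrac14$-smoothness of $\ell$ applied per example together with $\|\nf(x_k;W_i)\|\leq\rho$, rather than an explicit Hessian bound), identifies the averaged linear term with $-\eta\|\nabla\hcR(W_i)\|^2$ using $\ffi(x;W_i)=f(x;W_i)$ and the frozen indicators, and rearranges the resulting descent inequality. The only difference is presentational (matrix Hessian bound versus per-example scalar second-order Taylor bound plus Cauchy--Schwarz), and your care about the gradient identification $\nabla\hcRi(W_i)=\nabla\hcR(W_i)$ is exactly the point the paper uses implicitly.
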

\begin{proof}
  For notational convenience, define
  $g_k(W) := y_k f(x_k;W)$ and $\ggi_k(W) := y_k \ffi(x_k;W)$,
  whereby $\nabla g_k(W) = y_k \nabla f(x_k;W)$.
  Since $\ell$ is $\nicefrac 1 4$-smooth and since, for every example $(x_k,y_k)$,
  $\|\nf(x_k;V)\|^2 = \rho^2\sum_{j=1}^m \|a_j \1[w_j^\T x_k\geq 0] x_k\|^2/m \leq 1$,
  then
  \begin{align*}
    \ell(\ggi_k(W_{i+1}))
    &\leq
    \ell(\ggi_k(W_{i}))
    +
    \ell'(\ggi_k(W_{i}))(\ggi_k(W_{i+1}) - \ggi_k(W_i))
+ \frac 1 8 \del{\ggi_k(W_{i+1}) - \ggi_k(W_i)}^2
    \\
    &= 
    \ell(\ggi_k(W_{i}))
    +
    \ip{\ell'(\ggi_k(W_{i}))\nabla g_k(W_i)}{W_{i+1} - W_i}
    + \frac 1 8 \ip{\nabla g_k(W_i)}{W_{i+1} - W_i}^2
    \\
    &= 
    \ell(\ggi_k(W_{i}))
    -\eta \ip{\ell'(\ggi_k(W_{i}))  \nabla g_k (W_i)}{\nabla \hcR(W_i)}
    + \frac 1 8 \ip{\nabla g_k(W_i)}{\eta \nhcR(W_i)}^2
    \\
    &\leq
    \ell(\ggi_k(W_{i}))
    -\eta \ip{\ell'(\ggi_k(W_{i})) \nabla g_k (W_i)}{\nabla \hcR(W_i)}
    + \frac {\rho^2 \eta^2} 8 \enVert{\nabla\hcR(W_i)}^2,
  \end{align*}
  which after averaging over examples gives
  \begin{align*}
    \hcRi(W_{i+1})
    &\leq
    \hcRi(W_i)
    - \frac {\eta}{n} \sum_{k=1}^n
    \ip{\ell'(\ggi_k(W_{i})) \nabla g_k(W_i)}{\nabla \hcR(W_i)}
    + \frac {\rho^2\eta^2} 8 \enVert{\nabla\hcR(W_i)}^2
    \\
    &=
    \hcRi(W_i)
    - \eta(1 - \rho^2 \eta/8) \enVert{\nabla\hcR(W_i)}^2,
  \end{align*}
  which rearranges to give the first inequality.
  Lastly, note if $\eta \leq 4/\rho^2$, then
  $\eta \del{1-\nicefrac{\rho^2\eta}{8}} \geq \nicefrac {\eta}{2}$.
\end{proof}

Next, the familiar regret inequality, making use of feature mappings induced by specific
gradient descent iterates.  Note that this inequality does not need to make any assumptions
on nonlinearity and activation changes, though such effects must be controlled in the eventual
application of this bound.

\begin{lemma}\label{fact:shallow:magic}
  For any step size $\eta \leq 4/\rho^2$, any $Z\in \R^{m\times d}$ and any $t$,
  \[
    \|W_t - Z\|^2 + 2\eta \sum_{i<t} \hcRi(W_{i+1})
    \leq
    \|W_0 - Z\|^2 + 2\eta \sum_{i<t} \hcRi(Z).
  \]
\end{lemma}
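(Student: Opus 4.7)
The plan is a standard regret inequality argument, but carefully exploiting the fact that the ``frozen feature'' risk $\hcRi$ and the true empirical risk $\hcR$ agree to first order at $W_i$. Concretely, since $\ffi(x;V) = \ip{\nf(x;W_i)}{V}$ is linear in $V$ and $\ell$ is convex, $\hcRi$ is convex in $V$; and because $\ffi(x;W_i) = f(x;W_i)$ by $1$-homogeneity of the ReLU, one checks that $\nabla \hcRi(W_i) = \nabla\hcR(W_i)$. So the gradient descent step $W_{i+1} = W_i - \eta\,\nabla\hcR(W_i)$ is simultaneously a gradient step on $\hcR$ and on the convex surrogate $\hcRi$ at the point $W_i$.

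First I would expand the square in the standard way:
\[
\|W_{i+1} - Z\|^2 = \|W_i - Z\|^2 - 2\eta\,\ip{\nabla\hcR(W_i)}{W_i - Z} + \eta^2\|\nabla\hcR(W_i)\|^2.
\]
Convexity of $\hcRi$ at $W_i$, combined with $\nabla\hcRi(W_i) = \nabla\hcR(W_i)$, gives $\ip{\nabla\hcR(W_i)}{W_i - Z} \geq \hcRi(W_i) - \hcRi(Z)$. For the quadratic term I would invoke \Cref{fact:shallow:smooth}, which under $\eta \leq 4/\rho^2$ yields $\tfrac{\eta}{2}\|\nabla\hcR(W_i)\|^2 \leq \hcRi(W_i) - \hcRi(W_{i+1})$, i.e. $\eta^2\|\nabla\hcR(W_i)\|^2 \leq 2\eta\bigl(\hcRi(W_i) - \hcRi(W_{i+1})\bigr)$.

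Substituting both bounds, the $\hcRi(W_i)$ term cancels, producing the per-step inequality
\[
\|W_{i+1} - Z\|^2 + 2\eta\,\hcRi(W_{i+1}) \leq \|W_i - Z\|^2 + 2\eta\,\hcRi(Z).
\]
Summing over $i = 0, \ldots, t-1$ telescopes the norm differences (the $\hcRi$ terms do not telescope, but they appear on both sides with the same index $i$, so the sum is simply kept), yielding the claimed regret bound.

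The only real subtlety is the index bookkeeping around $\hcRi$: at step $i$ we are using features $\nf(\cdot; W_i)$, so both the convexity inequality and the smoothness lemma must use this same $i$ on both the $W_i$ and $W_{i+1}$ sides. Once that is set up, everything is mechanical, and there is no need to control activation changes or nonlinearity here --- those effects get absorbed into the linearization inequalities of \Cref{fact:shallow:linearization:frob} when this regret bound is later combined with them to derive \Cref{fact:shallow:magic:linearized}.
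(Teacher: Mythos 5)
Your proposal is correct and matches the paper's proof: the paper also expands the square, bounds the quadratic term via \Cref{fact:shallow:smooth}, and bounds the inner product by convexity of $\ell$ applied per-example with $\ffi(x_k;W_i)=f(x_k;W_i)$ — which is exactly your observation that $\nabla\hcRi(W_i)=\nabla\hcR(W_i)$ and $\hcRi$ is convex — then cancels $\hcRi(W_i)$ and telescopes. No meaningful differences.
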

\begin{proof}
  As usual, using \Cref{fact:shallow:smooth},
  \begin{align*}
    \|W_{i+1} - Z\|^2
    &= \|W_i - Z\|^2 - 2 \eta \ip{\nabla\hcR(W_i)}{W_i - Z} + \eta^2 \|\nabla\hcR(W_i)\|^2
    \\
    &\leq \|W_i - Z\|^2 + 2 \eta \ip{\nabla\hcR(W_i)}{Z-W_i}
    + 2\eta \del{\hcRi(W_i) - \hcRi(W_{i+1})},
  \end{align*}
  where
  \begin{align*}
    \ip{\nabla\hcR(W_i)}{Z - W_i}
    &=
    \frac 1 n \sum_k
    \ell'(y_kf(x_k;W_i))\ip{y_k \nabla f(x_k;W_i)}{Z - W_i}
    \\
    &=
    \frac 1 n \sum_k
    \ell'(y_kf(x_k;W_i))\del{y_k \ffi(x_k;Z) - y_k \ffi(x_k;W_i)}
    \\
    &\leq
    \frac 1 n \sum_k
    \del{
    \ell(y_k\ffi(x_k;Z))
  - \ell(y_k\ffi(x_k;W_i))}
    \\
    &= \hcRi(Z) - \hcRi(W_i),
  \end{align*}
  together giving
  \begin{align*}
    \|W_{i+1} - Z\|^2
&\leq \|W_i - Z\|^2
    + 2 \eta \del{ \hcRi(Z) - \hcRi(W_{i+1}) },
  \end{align*}
  which after telescoping and rearranging gives the final bound.
\end{proof}

Lastly, the proof of \Cref{fact:shallow:magic:linearized}, 
the central optimization guarantee,
which immediately yields the bulk of \Cref{fact:main}.

\begin{proof}[Proof of \Cref{fact:shallow:magic:linearized}]
The start of this proof establishes a few inequalities used throughout.
  By the second part of \Cref{fact:shallow:linearization:frob},
  with probability at least $1-3n\delta$,
  for any iterations $(i,j)$ with $\|W_i - W_0\|\leq B$ and $\|W_j - W_0\|\leq B$,
  \begin{align}
    \sup_{\|V-W_0\|\leq B}
    \frac{\hcR^{(i)}(V)}{\hcR^{(j)}(V)}
    \leq e^\tau.
    \label{eq:linearization:B}
  \end{align}
  Crucially, \cref{eq:linearization:B} holds with $V:=Z$, since $B \geq R_Z$ by definition.
  Additionally, by \Cref{fact:shallow:magic},
  the following inequality holds \emph{unconditionally}
  for every $j \leq t$:
  \begin{align}
    \|W_j - Z\|^2 + 2\eta \sum_{i<j} \hcRi(W_{i+1})
    \leq
    \|W_0 - Z\|^2 + 2\eta \sum_{i<j} \hcRi(Z).
    \label{eq:magic:B}
  \end{align}
  The remainder of the proof is broken into three parts, for the three separate guarantees:
\begin{align}
    \|W_{\leq t} - W_0\|
    &\leq B
    &\text{(norm)}
    ,
    \label{eq:hmm:A}
    \\
    \hcR(W_{\leq t})
    &\leq e^{2\tau} \hcR^{(0)}(Z) + e^{\tau} (\rho R_Z)^2 \epsopt
    &\text{(empirical risk)},
    \label{eq:hmm:B}
    \\
    \cR(W_{\leq t})
&\leq e^{4\tau} \cR^{(0)}(Z) + e^{3\tau} (\rho R_Z)^2 \epsopt + e^{4\tau}\rho (B + R_Z) \tau_n
    &\text{(risk)}.\label{eq:hmm:C}
  \end{align}

  \paragraph{Norm guarantee (cf. \cref{eq:hmm:A}).}
  There are two cases to consider: $B = \radopt$, or $B < \radopt$.
  If $B=\radopt$, the claim follows by the definition of $W_{\leq t}$.

  Now suppose $B<\radopt$, meaning $B = 3R_Z + 2e \sqrt{\eta t \hcR^{(0)}(Z)}$.
  It will now be argued via contradiction that $\max_{i \leq t}\|W_i - W_0\| \leq B$.
  Assume contradictorily the claim does not hold,
  and let $s\leq t$ be the earliest violation.  But that means the claim holds for all $i<s$,
  which also means,
  combining \cref{eq:linearization:B} (which must hold for all $i<s$)
  and \cref{eq:magic:B} and using $\tau \leq 2$ and $\ell\geq0$,
  \begin{align*}
    B^2 < \|W_s - W_0\|^2
    &\leq 2\|W_s - Z\|^2 + 2\|Z - W_0\|^2
    \\
    &\leq
    2 \|W_s - Z\|^2 + 4\eta \sum_{i<s} \hcRi(W_{i+1})  + 2\|Z - W_0\|^2
    \\
    &\leq
    4\|W_0 - Z\|^2 + 4\eta \sum_{i<s} \hcRi(Z)
    \\
    &\leq
    4\|W_0 - Z\|^2 + 4\eta t e^2 \hcR^{(0)}(Z)
    \\
    &\leq
    \del{2\|W_0 - Z\| + 2e \sqrt{\eta t \hcR^{(0)}(Z)}}^2
    \leq
    B^2,
  \end{align*}
  a contradiction.

  \paragraph{Empirical risk guarantee (cf. \cref{eq:hmm:B}).}
  Now let $T$ denote the earliest time when $\|W_i - W_0\| > 2R_Z$,
  or $T=\infty$ if this situation never occurs.
  Note that for any $i<T$,
  \[
    \|W_i - W_0 \|
    \leq 2R_Z \leq B,
  \]
  and even for $W_T$,
  \[
    \|W_T - W_0\|
    \leq
    \|W_{T-1} - W_0\| + \eta \|\nhcR(W_{T-1})\|
    \leq
    2R_Z + \eta \rho \leq B;
  \]
  as such,
  \cref{eq:linearization:B} holds for all $W_i$ with $i\leq T$, including the edge
  case $W_T$.  The remainder of the proof divides into two cases:
  either $T > t$ (which includes the situation $T = \infty$), or $T \leq t$.

  If $T\leq t$,
  by the triangle inequality,
  \[
    2\|Z - W_0\|
    <
    \|W_T - W_0\|
    \leq 
    \|Z - W_T\| + \|Z - W_0\|,
  \]
  which rearranges to give $\|Z-W_0\| < \|Z-W_T\|$, and thus, by \cref{eq:magic:B},
  \begin{align*}
    \|Z-W_0\|^2 + 2\eta\sum_{i<T} e^{-\tau}\hcR(W_{i+1})
    &<
    \|W_T - Z\|^2 + 2\eta\sum_{i<T} \hcRi(W_{i+1})
    \\
    &\leq
    \|Z - W_0\|^2 + 2\eta\sum_{i<T} \hcRi(Z)
    \\
    &\leq
    \|Z - W_0\|^2 + 2\eta\sum_{i<T} e^\tau \hcR^{(0)}(Z),
  \end{align*}
  which after canceling from both sides and using the definition of $W_{\leq t}$,
  \[
    \hcR(W_{\leq t})
    \leq
    \min_{i < T} \hcR(W_i)
    \leq 
    \frac 1 T \sum_{i<T} \hcR(W_i)
    \leq
    e^{2\tau} \hcR^{(0)}(Z),
  \]
  establishing \cref{eq:hmm:B} when $T \leq t$.

  If $T > t$, the proof is simpler: since $\max_{i\leq t} \|W_i-W_0\|\leq 2R_Z \leq B$,
  then \cref{eq:linearization:B} holds for all $W_i$ with $i\leq t$,
  and thus by \cref{eq:magic:B} and the definition
  of $W_{\leq t}$,
  \begin{align*}
    2\eta e^{-\tau} \sum_{i<t} \hcR(W_{i+1})
    \leq
    2\eta \sum_{i<t} \hcRi(W_{i+1})
    &\leq
    \|W_t - Z\|^2
    +
    2 \eta \sum_{i<t} \hcRi(W_{i+1})
    \\
    &\leq
    \|W_0 - Z\|^2
    +
    2 \eta \sum_{i<t} \hcR^{(i)}(Z)
    \\
    &\leq
    \|W_0 - Z\|^2
    +
    2 t\eta e^\tau \hcR^{(0)}(Z),
  \end{align*}
  which after rearranging and using the definition of $W_{\leq t}$ gives
  \[
    \hcR(W_{\leq t})
    \leq
    \frac 1 t \sum_{i<t} \hcR(W_{i+1})
    \leq
    e^{2\tau} \hcR^{(0)}(Z)
    +
    \frac {e^\tau \|W_0 - Z\|^2}{2 t \eta}
    \leq
    e^{2\tau} \hcR^{(0)}(Z)
    + e^\tau (\rho R_Z)^2 \epsopt,
  \]
  completing the proof of \cref{eq:hmm:B}.

  \paragraph{Risk guarantee (cf. \cref{eq:hmm:C}).}
  By \Cref{fact:shallow:gen:frob:1} applied once with radius $B$ and once with radius $R_Z$,
  with probability at least $1-12\delta$
  \begin{align*}
    \cR^{(0)}(W_{\leq t})
    \leq \hcR^{(0)}(W_{\leq t})
    + \rho B\tau_n,
    \qquad
    \hcR^{(0)}(Z)
    \leq \cR^{(0)}(Z)
    + \rho R_Z\tau_n.
  \end{align*}
  Moreover, by the last part of \Cref{fact:shallow:linearization:frob} applied with
  radius $B$
  with probability at least $1-4\delta$,
  \[
      \cR(W_{\leq t})
      \leq
      e^{\tau} \cR^{(0)}(W_{\leq t}).
  \]
  Combining all these inequalities with the empirical risk guarantee,
  \begin{align*}
    \cR(W_{\leq t})
    &\leq e^{\tau}\cR^{(0)} (W_{\leq t})
    \\
    &\leq e^{\tau}\hcR^{(0)}(W_{\leq t}) + e^\tau \rho B \tau_n
    \\
    &\leq e^{2\tau}\hcR(W_{\leq t}) + e^\tau \rho B \tau_n
    \\
    &\leq e^{4\tau}\hcR^{(0)}(Z) + e^{3\tau}(\rho R_Z)^2\epsopt + e^\tau\rho B \tau_n
    \\
    &\leq e^{4\tau}\cR^{(0)}(Z) + e^{3\tau}(\rho R_Z)^2\epsopt + \rho \del{ e^\tau B + e^{4\tau} R_Z }\tau_n,
  \end{align*}
  thus establishing \cref{eq:hmm:C} and completing the proof.
\end{proof}

\subsection{Approximation proofs}

First, the \namecref{fact:shallow:barUi:sample} and proof that we can sample from $\barUi$;
as the gap is over the risk, the proof uses the technique in \Cref{fact:covc:again}
to control all points on the sphere.
This proof also makes crucial use of the $\arccos$ bound in \Cref{fact:acos}.

\begin{lemma}\label{fact:shallow:barUi:sample}
  Let $\barUi$ be given with $R:=\sup_{v\in\R^d} \|\barUi(v)\|$,
  and suppose $m\geq \ln(emd)$.
  With probability at least
  $1 - 6\delta$,
\begin{align*}
    \cR^{(0)}(\barU) &\leq
    e^{\tau} \cR(\barUi),
    \qquad
    \textup{where }
\tau
\leq
6\rho d\ln(emd^2/\delta)
    +
\frac{20R\sqrt{d\ln(em^2d^3/\delta)}}{m^{1/4}}.
  \end{align*}
\end{lemma}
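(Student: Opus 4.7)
The plan is to bound $\sup_{\|x\|\leq 1} |\ff{0}(x;\barU) - f(x;\barUi)|$ by some $\tau$ of the stated form, and then immediately convert this uniform pointwise approximation into a multiplicative risk bound via \Cref{fact:logistic:error}, which yields $\cR^{(0)}(\barU) \leq e^\tau \cR(\barUi)$ as desired.

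The first step is to exploit the structure of \eqref{eq:barUi:sample}: plugging $\baru_j = a_j\barUi(w_{0,j})/(\rho\sqrt m) + w_{0,j}$ into $\ff{0}(x;\barU) = \frac{\rho}{\sqrt m}\sum_j a_j\baru_j^\T x\1[w_{0,j}^\T x\geq 0]$ and using $a_j^2 = 1$ yields the clean decomposition
\[
  \ff{0}(x;\barU) = \underbrace{\frac 1 m \sum_{j=1}^m \barUi(w_{0,j})^\T x \1[w_{0,j}^\T x\geq 0]}_{=: M(x)} + f(x;W_0),
\]
where $M(x)$ is an unbiased Monte Carlo estimator of $f(x;\barUi) = \bbE_{v\sim\cN}[\barUi(v)^\T x\1[v^\T x\geq 0]]$, and the second piece is small because of the random signs $a_j$ baked into $f(x;W_0)$. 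Consequently $|\ff{0}(x;\barU) - f(x;\barUi)| \leq |M(x) - f(x;\barUi)| + |f(x;W_0)|$. Pointwise for a single $\|x\|\leq 1$: each summand of $M(x)$ is bounded in magnitude by $R\|x\|\leq R$, so Hoeffding gives $|M(x) - f(x;\barUi)|\lesssim R\sqrt{\ln(1/\delta)/m}$; and \Cref{fact:gaussians:2}(1) with $V = W_0$ yields $|f(x;W_0)|\lesssim \rho\ln(1/\delta)$.

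To upgrade this pointwise approximation to a supremum over the ball $\{\|x\|\leq 1\}$ (as required to apply \Cref{fact:logistic:error} on the population), I would invoke \Cref{fact:covc:again} with the (random) singleton $\cS_0 := \{\barU\}$, $R_V := R/\rho$ (so that $\|\barU - W_0\|\leq R/\rho$ by construction), and $h_V(x) := f(x;W_0) - f(x;\barUi)$; the induced class $\cH$ then contains the single target function $\ff{0}(\cdot;\barU) - f(\cdot;\barUi)$. Condition~(2) of \Cref{fact:covc:again} is discharged at scale $r_2\lesssim R\sqrt{\ln(1/\delta)/m} + \rho\ln(1/\delta)$ using the pointwise bound just derived. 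For condition~(1), the $f(\cdot;W_0)$ piece of $h_V$ is $\rho\|W_0\|_2$-Lipschitz with $\|W_0\|_2\lesssim\sqrt m$ via \Cref{fact:gaussians:1}, while continuity of $f(\cdot;\barUi)$ reduces to bounding $R\|x-z\| + R\,\Pr_{v\sim\cN}[\sgn(v^\T x)\neq \sgn(v^\T z)]$, and the latter probability is $\arccos(\ip{x/\|x\|}{z/\|z\|})/\pi$, controlled via the $\arccos$ bound of \Cref{fact:acos}. Choosing the cover precision $\eps\approx 1/(dm)$ gives a cover of size $(\sqrt d/\eps)^d\leq (d^{3/2}m)^d$; combining the cover union bound with the sphere-control term $11 R_V\rho(\ln(edm/\delta)/m)^{1/4}$ supplied by \Cref{fact:covc:again} (with $R_V = R/\rho$ this becomes the $R\sqrt{d\ln(em^2d^3/\delta)}/m^{1/4}$ summand in $\tau$) yields the claimed bound, with the $6\rho d\ln(emd^2/\delta)$ term coming from the $\rho\ln(1/\delta_0)$ pointwise control after rescaling $\delta_0$ by the cover size.

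The hard part will be the small-$\|x\|$ regime: the angle bound from \Cref{fact:acos} scales like $\eps/\|x\|$, so the $f(\cdot;\barUi)$ continuity estimate degrades as $\|x\|\to 0$. This is handled exactly as in \Cref{fact:covc:helper} by a case split — when $\|x\|\leq O(\sqrt\eps)$ one simply uses $|f(x;\barUi)|\leq R\|x\|$ directly, and symmetrically for $\ff{0}(\cdot;\barU)$ — after which the $r_1$ condition holds with $r_1 = O(R\sqrt\eps)$ plus the $\rho\sqrt m\,\eps$ Lipschitz contribution from $f(\cdot;W_0)$, both negligible at the chosen scale. Plugging the resulting $\tau$ into \Cref{fact:logistic:error} completes the proof.
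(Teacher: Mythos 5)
Your proposal is correct and follows essentially the same route as the paper: the same decomposition $\ff{0}(x;\barU) = \frac1m\sum_j \barUi(w_{0,j})^\T x\1[w_{0,j}^\T x\geq 0] + f(x;W_0)$ with concentration of the Monte Carlo term (the paper phrases it via McDiarmid rather than Hoeffding) plus \Cref{fact:gaussians:2}(1) for $f(x;W_0)$, then \Cref{fact:covc:again} with $\cS_0=\{\barU\}$, $R_V=R/\rho$, $h_V(x)=f(x;W_0)-f(x;\barUi)$, the \Cref{fact:acos} angle bound with a small-$\|x\|$ case split, $\eps\approx 1/(dm)$, and finally \Cref{fact:logistic:error}. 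No substantive differences worth noting.
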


\begin{proof}[Proof of \Cref{fact:shallow:barUi:sample}]
  Throughout this proof, the subscript will be dropped and simply $W := W_0$,
  with rows $(w_j^\T)_{j=1}^m$.

  The bound on $\cR^{(0)}(\barU) - \cR(\barUi)$ follows by showing that
  with probability at least $1-6\delta$,
  \[
    \sup_{\|x\|\leq 1} \envert{ f(x;\barUi) - \ff{0}(x;\barU) }
    \leq \tau,
\]
  and then as usual applying \Cref{fact:logistic:error} and taking an expectation to obtain
  a bound between $\cR^{(0)}(\barU)$ and $\cR(\barUi)$.
  Meanwhile, this intermediate bound is first established for any fixed $x\in\R^d$, and then
  general $\|x\|\leq 1$ are handled via \Cref{fact:covc:again}.

  Fix an example $x\in\R^d$ and failure probability $\delta_0$ to be determined later
  when \Cref{fact:covc:again} is invoked.
  To first calculate the expected difference, note by definition of $\barU$ that
  \begin{align*}
    \bbE \ip{\nf(x;W)}{\barU - W}
    &= \bbE \frac \rho {\sqrt m} \sum_{j=1}^m a_j \ip{\baru_j - w_{j}}{x\1[w_{j}^\T x \geq 0]}
    \\
    &= \frac 1 m \sum_{j=1}^m \bbE \ip{\barUi(w_{j})}{x\1[w_{j}^\T x \geq 0]}
    \\
    &= f(x;\barUi),
  \end{align*}
  whereas
  \[
    \bbE \ip{\nf(x;W)}{W} = \bbE_a \sum_{j=1}^m a_j \bbE_{w_{j}} \srelu(w_{j}^\T x)
    = 0,
  \]
  thus
  \[
    \bbE \ff{0}(x;\barU) = \bbE \del{\ff{0}(x;\barU-W) + \ff{0}(x;W)} = f(x;\barUi).
  \]
  Controlling the deviations (still for this fixed $x$) will also consider the terms separately.
  The term $\ff{0}(x;\barU-W)$ will use McDiarmid's inequality;
  to verify the bounded differences property, consider pairs $(a,W)$ and $(a',W')$ which
  differ in only one element $(a_j',w_j')$, which also defines pairs $\barU$ and $\barU'$
  differing
  in just one $j$, meaning the vectors $\baru_j$ and $\baru_j'$;
  by Cauchy-Schwarz and the definition of $R$,
  \begin{align*}
    &\envert{ \ip{\nf(W)}{U-W} - \ip{\nf(W')}{U'-W'} }
    \\
    &=
    \envert{
    \frac {\rho}{\sqrt m} 
      a_j\ip{\baru_j - w_{j}}{ x \1[ w_{j}^\T  x_j \geq 0]}
    - \frac {\rho}{\sqrt m} a_j'\ip{\baru_j' -w_j'}{ x \1[ (w_j')^\T  x_j \geq 0]} }
    \\
    &=
    \frac 1 m 
    \envert{
      a_j^2\ip{\barUi(w_j)}{ x \1[ w_j^\T  x_j \geq 0]}
    - (a_j')^2\ip{\barUi(w_j')}{ x \1[ (w_j')^\T  x_j \geq 0]} }
    \\
    &\leq
    \frac {2 R\|x\|}{m}.
  \end{align*}
  Thus, by McDiarmid's inequality, with probability at least $1-2\delta_0$,
  \[
    \envert{ \ff{0}(x;\barU) - f(x;\barUi) }
    =
    \envert{ \ff{0}(x;\barU) - \bbE_{a,W}\ff{0}(x;\barU) }
    \leq
    \sqrt{\frac{2R^2\|x\|^2 \ln(1/\delta_0)}{m}}.
  \]
  Meanwhile, the term $\ff{0}(x;W)$ is explicitly controlled in
in the first part
  of \Cref{fact:gaussians:2}:
  with probability at least $1-3\delta_0$,
  \[
    |\ff{0}(x;W)| \leq 2 \rho\|x\| \ln(1/\delta_0).
  \]
  Together, with probability at least $1-5\delta_0$,
  \[
    \envert{\ff{0}(x;\barU) - f(x;\barUi)}
    \leq
    2 \rho \ln(1/\delta_0)
    +
    R  \sqrt{\frac{2\ln(1/\delta_0)}{m}}
    =: r_2.
  \]

  Controlling the behavior for all $\|x\|\leq 1$ simultaneously now relies upon
  \Cref{fact:covc:again}, but invoked to control a single matrix,
  namely choosing $\cS_0:= \{\barU\}$,
  and radius $R_V := R/\rho \geq \|\barU - W\|$.
  For the sake of applying \Cref{fact:covc:again},
  define for any $V\in\R^{m\times d}$ the mapping
  \[
    h_V(x) := f(x;W) - f(x;\barUi),
  \]
  which has no dependence on $V$,
  and note a corresponding function $h\in\cH$ as defined in \Cref{fact:covc:again}
  has the form
  \[
    h(x) = f(x;W) - f(x;\barUi) + \ip{\nf(x;W)}{V-W}
    = \ip{\nf(x;W)}{V} - f(x;\barUi);
  \]
  since $\cS_0 = \{\barU\}$, we only need to check the conditions of \Cref{fact:covc:again}
  for $V=\barU$.
  As above, for any fixed $\|x\|\leq 1$, with probability at least $1-5\delta_0$,
  $|h(x)|\leq r_2$.
  To invoke \Cref{fact:covc:again}, the restricted continuity property must be established.
  Specifically, let $\|x-z\| \leq \eps$ be given, with $\eps>0$ determined later.
  Writing
  \[
    \envert{ h_V(x) - h_V(z) }
    \leq
    \envert{f(x;W) - f(z;W)}
    +
    \envert{f(x;\barUi)- f(z;\barUi)},
  \]
  it suffices to check the restricted continuity property in both terms separately.
  For the first term,
  by \Cref{fact:gaussians:1},
  with probability at least $1-\delta_0$,
  \[
    \|W\|_2 \leq \sqrt{m} + \sqrt{d} + \sqrt{2\ln(1/\delta_0)},
  \]
  whereby the $1$-Lipschitz property of the ReLU over vectors gives
  \[
    \envert{f(x;W) - f(z;W)}
    \leq
    \rho \|\srelu(Wx) - \srelu(Wz)\|
    \leq
    \rho \|W (x-z)\|
    \leq
    \rho \del{ \sqrt{m} + \sqrt{d} + \sqrt{2\ln(1/\delta_0)} } \eps.
  \]
For the other term, first note by a standard Gaussian calculation that
  \begin{align*}
    \envert{f(z;\barUi) - f(x;\barUi)}
    &= \envert{ \int \ip{\barUi(v)}{z\1[v^\T z \geq 0] - x\1[v ^\T x\geq 0]}\dif\cN(v) }
    \\
    &\leq
    R \int \enVert{z\1[v^\T z \geq 0] - x\1[v ^\T x\geq 0]}\dif\cN(v)
    \\
    &\leq
    R\|z-x\|\Pr_{v\sim \cN}\sbr{ \1[v^\T z\geq 0] = \1[v^\T x\geq 0] }
    \\
    &\qquad
    + R (\|x\| + \|z\|) \Pr_{v\sim \cN}\sbr{ \1[v^\T z\geq 0] \neq \1[v^\T x\geq 0] }
    \\
    &\leq
    R\|z-x\|
    + R (\|x\| + \|z\|) \frac{2\arccos(\ip{x/\|x\|}{z/\|z\|})}{2\pi}.
  \end{align*}
  If $\|x\| \leq 2\eps$, then $z\leq 3\eps$, and the last term can be upper bounded as $5R\eps$.
  On the other hand, if $\|x\|>2\eps$,
  whereby $\|x\| + \|z\| \leq 2\|x\| + \eps \leq 3\|x\|$,
  then \Cref{fact:acos} implies
  \[
    R (\|x\| + \|z\|) \frac{2\arccos(\ip{x/\|x\|}{z/\|z\|})}{2\pi}
    \leq R (\|x\| + \|z\|) \frac{\eps\sqrt{8}}{\|x\|\pi} \leq 3R\eps,
  \]
Thus, by \Cref{fact:covc:again} with radius $R_V := R/\rho$
  and filter set $\cS_0 := \{\barU\}$ as above,
  and additionally choosing $\eps := 1/(dm)$,
  with overall probability at least $1 - (1+5(\sqrt{d}/\eps)^d)\delta_0$,
  \begin{align*}
    \sup_{\|x\|\leq 1}\envert{\ff{0}(x;\barU) - f(x;\barUi)}
    &\leq 
    2\rho\ln(e/\delta_0)
    +
    R  \sqrt{\frac{2\ln(e/\delta_0)}{m}}
    \\
    &\quad +
    \eps \rho \del{\sqrt{m} + \sqrt{d} + \sqrt{2\ln(e/\delta_0)}}
    + (1+5) R\eps
    \\
    &\quad +
11 R_V \rho \del{\frac{\ln(edm/\delta_0)}{m}}^{1/4}
    \\
    &\leq
    6\rho\ln(e/\delta_0)
    +
    20 R_V \rho \del{\frac{\ln(edm/\delta_0)}{m}}^{1/4},
  \end{align*}
  and the final bound comes via the choice
  $\delta_0 := \delta/(md^2)^d$.
\end{proof}

The next result establishes that for any $p_y$, there exists a conditional probability
model defined by $\barUi$ which is arbitrarily close, which is one of the keys to the
consistency proof (cf. \Cref{fact:consistency}).
As discussed briefly in \Cref{rem:main}, this construction requires a bias term,
which is simulated by replacing the input $x\in\R^d$ with $(x,1)/\sqrt{2}\in\R^{d+1}$,
and otherwise proceeding without modification.

\begin{lemma}\label{fact:shallow:barUi:lusin}
  Suppose $\mu_x$ and $p_y$ are Borel measurable, and $\mu_x$ is supported on $\|x\|\leq 1$.
  Given any $x\in\R^d$,
let $\tilde x := (x, 1)/\sqrt{2} \in\R^{d+1}$
  denote the vector
  obtained by appending the constant $1$.
  Then for any $\eps>0$,
  there exist infinite-width weights $\barUi:\R^{d+1}\to\R^{d+1}$ satisfying
  $R:=\sup_{\tv\in\R^{d+1}} \barUi(\tv) < \infty$ and
  \[
    \cR(\barUi) \leq \barcR + \eps.
  \]
\end{lemma}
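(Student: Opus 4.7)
The plan is to reduce the claim to uniformly approximating a bounded continuous function by an infinite-width NTK-style representation, then invoke the existing universal approximation result of \citet{ntk_apx}. Since $\klb(p_y,\phi_\infty) = \cR(\barUi) - \barcR$, the target is to build a $\barUi$ whose induced logit $f(\tilde x;\barUi)$ is uniformly close to the logit of $p_y$. The obstacles are that the Bayes logit $\log(p_y/(1-p_y))$ is in general unbounded and merely measurable, and that our predictor class consists of sigmoids of homogeneous networks, so the appended bias coordinate $(x,1)/\sqrt 2$ is essential for expressing non-homogeneous continuous functions.

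I would proceed in three approximation steps, applied in decreasing order of severity. First, truncate: set $p_y^{(\alpha)} := \max\{\alpha,\min\{1-\alpha,p_y\}\}$ for small $\alpha>0$, so the corresponding logit $g_\alpha := \log(p_y^{(\alpha)}/(1-p_y^{(\alpha)}))$ is bounded by $\log(1/\alpha)$; a direct pointwise computation and dominated convergence give $\klb(p_y,p_y^{(\alpha)}) \to 0$ as $\alpha \to 0$. Second, apply Lusin's theorem \citep[Theorem 7.10]{folland} on the compact support of $\mu_x$ to replace $g_\alpha$ by a bounded continuous $\tilde g$ (clipped to the same range) agreeing with $g_\alpha$ outside a set of $\mu_x$-measure at most $\beta$; the resulting change in logistic risk is bounded by $2\beta\log(1/\alpha)$. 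Third, with $\tilde g$ continuous on the compact lifted set $\{(x,1)/\sqrt 2 : \|x\| \leq 1\} \subset \R^{d+1}$, invoke the infinite-width NTK universal approximation result of \citet{ntk_apx} — exactly the step that needs the bias lift, since homogeneous networks cannot approximate general continuous functions — to obtain $\barUi$ with $\sup_{\tv}\|\barUi(\tv)\| < \infty$ and $\sup_{\|x\|\leq 1}|f(\tilde x;\barUi) - \tilde g(x)| \leq \gamma$. The multiplicative-error property of the logistic loss (\Cref{fact:logistic:error}) then yields $\cR(\barUi) \leq e^\gamma \cR(\tilde g)$, and choosing $\gamma \ll \beta \ll \alpha \ll \eps$ drives $\cR(\barUi) - \barcR$ below $\eps$.

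The main obstacle is that binary KL is not controlled by a sup-norm distance on the logits in general, since it blows up whenever one model saturates to $0$ or $1$. This is exactly why the clipping step must come \emph{first}: the later Lusin and approximation steps then take place in a regime where $\phi$ is bi-Lipschitz and uniform logit approximation translates cleanly into binary KL (and hence excess risk) control. A secondary technicality is that the Barron/NTK integral representation of \citet{ntk_apx} may be stated only for sufficiently smooth targets; if so, a preliminary mollification of $\tilde g$ against a smooth bump (which itself only perturbs $\tilde g$ uniformly on the compact support) provides the required regularity, after which the three errors combine as above.
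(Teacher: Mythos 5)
Your proposal follows essentially the same route as the paper's proof: clamp $p_y$ away from $\{0,1\}$ so the Bayes logit becomes bounded, apply Lusin's theorem on the compact support to pass to a bounded continuous target, and then invoke the infinite-width approximation result of \citet{ntk_apx} on the bias-lifted input, combining the three errors via properties of the logistic loss. The only differences are cosmetic (the paper bounds the clamping error explicitly by $\tau/(1-\tau)$ rather than via dominated convergence, and closes with the $1$-Lipschitz additive bound rather than the multiplicative property), so the argument is correct and matches the paper.
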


\begin{proof}Throughout this proof, define $\tau := \min\{\eps/4,1/2\}$.

  As is standard in the theory of classification calibration
  \citep{zhang_convex_consistency,bartlett_jordan_mcauliffe},
  for the logistic loss,
  the optimal population risk is achieved by a measurable function $\bar f:\R\to\bar \R$
  which satisfies
  \[
    \bar f(x) := \argmin_{r \in \R\cup{\pm\infty}} p_y(x) \ell(r) + (1-p_y(x))\ell(-r)
    = \phi^{-1}(p_y(x)) = \ln \frac {p_y(x)}{1-p_y(x)}\qquad \mu_x\textup{-a.e. }x,
  \]
  which may take on the values $\pm\infty$.
  To avoid these $\pm\infty$,
  define a clamping of $p_y$ as
  \[
    p_1(x) := \max\{ \tau, \min\{ 1-\tau, p_y(x) \}\},
  \]
  and clamped logits $f_1(x) := \phi^{-1}(p_1(x))$ (which now is bounded).  As is again usual
  in the literature on classification calibration
  \citep{zhang_convex_consistency,bartlett_jordan_mcauliffe},
  \begin{align*}
    \cR(f_1) - \barcR
    &= \int \del{ p_y(x) \ln \frac {p_y(x)}{p_1(x)} + (1-p_y(x)) \ln \frac {1-p_y(x)}{1-p_1(x)}}\dif\mu_x(x)
    \\
    &=
    \int_{p_y(x)\in[0,\tau)}
    \del{ p_y(x) \ln \frac {p_y(x)}{\tau} + (1-p_y(x)) \ln \frac {1-p_y(x)}{1-\tau}}\dif\mu_x(x)
    \\
    &\quad+
    \int_{p_y(x)\in(1-\tau,1]}
    \del{ p_y(x) \ln \frac {p_y(x)}{1-\tau} + (1-p_y(x)) \ln \frac {1-p_y(x)}{\tau}}\dif\mu_x(x)
    \\
    &\leq \frac {\tau}{1-\tau} \leq 2\tau.
\end{align*}

Since $p_1$ is Borel measurable (due to Borel measurability of $p_y$),
  then $f_1$ is Borel measurable (since $\phi^{-1}$ is continuous along $[\tau,1-\tau]$),
  and therefore we may apply Lusin's Theorem \citep[Theorem 7.10]{folland}:
  there exists a continuous function $g$
  and a set $S$ satisfying
  \[
    |g| \leq |f_1|\leq \sup_x |f_1(x)| < \infty,
    \qquad
    g_{|S} = (f_1)_{|S},
    \qquad
    \mu_x(S^c) \leq \frac {\tau}{\ell(0) + \sup_x |f_1(x)|},
  \]
  whereby since $\ell$ is $1$-Lipschitz,
  \begin{align*}
    \cR(g) - \cR(f_1)
    &\leq \int \1[x \in S^c] \ell(-yg(x))\dif\mu(x,y)
    \\
    &\leq
    \int \1[x\in S^c] \ell(|g(x)|)\dif\mu_x(x)
    \\
    &\leq \mu_x(S^c)(\ell(0) + \sup_x |g(x)|)
    \\\
    &\leq \tau.
  \end{align*}
  Since $g$ is continuous, it is uniformly continuous over $\|x\|\leq 1$,
  and thus there exists a $\delta>0$ so that the modulus of continuity $\omega_g(\delta)$
  at scale $\delta$ is at most $\tau$, meaning
  \[
    \sup_{\|x-x'\|\leq \delta} | g(x) - g(x') | \leq \omega_g(\delta) \leq \tau.
  \]
  By results in neural network universal approximation
  \citep[Theorem 4.3]{ntk_apx}, there exists
infinite-width weights $\barUi:\R^{d+1}\to\R^{d+1}$ satisfying
  $R:=\sup_{\tx} \|\barUi(\tx)\| < \infty$
  and
  \[
    \sup_{\|x\|\leq 1} \envert{f(\tx;\barUi) - g(x)} \leq \omega_g(\delta) \leq \tau,
  \]
  which again by the $1$-Lipschitz property of $\ell$ means $\cR(\barUi) - \cR(g)\leq \tau$.
  Combining all these pieces,
  \begin{align*}
    \cR(\barUi)
    -\barcR
    =
    \sbr{\cR(\barUi) - \cR(g)}
    +
    \sbr{\cR(g) - \cR(f_1)}
    +
    \sbr{\cR(f_1) - \barcR}
    \leq
    \tau + \tau + 2\tau
    \leq
    \eps,
  \end{align*}
  as desired.
\end{proof}

\subsection{Proofs of main results: \Cref{fact:main} and \Cref{fact:consistency}}

The proof of \Cref{fact:main} and a precise restatement are as follows.  This restatement
has fully explicit constants,
and is invoked in the proof of \Cref{fact:consistency} to ease sanity-checking.

\begin{theorem}[Refined restatement of \Cref{fact:main}]\label{fact:main:locking:2}
  Let temperature $\rho > 0$ and reference model
  $\barUi$ be given with $R := \max\{4, \rho, \sup_v\|\barUi(v)\|\}<\infty$,
  and define a corresponding conditional model $\phi_\infty(x) := \phi(f(x;\barUi))$.
  Let optimization accuracy $\epsopt$ and radius $\radopt\geq R/\rho$
  be given,
  define effective radius
  $B := \min\cbr[1]{\radopt,\ {}\frac{3R}{\rho} + \frac{4e}{\rho} \sqrt{t}\sqrt{e^{\tau_0}\cR(\barUi) + R\tau_n} }$,
  where generalization error $\tau_n$ and additionally linearization error $\tau_1$ 
  and sampling error $\tau_0$ are defined as
  \begin{align*}
    \tau_n &:= \frac {80\del{d \ln(em^2 d^3/\delta)}^{3/2}}{\sqrt n},
    \\
    \tau_1 &:= \frac {100 \rho B^{4/3}\sqrt{d\ln(enm^2d^3/\delta)}}{m^{1/6}},
    \\
    \tau_0
    &:=
6\rho d\ln(emd^2/\delta)
    +
    \frac{20R\sqrt{d\ln(em^2d^3/\delta)}}{m^{1/4}},
  \end{align*}
  where it is assumed $\tau_1\leq 2$ and $m\geq \ln(emd)$.
  Choose step size $\eta := 4/\rho^2$, and run gradient descent
  for $t:=1/(8\epsopt)$ iterations, selecting iterate
  $W_{\leq t} :=\argmin\{\hcR(W_i) : i \leq t, \|W_i-W_0\|\leq \radopt\}$
  with simultaneously small norm and empirical risk.
  Then, with probability at least $1-25\delta$,
  \begin{align*}
& \cR(W_{\leq t}) - \barcR
    &\text{(logistic error)}\phantom{.}
    \\
  \leq\qquad& \klb(p_y, \phi_{\infty}) + \del[1]{e^{\tau_1 + \tau_0} - 1}\cR(\barUi)
            &\text{(reference model error)}\phantom{.}
            \\
  + \quad & e^{\tau_1} R^2 \epsopt
          &\text{(optimization error)}\phantom{.}
          \\
  + \quad &
e^{\tau_1} (\rho B + R)\tau_n
          & \hspace{2em}\text{(generalization error)},\\
          \intertext{where the classification and calibration errors satisfy}
& {}\cR(W_{\leq t}) - \barcR
    &\text{(logistic error)}\phantom{.}
    \\
            \geq \qquad& 2\int \del{\phi(f(x;W_{\leq t})) - p_y}^2\dif\mu_x(x)
               &\text{(calibration error)}\phantom{.}
               \\
   \geq \qquad &\frac 1 2 \del{ \cRz(W_{\leq t}) - \barcRz}^2
                &\text{(classification error)}.
  \end{align*}
  Lastly, for any $\epsilon > 0$, there exists $\barUi^{(\eps)}$
  with $\sup_v \|\barUi^{(\eps)}(v)\|<\infty$
  and whose conditional model $\phi_{\infty}^{(\eps)}(x) := \phi(f((x,1)/\sqrt{2};\barUi^{(\eps)}))$
  satisfies $\klb(p_y, \phi_{\infty}^{(\eps)}) \leq \epsilon$.
\end{theorem}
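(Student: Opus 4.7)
The plan is to compose the three ingredients already developed in the paper: sampling a finite-width reference matrix from $\barUi$ (\Cref{fact:shallow:barUi:sample}), the gradient descent risk bound with an arbitrary reference matrix $Z$ (\Cref{fact:shallow:magic:linearized}), and the logistic loss identities giving $\cR(f)-\barcR=\klb(p_y,\phi_f)$ plus the Pinsker-style downstream inequalities (\Cref{fact:logistic:error}). The existence statement at the end is a direct invocation of \Cref{fact:shallow:barUi:lusin}, so it is essentially decoupled from the rest of the argument.

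First I would define $\barU\in\R^{m\times d}$ via the explicit sampling formula \eqref{eq:barUi:sample}, which guarantees $\|\barU-W_0\|\leq R/\rho$. Setting $Z:=\barU$ in \Cref{fact:shallow:magic:linearized} with $R_Z:=R/\rho$, I would check the side conditions: with $\eta=4/\rho^2$ one has $\eta\rho=4/\rho\leq R/\rho$ and $1\leq R/\rho$ (since $R\geq\max\{4,\rho\}$), and $\radopt\geq R/\rho=R_Z$ by hypothesis, so the lemma applies. The resulting population risk inequality is
\begin{align*}
\cR(W_{\leq t})\leq e^{4\tau}\cR^{(0)}(\barU)+e^{3\tau}(\rho R_Z)^2\epsopt+e^{4\tau}(B+R_Z)\rho\tau_n,
\end{align*}
with $(\rho R_Z)^2=R^2$ and with the generalization scale $\tau_n$ from \Cref{fact:shallow:gen:frob:1}. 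The linearization scale $\tau$ in the magic lemma is then absorbed into a single $\tau_1$ (up to the additional $\ln n$ from the earlier union bound), matching the statement.

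Next I would feed in \Cref{fact:shallow:barUi:sample} to replace the $\cR^{(0)}(\barU)$ term by $e^{\tau_0}\cR(\barUi)$, and similarly use the empirical-to-population side of the same lemma to show that the effective radius $B$ defined in the theorem statement indeed upper bounds the data-dependent radius $3R_Z+2e\sqrt{\eta t\hcR^{(0)}(\barU)}$ appearing in \Cref{fact:shallow:magic:linearized}: one controls $\hcR^{(0)}(\barU)\leq\cR^{(0)}(\barU)+R\tau_n\leq e^{\tau_0}\cR(\barUi)+R\tau_n$ using generalization at radius $R_Z$, and plugging into $\eta=4/\rho^2$ yields the $4e\sqrt t/\rho$ prefactor in the statement. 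Rewriting $e^{4\tau+\tau_0}\cR(\barUi)=\cR(\barUi)+(e^{\tau_1+\tau_0}-1)\cR(\barUi)$ and applying the identity $\cR(\barUi)-\barcR=\klb(p_y,\phi_\infty)$ from \Cref{fact:logistic:error} produces exactly the three terms on the upper bound side: the reference model term $\klb(p_y,\phi_\infty)+(e^{\tau_1+\tau_0}-1)\cR(\barUi)$, the optimization term $e^{\tau_1}R^2\epsopt$, and the generalization term $e^{\tau_1}(\rho B+R)\tau_n$. The classification and calibration lower bounds are then immediate from the second part of \Cref{fact:logistic:error}, and the existence of $\barUi^{(\eps)}$ with $\klb\leq\eps$ is precisely the content of \Cref{fact:shallow:barUi:lusin} combined with $\cR(f)-\barcR=\klb(p_y,\phi_f)$.

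The main obstacle is bookkeeping rather than new ideas: tracking how the many failure probabilities from \Cref{fact:shallow:magic:linearized}, \Cref{fact:shallow:gen:frob:1}, \Cref{fact:shallow:linearization:frob}, and \Cref{fact:shallow:barUi:sample} compose into the claimed $1-25\delta$ success probability, and verifying that the constant $100$ chosen for $\tau_1$ in the refined statement absorbs every one of the $e^\tau,e^{2\tau},e^{3\tau},e^{4\tau}$ factors produced along the way together with the extra $\ln n$ from applying the linearization lemma to the training sample. A secondary care point is that the data-dependent effective radius in \Cref{fact:shallow:magic:linearized} must be replaced by the distribution-level quantity $B$ of the theorem, which requires using the generalization bound once before invoking the optimization bound rather than after; this has to be done consistently with the radius argument in the proof of \Cref{fact:shallow:magic:linearized} itself, so I would state it as a separate preparatory step before applying the magic lemma.
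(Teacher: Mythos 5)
Your proposal is correct and follows essentially the same route as the paper's own proof: sample $\barU$ from $\barUi$ via \cref{eq:barUi:sample}, control $\cR^{(0)}(\barU)\leq e^{\tau_0}\cR(\barUi)$ with \Cref{fact:shallow:barUi:sample}, instantiate \Cref{fact:shallow:magic:linearized} with $Z=\barU$ and $R_Z=R/\rho$ (absorbing the $e^{k\tau}$ factors into $\tau_1$ and noting the theorem's $B$ dominates the lemma's data-dependent radius, which only relaxes the guarantee), and finish with \Cref{fact:logistic:error} and \Cref{fact:shallow:barUi:lusin}. The only cosmetic difference is ordering: the paper obtains $\hcR^{(0)}(\barU)\leq\cR^{(0)}(\barU)+R\tau_n$ as part of the magic lemma's own conclusion rather than as a separate preparatory generalization step.
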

\begin{proof}[Proof of \Cref{fact:main} and simultaneously \Cref{fact:main:locking:2}]
  This proof focuses on the first inequality, upper bounding $\cR(W_{\leq t})-\barcR$;
  for the other two statements,
  the chain of inequalities with other error metrics are from \Cref{fact:logistic:error},
  and the approximation of arbitrary Borel measurable $p_y$ is from
  \Cref{fact:shallow:barUi:lusin}.
  (The only difference between \Cref{fact:main:locking:2} here and
  \Cref{fact:main} in the body
  is that the ``$\tcO$'' hides constants and $\ln(m)$ and $\ln(d)$
  (but not $\ln(n)$).

  Returning to the first inequality,
  let $\barU$ be the canonical sample of $\barUi$ as in \cref{eq:barUi:sample},
  where $\|\barU - W_0\|\leq R/\rho$ by construction.
  By \Cref{fact:shallow:barUi:sample},
  with probability at least $1 - 6\delta$,
  then $\cR^{(0)}(\barU) \leq e^{\tau_0} \cR(\barUi)$,
  where $\tau_0$ is as in the statement (cf. \Cref{fact:main:locking:2}).

  Next instantiate \Cref{fact:shallow:magic:linearized}
  with reference matrix $Z = \barU$ and $R_Z := R/\rho$,
  whereby the definition of $R$ gives $R_Z \geq \{1,\eta\rho,\|\barU-W_0\|\}$ as needed;
  as such, ignoring an additional failure probability at most $19\delta$,
  setting $\tau:=\tau_1/4$ in the invocation,
  and lastly subtracting $\barcR$ from both sides,
  \begin{align*}
    \cR(W_{\leq t}) - \barcR
    &\leq
    e^{\tau_1} \cR^{(0)}(\barU)
    + e^{\tau_1} (\rho R_Z)^2 \epsopt
    +
e^{\tau_1} (\rho B + \rho R_Z)\tau_n
    -\barcR
    \\
    &\leq
    \del{e^{\tau_1 + \tau_0} - 1} \cR(\barUi)
    + \klb(p_y, \phi_{\infty})
    + e^{\tau_1} R^2 \epsopt
+
e^{\tau_1} (\rho B + R)\tau_n.
  \end{align*}
  This invocation of \Cref{fact:shallow:magic:linearized} also guarantees
  $\hcR^{(0)}(\barU) \leq \cR^{(0)}(\barU) + R \tau_n$
  which together with the earlier inequality 
  $\cR^{(0)}(\barU) \leq e^{\tau_0} \cR(\barUi)$ provides the form of $B$ used in the statement
  (this $B$ upper bounds the one defined in \Cref{fact:shallow:magic:linearized}, which is fine
  since it only relaxes the guarantees provided there).
\end{proof}

Making use of \Cref{fact:main:locking:2}, the proof of the consistency statement,
\Cref{fact:consistency}, is as follows.
Note that we are always working with bias-augmented inputs within this statement and
its proof; e.g., $\widehat W_n\in\R^{m^{(n)}\times (d+1)}$.

\begin{proof}[Proof of \Cref{fact:consistency}]
  Let $\eps>0$ be arbitrary, and define the event
  \[
    E_n := \sbr{ \cR(\widehat W_n) \geq \barcR + \eps }.
  \]
  Following a standard scheme for consistency proofs
  \citep[Corollary 12.3]{schapire_freund_book_final},
  it suffices, thanks to the Borel-Cantelli lemma, to prove
  \begin{equation}
    \sum_{n\geq 1} \Pr[E_n] < \infty;
    \label{eq:consistency:b-c}
  \end{equation}
  that is to say, by the Borel-Cantelli lemma, \cref{eq:consistency:b-c} implies
  $\limsup_{n\to \infty} \cR(\widehat W_n) - \barcR \leq \eps$ almost surely,
  and since $\cR(\widehat W_n) \geq \barcR$ and since $\eps>0$ was arbitrary,
  it follows that $\cR(\widehat W_n) \to \barcR$ almost surely.
  Moreover, by \Cref{fact:logistic:error},
  for each $n$ there are the inequalities
  \[
    \frac 1 2 \del{ \cRz(\widehat W_n) - \barcRz}^2
    \leq 2\int (\widehat\phi_n(x) - p_y(x))^2\dif\mu_x(x)
\leq
    \cR(\widehat W_n) - \barcR,
  \]
  thus $\cR(\widehat W_n)\to\barcR$ also implies $\widehat\phi_n \to p_y$ in $L_2(\mu_x)$
  almost surely,
  and $\cRz(\widehat W_n) \to \barcRz$ almost surely.

  To establish \cref{eq:consistency:b-c},
  first use the last part of \Cref{fact:main:locking:2} to fix a $\barUi$ 
  with $\klb(p_y, \widehat\phi_n) \leq \eps/2$,
  and define $R:=\sup_v \|\barUi(v)\|<\infty$.
  To bound $\Pr[E_n]$, instantiate \Cref{fact:main:locking:2} for every $n$ with 
  reference model $\barUi$ and corresponding $R<\infty$,
  and failure probability $\delta^{(n)} := 1/n^2$,
  and optimization radius $\radopt = \infty$, meaning a corresponding effective radius
  given by \Cref{fact:main:locking:2} as
  \[
    B^{(n)} = \frac {1}{\rho^{(n)}}\del{3R + 4e\sqrt{t^{(n)}}\sqrt{e^{\tau_0^{(n)}}\cR(\barUi) + R \tau_n}}
    .
\]
  Inspecting all the terms in \Cref{fact:main:locking:2}, it will now be argued that
  while the term $\klb(p_y,\widehat\phi_n)$ stays level and is at most $\eps/2$ independent of $n$,
  all other terms go to $0$.  Returning to
  $B^{(n)}$, since $\tau_n = \tcO(1/\sqrt{n})$ and $\tau_0^{(n)} \to 0$ (which will be shown later), then
  $B^{(n)}=  \tcO(\sqrt{t^{(n)}}/\rho^{(n)})$,
  whereby
  \begin{align*}
    \tau_1^{(n)}
    &=
    \tcO\del{\frac {\rho^{(n)} (B^{(n)})^{4/3}}{(m^{(n)})^{1/6}}}
    =
    \tcO\del{\frac {(t^{(n)})^{2/3}}{(m^{(n)})^{1/6}(\rho^{(n)})^{1/3}}}
    \\
    &=
    \tcO\del{\frac {(t^{(n)})^{2/3}}{(m^{(n)})^{1/8}}}
    =
    \tcO\del{\frac {n^{\frac{2}{3}(1-\xi)}}{n^{\frac{5}{3}(1-\xi)}}}
    =
    \tcO\del{n^{\xi-1}}\to 0.
  \end{align*}
  Next,
  \[
    \tau_0^{(n)}
    =
    \tcO\del{\rho^{(n)} + \frac {1}{(m^{(n)})^{1/4}}}
    =
    \tcO\del{ n^{\frac{5}{3}(\xi-1)} + n^{\frac{10}{3}(\xi-1)}} \to 0,
  \]
  which together with the asymptotics of $\tau_1^{(n)}$
  gives 
  $\exp(\tau_0^{(n)} + \tau_{1}^{(n)}) - 1 \to 0$ 
  and $\exp(\tau_1^{(n)})R^2\epsopt^{(n)}\to 0$.
  The final term to consider is
  \[
    \exp(\tau_1^{(n)})\rho^{(n)}B^{(n)}\tau_n
    = \tcO\del{ \sqrt{\frac{t^{(n)}}{n}}}
    = \tcO(n^{- \xi/2})
    \to 0.
  \]
  As such, all terms go to zero with $n$ (excepting $\klb(p_y,\widehat\phi_n)\leq \eps/2$,
  which is fine),
  and there exists $N_0$ so that for all $n> N_0$,
  all conditions of the bound are met, and with the exclusion of a failure probability of
  $\delta^{(n)}$, the bound implies $\cR(\widehat W_n) < \barcR + \eps$.
  Thus $n\geq N_0$ implies $\Pr[E_n] \leq \delta^{(n)} = 1/n^2$,
  and
  \[
    \sum_{n\geq 1} \Pr[E_n]
    \leq
    \sum_{n \leq N_0} 1
    + \sum_{n > N_0} \frac 1 {n^2}
    \leq
    N_0 + \frac {\pi^2}{6}
    < \infty,
  \]
  which establishes \cref{eq:consistency:b-c} and completes the proof.
\end{proof}

\section{Proof of \Cref{fact:lb:local}}

\Cref{fact:lb:local} is a consequence of the following more refined statement,
which also suggests the method of proof, and is consistent with \Cref{fig:interp}.

\begin{lemma}
  \label{fact:lb:pairs}
  Suppose marginal distribution $\mu_x$ is continuous and compactly supported
  on $[0,1]$,
  $p_y$ is continuous, and that either
  $\mu_x(p_y^{-1}((0,1/2))) > 0$ or $\mu_x(p_y^{-1}((1/2,1)) > 0$,
  meaning $p_y$ is outside $\{0,1/2,1\}$ on a set which has positive measure according
  to $\mu_x$.

  Then there exists a constant $c\in (0,1/4)$ (depending only on $\mu_x$ and $p_y$)
  so that with probability at least $1-7\delta$
  over the draw of $((x_i,y_i))_{i=1}^n$ with $n \geq \ln(1/\delta)/c$,
  there exists an interval $I\subseteq [0,1]$,
  and a subset of pairs of indices indices $S \subseteq [m]^2$ satisfying the following properties.
  \begin{enumerate}
    \item
      Either $p_y \in [c,1/2-c]$ everywhere on $I$, or $p_y \in [1/2+c,1-c]$ everywhere on $I$;
      henceforth let $\hat y := \sgn(p_y-1/2)$ designate the correct (Bayes) prediction
      over $I$.

    \item
      If $(i,k)\in S$, then $x_i < x_k = \min\{ x_s : x_s \geq x_i\}$,
      meaning $x_k$ is the first point to the right of $x_i$,
      and moreover the corresponding labels $y_i = y_k = -\hat y$ agree with each
      other but are incorrect.

    \item
      For any local interpolation rule $f\in\cF_n$ (cf. \Cref{fact:lb:local}),
      \[
        \cRz(f) \geq \bcRz + c.
      \]
  \end{enumerate}
\end{lemma}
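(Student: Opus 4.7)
The plan is to extract an interval $I$ on which $p_y$ lies in a fixed strict sub-interval of $(0,1/2)$ or $(1/2,1)$, show that with high probability a constant $\mu_x$-mass of $I$ is covered by consecutive sample pairs whose \emph{both} labels disagree with the Bayes label, and then use the local interpolation property to turn each such pair into an interval of definite misclassification. \textbf{Step 1 (noisy interval $I$).} Without loss of generality $\mu_x(p_y^{-1}((0,1/2))) > 0$ (the other case is symmetric). Since $p_y$ is continuous, this preimage is open, hence a countable union of open intervals, one of which has positive $\mu_x$-mass; nest a closed subinterval $I$ of positive mass inside, whereupon compactness yields $c_0 \in (0,1/4)$ with $p_y \in [c_0, 1/2-c_0]$ throughout $I$. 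Set $\hy := -1$, the Bayes label on $I$; property 1 is now built in.

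\textbf{Step 2 (bad pairs).} A multiplicative Chernoff bound on the indicators $\1[x_i \in I]$ gives, with probability $\geq 1-\delta$, at least $m \geq n\mu_x(I)/2$ samples in $I$; since $I$ is an interval, these form a contiguous block $z_1 < \cdots < z_m$ in the global sort. Conditioning on $(x_i)_{i=1}^n$, the labels $y_{z_j}$ are independent with $\Pr[y_{z_j}=+1] = p_y(z_j) \in [c_0, 1/2-c_0]$. Call pair $j$ \emph{bad} when $y_{z_j} = y_{z_{j+1}} = +1 = -\hy$, exactly matching property 2. \textbf{Step 3 (mass of bad intervals).} Define $T := \sum_{j<m} \1[\text{pair $j$ bad}]\,\mu_x([z_j, z_{j+1}])$. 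By independence of labels, $\bbE[T \mid x] \geq c_0^2\,\mu_x([z_1, z_m]) \geq c_0^2\mu_x(I)/2$ with high probability, where the second step uses a standard order-statistic tail bound on the two extremal gaps of the block in $I$. A single label flip perturbs $T$ by at most $2W$, where $W := \max_j \mu_x([z_j, z_{j+1}]) = \cO(\log(n/\delta)/n)$ w.h.p.\ for continuous $\mu_x$, so McDiarmid on the $m$ label coordinates yields $T \geq c_0^2 \mu_x(I)/4$ with probability $\geq 1-\delta$ once $n$ is large enough in terms of $c_0$ and $\mu_x(I)$.

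\textbf{Step 4 (risk).} For any $f \in \cF_n$ and any bad pair $j$, the local interpolation property forces $f > 0$ throughout $[z_j, z_{j+1}]$, so $\sgn(f) = +1 \neq \hy$ there; combined with $|2p_y-1| \geq 2c_0$ on $I$,
\[
\cRz(f) - \bcRz \;\geq\; \int_I \1[\sgn(f) \neq \hy]\,|2p_y - 1|\,\dif\mu_x \;\geq\; 2c_0\,T \;\geq\; c_0^3 \mu_x(I)/2,
\]
giving property 3 with $c := c_0^3 \mu_x(I)/2$ (shrunk into $(0,1/4)$ if needed). A union bound absorbs the $\cO(1)$ failure events from Steps 1--3 into the stated $1-7\delta$ probability. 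The main obstacle is the concentration in Step 3: the weights $\mu_x([z_j, z_{j+1}])$ are coupled order statistics, but conditioning on $(x_i)_{i=1}^n$ freezes them and reduces the delicate step to a clean label-only McDiarmid application controlled by the maximum gap $W$, whose tail in turn is a routine continuous-$\mu_x$ order-statistic bound.
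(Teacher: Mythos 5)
Your Steps 1, 2, and 4 are sound and essentially parallel the paper's argument (same extraction of the interval $I$, same construction of consecutive wrongly-labelled pairs, same pointwise comparison against the Bayes rule). The genuine problem is the concentration claim in Step 3. You apply McDiarmid to the mass-weighted sum $T$ over the label coordinates with bounded-difference constant $2W$, where $W$ is the largest $\mu_x$-gap between consecutive points of the block, and you assert the conclusion holds ``once $n$ is large enough in terms of $c_0$ and $\mu_x(I)$.'' That assertion is false as stated: the best high-probability bound on the maximal spacing is $W = \Theta(\ln(n/\delta)/n)$, so even using the sharper form of McDiarmid with $\sum_j c_j^2 \lesssim W\mu_x(I)$, the failure probability for a constant-size deviation is $\exp\bigl(-\Omega\bigl(c_0^4\mu_x(I)\, n/\ln(n/\delta)\bigr)\bigr)$. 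Making this at most $\delta$ forces $n/\ln(n/\delta) \gtrsim \ln(1/\delta)$, i.e.\ roughly $n \gtrsim \ln^2(1/\delta)$, with the same obstruction if you replace McDiarmid by Bernstein after an odd/even split. Consequently the sample-size threshold in the lemma, $n \geq \ln(1/\delta)/c$ with $c$ depending only on $\mu_x$ and $p_y$, is not achieved by your argument: the required $n$ necessarily grows superlinearly in $\ln(1/\delta)$, because a single large gap (which you can only exclude at price $\ln(n/\delta)/n$) can carry a constant fraction of the mass of $I$ and then the label randomness alone cannot concentrate $T$.

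The paper circumvents exactly this issue by never weighting by the random gap masses in the label-concentration step. It fixes a deterministic threshold $\Delta = \Theta(\mu_x(I)/n)$, counts via a Chernoff bound (over labels) the number of consecutive incorrectly-labelled pairs, and separately bounds the number of consecutive pairs whose hulls have mass below $\Delta$ by applying McDiarmid over the $x$-sample to a counting function with bounded differences $2$, with the expectation controlled by a staggered binning (birthday-type) argument; then at least half the bad pairs have mass at least $\Delta$, giving total mass $\Omega(\mu_x(I))$. Every step there costs only $n \gtrsim \ln(1/\delta)$. If you want to salvage your route, you would need to truncate the gap weights at scale $\Theta(1/n)$ and then show the truncated gaps still capture a constant fraction of $\mu_x(I)$ with probability $1-\delta$ under $n \gtrsim \ln(1/\delta)$ --- which is essentially the paper's binning-plus-McDiarmid-over-$x$ argument, i.e.\ the ingredient currently missing from your proposal.
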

\begin{proof}[Proof of \Cref{fact:lb:pairs} (and simultaneously \Cref{fact:lb:local})]
  Consider any point $x$ where $p_y(x) \not\in\{0,1/2,1\}$ and $\mu_x > 0$;
  such a point must exist by the assumptions.
  Define $\hat y := \sgn(p_y(x) - 1/2)$ and $c_1 := \min\{p_y(x)/2, |p_y(x)-1/2|/2, (1-p_y(x))/2\}$,
  where $c_1\in (0,1/4)$ by construction.
  Since $p_y$ and $\mu_x$ are continuous, then there must exist some (potentially tiny)
  closed interval $I$ containing $x$ so that $\sgn(p_y(x) - 1/2) = \hat y$, and for any $x' \in I$,
  both $\mu_x(x') > 0$ and $p_{x'} \in (c_1,1/2-c_1) \cup (1/2+c_1,1-c_1)$.

  To simplify the rest of the proof, suppose $\hat y = -1$; the other case is symmetric, but
  as in the preceding paragraph, handling both cases simultaneously adds significant notational
  overhead.

  Let $S$ denote all adjacent pairs of points in $I$ where $(x_i,x_k)\in S$ means
  $x_i < x_k = \min\{x_s : x_s > x_i\}$ and $y_i = y_k = -\hy$.
  With this choice, all that remains to be shown is the third item, the lower bound on the risk.To show this, it suffices to show that a constant fraction of $\mu_x$'s probability
  mass is contained between these pairs, meaning
  \[
    \mu_x\del{\cup_{(i,k)\in S} \mu([x_i,x_k]) } \geq c_2 > 0,
  \]
  where crucially $c_2$ is independent of $n$.  To see that this suffices to establish the
  third property, suppose that $f:\R\to\R$ satisfies the required condition,
  meaning $f(x) \hy < 0$ for $x \in \cup_{(i,k)\in S} \mu([x_i,x_k])$;
  then by a standard calculation against the Bayes risk
  \citep{DGL},
  \begin{align*}
    \cRz(f) - \bcRz
    &= \int |1-2p_y(x)| \1\sbr{\sgn(f) \neq \sgn(p_y(x)-1/2)} \dif\mu_x(x)
    \\
    &\geq
    \int |1-2p_y(x)| \1\sbr{x \in \cup_{(i,k)\in S}[x_i,x_k]} \dif\mu_x(x)
    \\
    &\geq
    2c_1 
    \mu_x\del{\cup_{(i,k)\in S}[x_i,x_k]}
    \\
    &=
    2c_1 c_2,
  \end{align*}
  and the final statement and all properties are satisfied
  if we pick $c\in \big(0, \min\{c_1,c_2,2c_1c_2\}\big]$.

  As such, it remains to provide a lower bound on $c_2$ which is independent of $n$,
  which will follow a series of simplifications as follows.

  The first step is to lower bound the cardinality of $S$.
  The expected number of points in $I$ is $n \mu_x(I)$,
  and if $n \geq 32 \ln (1/\delta)/\mu_x(I)$,
  then by a multiplicative Chernoff bound \citep[Theorem 12.6]{blum_hopcroft_kannan},
  with probability at least $1-3\delta$,
  \[
    \envert{\cbr{ i \in [m] : x_i \in I}} \geq \frac {n \mu_x(I)}{2}.
  \]
  and thus the number of consecutive pairs in $I$ is at least $n\mu_x(I)/2 - 1 \geq n\mu_x(I)/4$.

  Since these pairs may share endpoints, consider the set of at least $n\mu_x(I)/8$ pairs that
  share no points.
  Since the draw of $y$ is independent of $x$, for each of these consecutive pairs,
  the probability that both labels are wrong is at least $(1-c_1)^2$ (and is independent of
  other pairs),
  meaning the expected number of such points is at least $n\mu_x(I)(1-c_1)^2/8$;
  as such, if $n\geq 256 \ln(1/\delta)/(\mu_x(I)(1-c_1)^2)$,
  by another multiplicative Chernoff bound,
  with probability at least $1-3\delta$,
  the number of pairs with agreeing but incorrect labels is at least $n\mu_x(I)(1-c_1)^2/16$.
  Let $S_0$ denote this set of pairs;
  by construction, its cardinality also lower bounds that of $S$.

  It remains to show that the union of the convex hulls of these pairs of points has a significant
  fraction of total probability mass.

  For any sample $(x_1,\ldots,x_n)$, let $(x_{(1)}, \ldots, x_{(n)})$
  be the sample in sorted order,
  meaning $x_{(1)} < x_{(2)} < \cdots < x_{(n)}$ (strict inequalities almost surely
  since $\mu_x$ is continuous).
  Define a distance $\Delta$ and function $F$ of the sample as
  \begin{align*}
    \Delta 
    &:= \frac {\mu_x(I)(1-c_1)^2}{256 n},
    \\
    F(x_1,\ldots,x_n)
    &:= \envert{ \cbr{ i \in [m-1] : \mu([x_{(i)}, x_{(i+1)}]) < \Delta} };
  \end{align*}
  that is to say, $F$ measures the number of consecutive pairs whose convex hulls have probability
  mass strictly less than $\Delta$.  As will be established momentarily,
  $F$ satisfies the bounded differences property
  with a constant $2$, meaning for any two samples $(x_1,\ldots,x_n)$ and $(x_1',\ldots,x_n')$
  that differ only in a single example $x_i\neq x_i'$,
  \[
    \envert{F(x_1,\ldots,x_n) - F(x_1',\ldots,x_n')} \leq 2.
  \]
  To argue this, suppose the disagreeing example $x_i$ occupies position $j$ after sorting,
  meaning $x_i = x_{(j)}$, and consider adjusting one sample to the other by renaming
  this point to $x_i'$, removing it from its current location, and moving it to its final
  location.
  \begin{itemize}
    \item First we remove $x_i'$ from the interval $(x_{(j-1)}, x_{(j+1)})$. If neither
      $(x_{(j-1)}, x_i')$ nor $(x_i', x_{(j+1)})$ counts towards $F$, then neither will
      $(x_{(j-1)}, x_{(j+1)})$, so $F$ remains unchanged. If exactly one of
      $(x_{(j-1)}, x_i')$ and $(x_i', x_{(j+1)})$ counts towards $F$, then
      $(x_{(j-1)}, x_{(j+1)})$ does not count towards $F$, so $F$ decreases by 1.
      If both $(x_{(j-1)}, x_i')$ and $(x_i', x_{(j+1)})$ counts towards $F$, then
      $(x_{(j-1)}, x_{(j+1)})$ may or may not count towards $F$, so $F$ decreases by 1 or 2.
      So this operation changes $F$ by any of $\{-2,-1,0\}$.
    \item Then we insert $x_i'$ into a new interval. The range of possible changes to $F$
      is the exact opposite as removing it from an interval, so this leads to a change by
      any of $\{+2,+1,0\}$;
      together the difference in $F$ is within $[-2,+2]$.
  \end{itemize}
  As such, by McDiarmid's inequality, with probability at least $1-\delta$,
  \[
    F(x_1,\ldots,x_n) \leq \bbE F(x_1,\ldots,x_n) + \sqrt{2n\ln(1/\delta)}.
  \]
  Upper bounding $\bbE F(x_1,\ldots,x_n)$ can now be performed in a coarse way as follows.
  Partition the support of $\mu_x$, $[0,1]$, into two systems of intervals,
  $\cI$ and $\cJ$, as follows.
  $\cI$ simply contains the $\left\lceil 1/(2\Delta) \right\rceil$ consecutive intervals
  of mass $2\Delta$ (except for the last, which may have less mass);
  meanwhile, $\cJ$ contains a first initial interval of mass $\Delta$, and then intervals
  of mass $2\Delta$ until a final interval of mass at most $2\Delta$.  Due to this staggered behavior,
  if some pair $(x_{(i)}, x_{(i+1)})$ has $\mu_x((x_{(i)},x_{(i+1)}))<\Delta$,
  then the pair must appear in a single interval in either $\cI$ or $\cJ$ (the staggering
  avoids boundary issues).
  Now consider the creation of the full data sample by sampling the data points one by one,
  and the resulting effect on these bins;
  the goal is to upper bound
  the number of times a point is inserted into an occupied bin,
  as this upper bounds the number of consecutive pairs of points within some bin,
  which in turn upper bounds $F$.
  After inserting the $i$th point (twice),
  let $A_i$ denote the number of occupied bins, and $B_i$ the number of times a point was
  inserted into an occupied bin; necessarily,
  $A_i = 2i - B_i$
  (the factor two coming from simultaneous throws to $\cI$ and $\cJ$).
  The probability of landing in an occupied bin (and thus increasing $B_i$) is at most
  $A_i(2\Delta) = (2i - B_i)(2\Delta)$.
  By linearity of expectation,
  \begin{align*}
    \bbE F \leq \bbE B_n
    &\leq \sum_{i=1}^{n-1} 2 \bbE\1\sbr{ x_{i+1} \textup{ lands in an occupied bin }}
    \\
    &\leq 4\Delta \sum_{i=1}^{n-1} (2i - \bbE B_i)
    \leq 4\Delta (n-1) n
    \leq \frac {n\mu_x(I)(1-c_1)^2}{64}.
  \end{align*}
  Together, supposing that $n\geq 8192 \ln(1/\delta)/(\mu_x(I)^2(1-c_1)^4)$, it follows that with probability at least $1-\delta$,
  \[
    F(x_1,\ldots,x_n) \leq \frac {n\mu_x(I)(1-c_1)^2}{64} + \sqrt{2n\ln(1/\delta)}
    \leq
    \frac {n\mu_x(I)(1-c_1)^2}{32}.
  \]

  To finish the proof, since the preceding quantity is less than half the cardinality
  of $S_0$, we are guaranteed that at least half the pairs in $S_0$
  have $\mu_x((x_i,x_k)) \geq \Delta$; letting $S_1$ denote this half, then
  \begin{align*}
    \mu_x(\cup_{i,k\in S} [x_i,x_k])
    &\geq
    \sum_{(i,k)\in S_1} \mu_x([x_i,x_k])
    \\
    &\geq
    |S_1| \Delta
    \geq
    \frac {n\mu_x(I)(1-c_1)^2}{32} \cdot \Delta
    \geq
    \frac {\mu_x(I)^2 (1-c_1)^4}{8192} =: c_3.
  \end{align*}
  It only remains to determine the final value of the constant $c$.
  By the preceding calculation
  and the comments near the start of the proof establishing that $c\in(0, \min\{c_1,c_2,2c_1c_2\}]$
  suffices, the quantity $c_3$ here is indeed a lower bound on $c_2$, and thus, defining
  $c_4 := \min\{c_1,c_3,2c_1c_3\}$, it suffices to require $c\in (0,c_4]$.
  On the other hand, inspecting all the necessary lower bounds on $n$ throughout the proof,
  the maximum across all of them is that we need $n\geq \ln(1/\delta)/c_3$.
  As such, all properties are satisfied if we take $c:= c_4>0$ as our final constant,
  which depends only on $\mu_x$ and $p_y$ (but not on $n$) as promised.
\end{proof}

\end{document}